\documentclass[11pt]{article}
\usepackage[utf8]{inputenc}
\usepackage[T1]{fontenc}
\usepackage[a4paper,margin=1in]{geometry}
\usepackage{microtype}
\usepackage{booktabs}
\usepackage{nicefrac}
\usepackage{tocloft}
\usepackage[toc,page,header]{appendix}
\usepackage{minitoc}

\usepackage[authoryear,round]{natbib}
\usepackage[small,bf]{caption}
\usepackage{amsmath}
\usepackage{amssymb}
\usepackage{amsthm}
\usepackage{graphicx}
\usepackage{placeins}
\usepackage{float}

\usepackage[most]{tcolorbox}
\usepackage{xcolor}
\allowdisplaybreaks

\definecolor{resultblue}{RGB}{45,86,135}
\definecolor{resultamber}{RGB}{166,111,0}
\definecolor{resultgreen}{RGB}{41,116,76}
\newtcolorbox{questionbox}{
	enhanced,colback=black!2,colframe=resultblue!65!black,
	boxrule=0.65pt,arc=1.2pt,left=5pt,right=5pt,top=4pt,bottom=4pt,
	before skip=7pt,after skip=8pt}

\newtheorem{theorem}{Theorem}[section]
\newtheorem{proposition}[theorem]{Proposition}
\newtheorem{lemma}[theorem]{Lemma}
\newtheorem{corollary}[theorem]{Corollary}

\theoremstyle{definition}

\theoremstyle{remark}

\usepackage{amsmath,amsfonts,bm}

\def\eqref#1{equation~\ref{#1}}

\def\1{\bm{1}}

\DeclareMathAlphabet{\mathsfit}{\encodingdefault}{\sfdefault}{m}{sl}
\SetMathAlphabet{\mathsfit}{bold}{\encodingdefault}{\sfdefault}{bx}{n}

\usepackage[linktocpage=true,pagebackref=true]{hyperref}
\renewcommand*{\backref}[1]{}
\renewcommand*{\backrefalt}[4]{(\small%
  \ifcase #1 not cited%
  \or cited on page~#2%
  \else cited on pages #2%
  \fi)}
\hypersetup{
	colorlinks=true,
	citecolor=blue,
	linkcolor=blue,
	urlcolor=blue
}
\usepackage{url}

\usepackage{algpseudocode}

\title{\bfseries When and How Does Muon Precondition? The Role of Structured Gradient Noise}

\author{%
    Xiaoran Chen$^{1,*}$\hspace{1em}
  Binghui Li$^{2,*,\dagger}$\hspace{1em}
  Liwei Wang$^{2,3,5,\dagger}$\hspace{1em}
  Pinyan Lu$^{4,\dagger}$
  \\[1em]
  \begin{tabular}{c}
    \small $^1$School of EECS, Peking University\\
    \small $^2$Center for Machine Learning Research, Peking University\\
    \small $^3$Center for Data Science, Peking University\\
    \small $^4$Key Laboratory of Interdisciplinary Research of Computation and Economics,\\
    \small Shanghai University of Finance and Economics\\
    \small $^5$State Key Laboratory of General Artificial Intelligence, Peking University
  \end{tabular}
  \\[0.8em]
  {\small\texttt{chenxiaoran@stu.pku.edu.cn}} \quad
  {\small\texttt{\{libinghui,wanglw\}@pku.edu.cn}}\\
  {\small\texttt{lu.pinyan@mail.shufe.edu.cn}}
}

\date{}

\begin{document}
	\raggedbottom 
	
	\maketitle

    \begingroup
  \renewcommand{\thefootnote}{\fnsymbol{footnote}}
  \footnotetext[1]{Equal contribution.}
  \footnotetext[2]{Corresponding authors.}
\endgroup
	
	\begin{abstract}
		Muon has become a popular optimizer for large language model training, yet its theoretical foundations remain poorly understood. In this work, we study how gradient noise geometry shapes Muon’s preconditioning benefits. We first show, theoretically and empirically, that under the classical isotropic noise assumption, Muon offers no acceleration beyond a rescaling of the learning rate. This motivates us to investigate structured noise arising from mini-batch sampling in associative memory with heterogeneous task frequencies. We establish an exact balancing threshold for two tasks and extend the analysis to multiple tasks, revealing how task coverage and gradient reliability jointly shape learning. Our analysis shows that Muon can soften the dependence on task frequency from linear to square-root, helping balance the learning of frequent and rare tasks. We further identify rotation-induced negative transfer in SignSGD that Muon avoids, and explain how momentum reduces sampling noise while tracking changing gradients. Controlled numerical experiments validate these mechanisms, and a preliminary LLM pretraining study illustrates the practical promise of fast–slow momentum.
	\end{abstract}
	
\doparttoc
\faketableofcontents
\part{}
\vspace*{-2.2em}

\section{Introduction}
	\label{sec:introduction}
	
	Muon has emerged as a prominent alternative to Adam for training the matrix parameters of large language models (LLMs) \citep{jordan2024muon,liu2025scalable}. Its defining operation replaces a gradient or momentum matrix by an approximate polar factor, flattening its nonzero singular values before the parameter update. This matrix-level normalization offers a different way to allocate updates across directions from Adam's coordinatewise rescaling. Recent frontier-model reports, including DeepSeek--V4.1--Flash~\citep{deepseek2026v41flash} and Kimi--K3~\citep{kimi2026k3}, adopt Muon variants for matrix optimization, with head-wise orthogonalization further adapting the update to attention structure. These developments make understanding when and why Muon improves learning an increasingly practical theoretical question.
	
	Existing theory has established convergence guarantees for practical orthogonalization procedures \citep{kim2026newtonschulz} and identified preconditioning benefits in structured optimization problems \citep{ma2026preconditioning}. Full-batch associative-memory analyses further show how spectral normalization can balance learning across tasks with heterogeneous frequencies \citep{wang2025tailend,li2026associative}. In LLM pretraining, however, Muon acts on stochastic mini-batch gradients, and nonlinear normalization need not preserve the benefits predicted from population gradients. In particular, writing $\operatorname{msgn}$ for the partial polar map, generally $\mathbb E[\operatorname{msgn}(G_t)\mid W_t]\neq\operatorname{msgn}(\mathbb E[G_t\mid W_t])$. Recent work has begun to characterize stochastic spectral dynamics and geometry-dependent noise scales \citep{paquette2026phases,naganuma2026adaptivebatch}; the interaction between task occurrence, gradient reliability, and temporal averaging motivates the question we study here:
	
    \begin{center}
        \textbf{Q1:} \textit{When does Muon retain a directional learning advantage under stochastic gradient noise, and how do noise structure, batch size, and momentum determine this advantage?}
    \end{center}

	\textbf{Isotropic noise limits preconditioning.}
	Standard stochastic optimization analyses often control gradient noise through a scalar variance bound \citep{ghadimi2013stochastic}, leaving its directional structure unspecified. We use isotropic Gaussian noise as a diagnostic baseline. In the fixed-dimensional low-SNR limit, we prove that Muon's conditional mean update is, to first order, a scalar multiple of the population gradient, as confirmed numerically in Figure~\ref{fig:isotropic-main}. SGD with a matched learning rate therefore reproduces this first-order mean drift: applying the partial polar map to each stochastic mini-batch gradient does not, by itself, produce directional preconditioning in the first-order mean response. Explaining such gains in this regime requires examining noise structure.
	
	\textbf{Sampling controls balancing.}
	We study associative memory with heterogeneous task frequencies $p_j$ \citep{wang2025tailend,li2026associative}. Mini-batch sampling can leave task directions entirely unobserved; without momentum, Muon's nullspace preservation then prevents updates along those directions. For two tasks, we prove an exact balancing threshold $B_{\rm bal}$: below it, margins grow at unequal linear rates; above it, their gap has uniformly bounded exponential moments. For rare tasks, $B_{\rm bal}\sim(\log 2)/p_{\min}$. In a higher-rank hierarchy, local perturbation estimates identify the expected task count $Bp_j$ as a controlling scale. Experiments exhibit corresponding frequency-dependent crossovers, with frequent groups approaching balanced growth while rare groups remain occurrence-limited.
	
	\textbf{Label noise limits reliability.}
	Label fluctuations persist at the population optimum, so frequent observation alone does not ensure a reliable gradient. Exact gradient statistics and a scalar projected counting model distinguish regimes governed by task counts $Bp_j$ and corruption counts $Bp_j\rho_\alpha$. Near the noisy optimum, an isotropic covariance closure with a locally homogeneous frequency approximation predicts a Muon contraction coefficient proportional to $\sqrt{Bp}$, compared with SGD's linear dependence on the local frequency scale $p$.
	
	\textbf{Orthogonal equivariance differs from coordinatewise sign updates.}
	We use SignSGD as a proxy for Adam's coordinatewise adaptivity \citep{balles2018dissecting,kunstner2023noise}. Muon is orthogonally equivariant whereas SignSGD is basis dependent. In a two-task rotated sparse-support construction, a batch containing only the frequent task leaves the rare-task Muon margin unchanged but can decrease the corresponding SignSGD margin under certain conditions.
	
	\textbf{Momentum trades averaging for tracking.}
	Momentum reduces sampling fluctuations, with frozen-state variance-equivalent batch size $B_{\rm eff}=B(1+\beta)/(1-\beta)$, but longer memory introduces tracking lag. Under local tracking assumptions, the sufficient window $\eta\ll1-\beta\ll Bp_j$ motivates longer memory for rare, slowly varying directions and shorter memory where rapid tracking is needed. This provides a sampling-based perspective on curvature-conditioned multiscale momentum \citep{zhu2026multiscale}: the local Hessian block $p_j\Gamma_j$ links rare tasks to low-curvature directions when label statistics are comparable. It also connects to occurrence-dependent forgetting in Activation-Keyed Momentum \citep{hong2026activation}, supporting selective long-memory averaging as a means to improve estimates in slowly varying directions.
	
	Our contributions are threefold:
	\begin{itemize}
		\item \textbf{A diagnostic for stochastic directional gains.} We prove a low-SNR expansion under isotropic Gaussian noise, showing that Muon's first-order conditional mean response can be matched by learning-rate-rescaled SGD despite samplewise spectral normalization.
		\item \textbf{A theory of structured sampling.} We establish an exact two-task balancing threshold and derive local higher-rank sampling estimates. With stochastic labels, we separate exact gradient statistics, scalar projected counting laws, and a square-root frequency prediction under isotropic covariance and locally homogeneous frequency approximations. A momentum-free Muon--SignSGD comparison isolates rotation-induced negative transfer under sparse sampling.
		\item \textbf{A statistical basis for heterogeneous momentum.} We characterize frozen-state variance reduction and local tracking error under temporal averaging, explaining when longer memory can help rare directions and providing a sampling-based interpretation of fast--slow momentum designs. Controlled experiments test these mechanisms, and a preliminary pretraining comparison illustrates their practical relevance.
	\end{itemize}
	
	\section{Related Work}
	\label{sec:related-work}
	

    \paragraph{Muon and adaptive optimization.}
    Muon applies approximate polar normalization to matrix-valued momentum \citep{jordan2024muon,liu2025scalable}. Existing work establishes convergence with Newton--Schulz iterations \citep{kim2026newtonschulz}, preconditioning benefits in structured matrix problems \citep{ma2026preconditioning}, and generalization gains from balanced learning on imbalanced data \citep{vasudeva2025spectraldesign}. \citet{magakyan2026spectral} further identify scaling laws for momentum spectra and their implications for orthogonalization accuracy. For coordinatewise adaptivity, \citet{li2026signsgdscaling} analyze SignSGD in feature-space linear regression, characterizing its loss dynamics and batch-size benefits relative to SGD. We complement these perspectives by studying how stochastic gradient structure governs Muon's conditional mean response and distinguishes polar normalization from coordinatewise sign updates under sparse task sampling.
	
	\paragraph{Stochastic spectral dynamics.}
	\citet{paquette2026phases} characterize batch-dependent preconditioning regimes for Muon and SignSGD in high-dimensional matrix least squares. \citet{naganuma2026adaptivebatch} derive non-Euclidean gradient-noise scales for adaptive batch selection, while \citet{eschenhagen2026clarifying} examine how stochasticity and temporal adaptation distinguish Shampoo from spectral descent. Our analysis focuses on random task support and persistent label fluctuations: we identify an exact finite-batch balancing threshold and separate task occurrence from gradient reliability. The isotropic Gaussian result provides a complementary fixed-dimensional diagnostic for the absence of first-order directional preconditioning.
	
	\paragraph{Associative memory learning.}
	\citet{wang2025tailend} connect Muon's advantage over Adam to balanced learning of rare associations. \citet{li2026associative} analyze hierarchical task frequencies, label noise, and scaling laws through implicit matrix preconditioning. Beyond orthogonal embeddings, \citet{kim2026capacity} derive recovery and capacity scaling with Gaussian embeddings, including critical batch-size behavior. Building on the orthogonal associative-memory framework, we isolate finite-batch sampling effects through exact two-task dynamics and a higher-rank frequency-dependent regime map. Our SignSGD comparison further identifies rotation-induced negative transfer when tasks are absent from a batch.
	
	\paragraph{Heterogeneous momentum.}
	Muon momentum has been studied as spectral filtering \citep{li2026denoise}, while activation-keyed and curvature-conditioned methods adapt memory across directions \citep{hong2026activation,zhu2026multiscale}. We complement these designs with a sampling-based account of variance reduction and tracking lag.
	
	\section{Preliminaries}
	\label{sec:preliminaries}
	
	\subsection{Notations.}
	
	For $G\in\mathbb R^{d\times d}$ with SVD $G=U\Sigma V^\top$, define the partial polar factor $\operatorname{msgn}(G):=U\operatorname{diag}(\mathbf 1\{\sigma_i>0\})V^\top$, which replaces each nonzero singular value by one and reduces to $UV^\top$ when $G$ is full rank. The map is positively scale invariant, $\operatorname{msgn}(aG)=\operatorname{msgn}(G)$ for $a>0$, and orthogonally equivariant, $\operatorname{msgn}(QGR^\top)=Q\operatorname{msgn}(G)R^\top$ for orthogonal $Q,R$. It also preserves the right nullspace: $\ker(\operatorname{msgn}(G))=\ker(G)$.
	
	Given the stochastic gradient $G_t$, we study the idealized Muon dynamics $M_t=\beta M_{t-1}+(1-\beta)G_t$ and $W_{t+1}=W_t-\eta\operatorname{msgn}(M_t)$, where $\eta>0$ is the learning rate and $\beta\in[0,1)$ is the momentum coefficient; $\beta=0$ recovers Muon without momentum. Our theoretical analysis uses exact partial polar normalization in task coordinates; rectangular representations are introduced where needed. Unless stated otherwise, mini-batches are independent across iterations and weight decay is omitted. The pretraining experiment uses the practical construction cited in Section~\ref{sec:fast-slow-pretraining}.
	
	\subsection{Warmup: Isotropic Noise Limits Preconditioning}
	\label{sec:isotropic}

    \begin{figure}[t]
		\centering
		\includegraphics[width=0.98\linewidth]{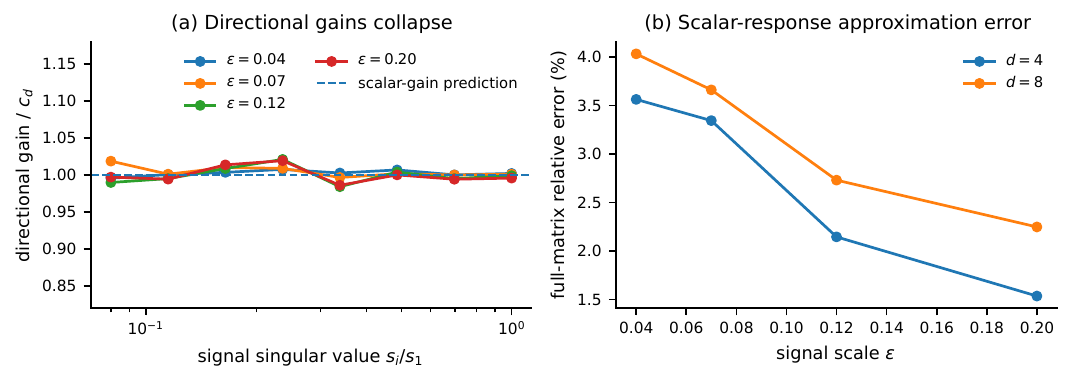}
		\caption{\textbf{Isotropic noise removes first-order directional preconditioning at low SNR.} Left: for an anisotropic $d=8$ signal, the response gain along each singular direction, normalized by $c_d$, collapses near one across a wide range of signal singular values. Right: the full conditional-mean matrix remains within a few percent of the scalar-response prediction over the tested low-SNR amplitudes.}
		\label{fig:isotropic-main}
	\end{figure}
	
	Consider momentum-free Muon. Condition on the current iterate $W$ and write $G=\bar G+\sigma\Xi$, where $\bar G=\nabla\mathcal L(W)$ and $\Xi\in\mathbb R^{d\times d}$ has i.i.d.\ standard Gaussian entries. Although each polar update has a flat nonzero spectrum, its mean need not retain a preconditioning benefit.
	
		\begin{proposition}[Isotropic Gaussian mean response]
			\label{prop:gaussian-isotropization}
			Fix $d$ and let $c_d=d^{-2}\mathbb E\|\Xi\|_*>0$. As $\|\bar G\|_F/\sigma\to0$,
			\[
			\mathbb E[\operatorname{msgn}(\bar G+\sigma\Xi)]
			=\frac{c_d}{\sigma}\bar G
			+O_d\!\left(\frac{\|\bar G\|_F^3}{\sigma^3}\right),
			\]
			with the remainder measured in Frobenius norm.
		\end{proposition}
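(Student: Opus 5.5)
By scale invariance, $\operatorname{msgn}(\bar G+\sigma\Xi)=\operatorname{msgn}(\Xi+A)$ with $A:=\bar G/\sigma$, so it suffices to study $F(A):=\mathbb E[\operatorname{msgn}(\Xi+A)]$ as $A\to0$ in fixed dimension $d$. I will show three things. First, $F$ is odd. Second, $F$ is $C^3$ (indeed smooth) near $0$. Third, $DF(0)[A]=c_dA$. Taylor's theorem then gives $F(A)=c_dA+O_d(\|A\|_F^3)$, because the quadratic term vanishes by oddness. Oddness is immediate: $\operatorname{msgn}(-G)=-\operatorname{msgn}(G)$ and $\Xi\overset{d}{=}-\Xi$, so $F(-A)=\mathbb E[\operatorname{msgn}(-\Xi-A)]$ with $-\Xi$ replaced by $\Xi$, which equals $-F(A)$.

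For smoothness I will not differentiate $\operatorname{msgn}$ itself, which is singular on rank-deficient matrices. Instead I shift the Gaussian density:
\[
F(A)=\int \operatorname{msgn}(X)\,\phi(X-A)\,dX,\qquad \phi(X)=(2\pi)^{-d^2/2}e^{-\|X\|_F^2/2}.
\]
Since $\|\operatorname{msgn}(X)\|_F\le\sqrt d$ is bounded and $\phi(X-A)$ is smooth in $A$, with derivatives dominated by polynomials times Gaussians uniformly for $\|A\|_F\le1$, dominated convergence lets us differentiate under the integral to any order. This makes $F$ real-analytic in $A$, and the third-order Taylor remainder is bounded by a constant depending only on $d$ (for $\|A\|_F\le 1$, say). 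The first derivative is
\[
DF(0)[A]=\mathbb E[\operatorname{msgn}(\Xi)\langle \Xi,A\rangle],
\]
which is Stein's identity.

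To identify this linear map, let $T(A)=\mathbb E[\operatorname{msgn}(\Xi)\langle\Xi,A\rangle]$. Bi-orthogonal invariance of $\Xi$, together with equivariance of $\operatorname{msgn}$, gives $T(QAR^\top)=QT(A)R^\top$ for all orthogonal $Q,R$. A linear map on $\mathbb R^{d\times d}$ commuting with the $O(d)\times O(d)$ action is a scalar: $\mathbb R^{d\times d}\cong\mathbb R^d\otimes\mathbb R^d$ is an irreducible representation of $O(d)\times O(d)$, being a tensor product of absolutely irreducible representations, so Schur's lemma applies. If I wanted to avoid representation theory, I could check this by hand using sign-diagonal and permutation matrices $Q,R$ on the basis $E_{ij}$. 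Writing $T=\lambda I$ and taking the trace against $A$ summed over the basis gives
\[
\lambda d^2=\sum_{ij}\mathbb E[\operatorname{msgn}(\Xi)_{ij}\Xi_{ij}]=\mathbb E\langle\operatorname{msgn}(\Xi),\Xi\rangle=\mathbb E\|\Xi\|_*,
\]
so $\lambda=c_d$. Moreover $c_d>0$, since $\Xi\neq0$ almost surely.

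Undoing the scaling gives $\mathbb E[\operatorname{msgn}(\bar G+\sigma\Xi)]=c_d\bar G/\sigma+O_d(\|\bar G\|_F^3/\sigma^3)$. The only delicate point is justifying differentiation despite the discontinuity of $\operatorname{msgn}$ on the rank-deficient set. The density-shift representation handles this: $\operatorname{msgn}$ itself is never differentiated, and that set has measure zero and plays no role. I expect the remaining work to be just the routine domination bounds for the derivatives of $\phi(X-A)$.
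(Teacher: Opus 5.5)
Your proposal is correct and follows essentially the same route as the paper. Both arguments reduce by positive scale invariance, obtain smoothness of $F(H)=\mathbb{E}[\operatorname{msgn}(\Xi+H)]$ by shifting the Gaussian density rather than differentiating $\operatorname{msgn}$, use bi-orthogonal equivariance to force $DF(0)$ to be a scalar, identify $c_d$ through $\mathbb{E}\langle\operatorname{msgn}(\Xi),\Xi\rangle_F=\mathbb{E}\|\Xi\|_*$, and remove the quadratic term by oddness before applying Taylor's theorem. The only difference is cosmetic: you invoke Schur's lemma for the absolutely irreducible $O(d)\times O(d)$-representation, while the paper proves the scalar-commutant fact directly with sign-diagonal and permutation matrices, which is the by-hand alternative you already mention.
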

	At the same parameter state, Muon with learning rate $\eta_{\rm M}$ has first-order conditional mean update $-\eta_{\rm M}(c_d/\sigma)\bar G$, matching SGD with $\eta_{\rm S}=\eta_{\rm M}c_d/\sigma$. Thus its effective first-order preconditioner is scalar: all singular directions receive the same multiplicative gain, leaving their relative scaling unchanged. This equivalence concerns conditional means, not update covariances or stochastic trajectories.
	Figure~\ref{fig:isotropic-main} confirms the common directional gain for an anisotropic signal. Appendix~\ref{app:isotropic-proof} gives the proof via rotational symmetry. In the fixed-dimensional low-SNR regime, directional preconditioning in the first-order mean response therefore requires structure absent from this isotropic benchmark, motivating our analysis of structured sampling.
	
	\section{Structured Sampling in Associative Memory}
	\label{sec:associative-setup}
	
	\subsection{Associative memory and stochastic gradients}
	\label{sec:assoc-model}
	
	We study a linear associative memory \citep{wang2025tailend,li2026associative} that maps query embeddings to answer logits through a trainable matrix $W$. There are $K\ge2$ associations $(E_j,\widetilde E_j)$, sampled with probabilities $p_j>0$, where $\sum_jp_j=1$. Write $P=\operatorname{diag}(p_1,\ldots,p_K)$. Let $E=[E_1,\ldots,E_K]$ and $\widetilde E=[\widetilde E_1,\ldots,\widetilde E_K]$, with $E^\top E=\widetilde E^\top\widetilde E=I_K$. For query $E_j$, answer $i$ has logit $\widetilde E_i^\top WE_j=\overline W_{ij}$, where $\overline W=\widetilde E^\top WE$. The prediction probabilities are $S_{ij}=\exp(\overline W_{ij})/\sum_{k=1}^K\exp(\overline W_{kj})$, and the deterministic-label population loss is $\mathcal L(W)=-\sum_{j=1}^Kp_j\log S_{jj}$.
	
	For a batch of size $B$, let $q_j=N_j/B$ be the empirical task frequency and $Q=\operatorname{diag}(q)$, where $Bq\sim\operatorname{Multinomial}(B,p)$. The negative stochastic gradient is $\widetilde E(I-S)QE^\top$ (Appendix~\ref{app:associative-gradient}). Orthogonal equivariance of the partial polar map gives the exact task-coordinate Muon update without momentum:
	\begin{equation}
		\overline W_{t+1}
		=\overline W_t+\eta\,\operatorname{msgn}\bigl((I-S_t)Q_t\bigr).
		\label{eq:assoc-muon-update}
	\end{equation}
	If task $j$ is absent, $q_j=0$, its gradient column in task coordinates is zero. Nullspace preservation then makes the corresponding polar-update column zero. This absence guarantee applies to the momentum-free update; historical momentum can remain nonzero when a task is absent from the current batch.
	
	\subsection{Two tasks: an exact finite-batch balancing transition}
	\label{sec:two-task-main}
	
	With $K=2$, the stochastic gradient is rank one, which makes the finite-batch sampling effect exactly solvable. Let $p_1\ge p_2=:p_{\min}>0$ and let $\Delta_1,\Delta_2$ be the two target-vs.-nontarget margins.
	
		\begin{theorem}[Two-task finite-batch balancing transition]
			\label{thm:two-task-coverage}
			Assume $\Delta_1(0)=\Delta_2(0)=0$ and fix $\eta>0$. Let $B_{\rm bal}=\log2/[-\log(1-p_{\min})]$.
			
			\textbf{(i) Unequal growth.} If $B<B_{\rm bal}$, then $(\Delta_1(t),\Delta_2(t))/(\eta t)\xrightarrow{\mathrm{a.s.}}\sqrt2\,(p_1^B,1-p_1^B)$.
			
			\textbf{(ii) Controlled gap.} If $B>B_{\rm bal}$, then $\sup_t\mathbb E e^{a|\Delta_1(t)-\Delta_2(t)|}<\infty$ for some $a>0$, and $(\Delta_1(t)-\Delta_2(t))/t\xrightarrow{\mathrm{a.s.}}0$.
		\end{theorem}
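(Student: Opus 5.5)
The proof rests on an exact reduction. With $K=2$ the two-task Muon update is a planar process with bounded increments, and its gap $D=\Delta_1-\Delta_2$ behaves like a random walk whose drift changes sign exactly at $B_{\rm bal}$.

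\textbf{Step 1: exact update.} For $K=2$, column $j$ of $I-S$ equals $(1-S_{jj})(e_j-e_{j'})$, where $j'$ is the other task. Hence
\[
(I-S)Q=uv^\top,\qquad u=(1,-1)^\top/\sqrt2,\qquad v=\sqrt2\,(a_1,-a_2)^\top,
\]
with $a_j=q_j\,(1-S_{jj})=q_j/(1+e^{\Delta_j})\ge0$. If $v\neq0$, then $\operatorname{msgn}(uv^\top)=uv^\top/\|v\|$. Substituting into \eqref{eq:assoc-muon-update} and reading off the margins gives
\[
(\Delta_1,\Delta_2)(t+1)=(\Delta_1,\Delta_2)(t)+\sqrt2\,\eta\,\frac{(a_1,a_2)}{\sqrt{a_1^2+a_2^2}} .
\]
Three consequences follow immediately. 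Both margins are nondecreasing. Their sum increases by at least $\sqrt2\eta$ per step, so $\Delta_1,\Delta_2\to\infty$. The increment of $D$ lies in $[-\sqrt2\eta,\sqrt2\eta]$.

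The increment of $D$ depends on which tasks the batch contains:
\begin{itemize}
\item if only task 1 appears (probability $p_1^B$), it is $+\sqrt2\eta$;
\item if only task 2 appears (probability $p_2^B$), it is $-\sqrt2\eta$;
\item if both appear, it equals $\sqrt2\eta\,(r-1)/\sqrt{r^2+1}$, where
\[
r=\frac{q_1}{q_2}\cdot\frac{1+e^{\Delta_2}}{1+e^{\Delta_1}} .
\]
\end{itemize}
Since $\Delta_j\ge0$ and $q_1/q_2\in[1/B,B]$, the ratio satisfies the uniform bounds
\[
\tfrac{1}{2B}\,e^{-D}\le r\le 2B\,e^{-D}.
\]
So in the mixed case the increment is $-\sqrt2\eta+O(Be^{-D})$ when $D\gg0$, and $+\sqrt2\eta-O(Be^{D})$ when $D\ll0$.

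\textbf{Step 2: asymptotic drift of $D$.} Let $\mu(x)$ denote the conditional mean increment given the state. For $D\to+\infty$,
\[
\mu\to\sqrt2\eta\bigl(p_1^B-p_2^B-(1-p_1^B-p_2^B)\bigr)=\sqrt2\eta\,(2p_1^B-1).
\]
For $D\to-\infty$,
\[
\mu\to\sqrt2\eta\,(1-2p_2^B)>0.
\]
The latter is positive since $p_2\le\tfrac12$ and $B\ge2$ (for $B=1$ one checks directly). Moreover $2p_1^B<1$ if and only if $B>\log2/(-\log p_1)=B_{\rm bal}$.

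\textbf{Step 3: case (ii), $B>B_{\rm bal}$.} Here there are $\delta>0$ and $L<\infty$ with $\mu\le-\delta$ on $\{D>L\}$ and $\mu\ge\delta$ on $\{D<-L\}$. Because the increments are bounded by $\sqrt2\eta$, a Taylor expansion shows that for small $a>0$ the function $V=e^{a|D|}$ satisfies
\[
\mathbb E[V(t+1)\mid\mathcal F_t]\le(1-\epsilon)V(t)+C\,\mathbf 1\{|D(t)|\le L+\sqrt2\eta\}.
\]
Iterating this inequality (Hajek's drift lemma) yields $\sup_t\mathbb Ee^{a|D(t)|}<\infty$. Markov's inequality then gives $\mathbb P(|D(t)|>\varepsilon t)\le Ce^{-a\varepsilon t}$, which is summable in $t$. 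Borel--Cantelli gives $D(t)/t\to0$ almost surely.

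\textbf{Step 4: case (i), $B<B_{\rm bal}$.} Now $\mu\ge\delta>0$ on both tails $\{|D|>L\}$. Since an only-task-1 batch, which has probability $p_1^B>0$ and increment $+\sqrt2\eta$, can repeat, the compact set $\{|D|\le L\}$ is left in finite time almost surely.

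Write $D(t)=D(0)+M_t+\sum_{s<t}\mu_s$, where $M_t$ is a martingale with bounded increments. The martingale strong law gives $M_t/t\to0$. Combining it with a standard transience argument (an exponential supermartingale $e^{-aD}$ above $L$) shows that $D(t)\to+\infty$ almost surely.

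Once $D\to\infty$, the mixed-case correction is $O(Be^{-D(t)})$. It vanishes along the path, so its Cesàro average vanishes. The increments of $\Delta_1$ are then asymptotically $\sqrt2\eta\,\mathbf 1\{\text{only task 1}\}$, plus a correction of size $O(\eta Be^{-D})$ in the mixed case. The strong law of large numbers applied to the i.i.d.\ batch-type indicators gives $\Delta_1(t)/(\eta t)\to\sqrt2\,p_1^B$. Symmetrically, $\Delta_2$ receives $\sqrt2\eta$ in every batch that contains task 2, up to vanishing corrections, so $\Delta_2(t)/(\eta t)\to\sqrt2(1-p_1^B)$.

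\textbf{Main obstacle.} The delicate step is the transience claim $D\to+\infty$ in case (i). The drift is controlled only on the tails, and $D$ must first escape both the compact region and the negative side, where the drift also points upward. I would handle this with the two-sided Lyapunov bound: use $e^{-aD}$ on $\{D>L\}$ to show that, from a high enough level, the probability of ever returning below $L$ is less than one. Combined with the almost-sure escape from $\{D\le L\}$, a geometric-trials argument then gives escape to $+\infty$.
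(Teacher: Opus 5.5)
Your proposal is correct. Steps 1--3, and the rate identification at the end of Step 4, follow the paper's argument in Appendix~\ref{app:two-task-proof} closely. They share the same rank-one reduction, the same tail drifts $\sqrt2\eta(2p_1^B-1)$ and $\sqrt2\eta(1-2p_2^B)$, and the same $e^{a|D|}$ Lyapunov bound followed by Markov's inequality and Borel--Cantelli for part (ii).

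The genuine difference is how you get $D(t)\to+\infty$ in part (i). The paper needs no transience argument. On $A_t=\{q_1(t)=1\}$ the gap increases by exactly $\sqrt2\eta$. On $A_t^c$ it decreases by at most $\sqrt2\eta$, because $\Delta_1$ never decreases and $\Delta_2$ rises by at most $\sqrt2\eta$. So pathwise $D(t+1)-D(t)\ge\sqrt2\eta(2\mathbf 1_{A_t}-1)$, and the strong law for the i.i.d.\ indicators gives $\liminf_t D(t)/(\eta t)\ge\sqrt2(2p_1^B-1)>0$ almost surely. This comparison with a biased $\pm\sqrt2\eta$ random walk removes what you call the main obstacle. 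You would not need to treat the compact region or the negative side separately, and you would get an explicit linear escape rate for free.

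Your route is also valid:
\begin{itemize}
\item the stopped supermartingale $e^{-aD}$ gives return probability at most $e^{-a(x-L)}$ from level $x>L$;
\item the negative tail is left almost surely because of its positive drift;
\item $[-L,L]$ is exited with uniformly positive probability by a fixed number of pure task-1 batches;
\item geometric trials then finish the argument.
\end{itemize}
It uses only asymptotic drift information, so it would survive in variants lacking a pathwise comparison. Here, however, it is considerably heavier than necessary.

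One small slip: in Step~1, ``the sum increases by at least $\sqrt2\eta$, so $\Delta_1,\Delta_2\to\infty$'' does not follow, since a divergent sum does not force each summand to diverge. The correct reason is that pure task-$i$ batches occur with probability $p_i^B>0$ and each adds $\sqrt2\eta$ to $\Delta_i$, as in the paper. You never use this claim, so it does no harm.
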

	The threshold is controlled by the probability $p_1^B$ of a batch containing only the frequent task. As its margin pulls far ahead, a pure task-1 batch advances only $\Delta_1$, while every batch containing task~2 asymptotically favors the rare task. The boundary $p_1^B=1/2$ separates outward from restoring tail drift. For $p_{\min}\ll1$, $B_{\rm bal}\sim(\log2)/p_{\min}$: batches of order the inverse tail frequency suffice for equal growth to leading linear order. The critical case $B=B_{\rm bal}$ is not covered by the theorem.
	
	\subsection{Higher rank: frequency-dependent dynamical regimes}
	\label{sec:hierarchical-memory}
	
	We next ask how the two-task results extend when many associations occur at different population frequencies. To represent multiple frequency scales while retaining a tractable matrix structure, we group the \(K=MC\) associations into \(M\) blocks of size \(C\). Block \(\mu\) has total probability \(\widetilde p_\mu\), distributed uniformly among its \(C\) tasks, so that
	$p_j=\widetilde p_\mu/C$ for $j\in\mu$, with $\sum_{\mu=1}^M\widetilde p_\mu=1$.
	This block model can be viewed as a coarse-grained representation of a heterogeneous frequency distribution: different blocks correspond to different frequency levels, while tasks within the same block are statistically exchangeable. We take $\overline W_0=0$. Full-batch dynamics preserve this block symmetry, whereas mini-batch fluctuations generally break it. For $i\ne j$ in the same block, the margin $\overline W_{jj}-\overline W_{ij}$ compares the correct answer with an incorrect answer from that block. The first-order expansion is in within-block count fluctuations at fixed block totals (Appendix~\ref{app:block-minibatch}).
	
		\begin{proposition}[Sampling and within-block margin growth]
			\label{prop:hierarchical-regimes}
			Consider momentum-free Muon and $i\ne j$ in the same block.
			
			\textbf{(i) Full batch.} From $\overline W_0=0$, each margin $\overline W_{jj}-\overline W_{ij}$ increases by exactly $\eta$ per step, independently of task frequency.
			
			\textbf{(ii) Sparse sampling.} This margin is unchanged when task $j$ is absent; the task appears with probability $1-(1-p_j)^B\sim Bp_j$ as $Bp_j\to0$.
			
			\textbf{(iii) Well-sampled blocks.} Under the local, well-sampled assumptions of Appendix~\ref{app:block-coupling}, at a fixed block-symmetric state the first-order margin increment is
			\[
			\eta+O_{\mathbb P}\!\left(\frac{\eta}{\sqrt{Bp_j}}\right),
			\qquad Bp_j\to\infty.
			\]
		\end{proposition}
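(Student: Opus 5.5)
Part (i) follows from an exact computation of the full-batch update in task coordinates. With $Q=P$ and $\overline W_0=0$, we have $S_0=\frac1K\mathbf 1\mathbf 1^\top$. Hence $(I-S_0)P=(I-\frac1K\mathbf 1\mathbf 1^\top)P$. Its column space lies in $\mathbf 1^\perp$, and its kernel is spanned by $P^{-1}\mathbf 1$ (for $K\ge2$ the rank is $K-1$). I will compute $\operatorname{msgn}$ by writing $\operatorname{msgn}(A)=A(A^\top A)^{+1/2}$. The property I will carry forward is invariance: the update should keep $\overline W_t$ in a family of the form $a_t I+b_t\mathbf 1\mathbf 1^\top$ plus a column-dependent shift $\mathbf 1 c_t^\top$. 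Column shifts do not change the softmax, so $S_t$ stays of the form $\alpha_t I+\gamma_t\mathbf 1\mathbf 1^\top$.

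The key algebraic fact is that for $A=(I-S)P$ with $S$ of this form, $\operatorname{msgn}(A)=\Pi_{\mathbf 1^\perp}\,\operatorname{msgn}(\ldots)$ has diagonal-minus-offdiagonal entries equal to one in every column. I plan to prove this by checking that $U=\Pi_{\mathbf 1^\perp}=I-\frac1K\mathbf 1\mathbf 1^\top$ on the left and the corresponding right factor on the kernel complement combine to give $\operatorname{msgn}(A)=I-\mathbf 1 w^\top$ for some vector $w$. To do this I will verify $A=(I-\mathbf 1w^\top)\,D$ with $D$ positive definite and $(I-\mathbf 1 w^\top)$ a partial isometry. Concretely, I will take $w=P^{-1}\mathbf 1/(\mathbf 1^\top P^{-1}\mathbf 1)$, so that $\mathbf 1w^\top$ is the oblique projector. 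I expect to need a small adjustment to make it orthogonal in the right metric; this is the step to check carefully. Once $\operatorname{msgn}(A)=I-\mathbf 1w^\top$ holds, the margin increment is $(1-w_j)-(0-w_j)=1$ in column $j$. Induction over $t$ then gives exactly $\eta$ per step, since the form is preserved.

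Part (ii) is immediate from nullspace preservation. If $N_j=0$, column $j$ of $(I-S_t)Q_t$ is zero, so $e_j\in\ker$, and column $j$ of $\operatorname{msgn}$ is zero by the property in the Notations subsection. Both $\overline W_{jj}$ and $\overline W_{ij}$ are therefore unchanged. The probability statement $1-(1-p_j)^B=Bp_j+O((Bp_j)^2)$ is a Taylor expansion.

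For part (iii) I will write $Q=P+\delta Q$, with within-block fluctuations $\delta q_j=O_{\mathbb P}(\sqrt{p_j/B})$ at fixed block totals. At the block-symmetric state, $A_0=(I-S)P$ has the structure analysed in part (i), giving unit margin increment. I will then apply the Fréchet derivative of $\operatorname{msgn}$ on the rank-$(K-1)$ stratum, assuming (as in Appendix~\ref{app:block-coupling}) that the smallest nonzero singular value relevant to block $\mu$ scales like $p_j$. The derivative is bounded by $\|\delta A\|/\sigma_{\min}$. Restricted to column $j$, this is $O(\sqrt{p_j/B})/p_j=O(1/\sqrt{Bp_j})$, which yields the $\eta+O_{\mathbb P}(\eta/\sqrt{Bp_j})$ claim.

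The main obstacle is controlling the cross-block coupling in this perturbation bound. Fluctuations in rarer blocks interact with small singular values, so I must show that their effect on block $\mu$ columns is weighted appropriately rather than by the global $\sigma_{\min}$. I would handle this with the local block-diagonal dominance assumption from the appendix.
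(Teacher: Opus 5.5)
Your part (ii) is correct and is the paper's argument. Part (i), however, aims at an identity that fails whenever the frequencies are unequal. By the matrix determinant lemma, $I-\mathbf 1w^\top$ is invertible unless $w^\top\mathbf 1=1$, and in that case its kernel is $\operatorname{span}(\mathbf 1)$. You note yourself that $\ker A=\operatorname{span}(P^{-1}\mathbf 1)$ at $t=0$, and $\operatorname{msgn}$ preserves the right nullspace. So $\operatorname{msgn}(A)=I-\mathbf 1w^\top$ forces $P^{-1}\mathbf 1\propto\mathbf 1$, i.e.\ equal frequencies, which is exactly the case the statement is not about. No choice of $w$ or change of metric repairs this. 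For the same reason your invariant family $aI+b\mathbf 1\mathbf 1^\top+\mathbf 1c^\top$ (with $S_t=\alpha_tI+\gamma_t\mathbf 1\mathbf 1^\top$) is not preserved: after one step, the off-diagonal entries of a column generally differ between same-block and other-block rows.

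The invariant that works is within-block permutation symmetry, which orthogonal equivariance preserves. Let $\mathcal U=(U_1,\dots,U_M,Y)$ be the orthonormal basis of within-block zero-sum directions and normalized block indicators. In this basis $\mathcal U^\top(I-S_t)P\,\mathcal U=\operatorname{diag}(\sigma_\mu I_{C-1})\oplus H_t$ with $\sigma_\mu=a_\mu\widetilde p_\mu/C>0$. Hence $\operatorname{msgn}((I-S_t)P)=I_K+Y(\operatorname{msgn}(H_t)-I_M)Y^\top$. The correction is constant on each pair of blocks, not on each column. Since $(e_j-e_i)^\top Y=0$ for $i,j$ in the same block, the increment of $\overline W_{jj}-\overline W_{ij}$ is exactly $\eta$. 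Your ``$(1-w_j)-(0-w_j)$'' cancellation survives, but only in this block-constant form.

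In part (iii), the generic bound $\|\delta A\|/\sigma_{\min}$ does not localize to column $j$. The step you flag as the main obstacle is where the proof actually lives. The polar derivative is global, and $\|\delta A\|$ is dominated by fluctuations of the frequent blocks. The nonzero singular values of the reference also include the singular values $s_k$ of the $M\times M$ block-average matrix $H=LR$. These are governed by all block frequencies, in particular the rarest, and need not be bounded below by a multiple of $p_j$. Expanding around $AP$ also charges the block-total fluctuations $\pi_\mu-\widetilde p_\mu$ to the bound. There is no block-diagonal dominance assumption in Appendix~\ref{app:block-coupling} to fall back on; the localization comes from exact structure.

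The paper expands around $\widetilde G=AQ^\parallel$, built from the empirical block totals. By part (i) this reference still gives increment exactly $\eta$ whenever every $\pi_\mu>0$, so block totals cost nothing, and only $E=AQ^\perp$ is treated perturbatively. In the basis $\mathcal U$, $E$ has the following structure:
\begin{itemize}
\item vanishing cross-contrast blocks and $Y^\top EY=0$;
\item symmetric contrast--contrast blocks $a_\mu D_\mu$;
\item contrast--average blocks $\frac{a_\mu}{\sqrt C}z_\mu e_\mu^\top$ and $\frac{1}{\sqrt C}Le_\mu z_\mu^\top$, which involve only block $\mu$'s own fluctuation $z_\mu$.
\end{itemize}
The fixed-rank derivative formula of Lemma~\ref{lem:app-fixed-rank-polar} divides antisymmetrized entries by $\sigma_i+\sigma_j$, so it kills the contrast--contrast response. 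Combined with the identity $\widehat u_k^\top Le_\mu=s_k(\widehat v_k)_\mu/r_\mu$, it gives contrast--average response $\frac{(\widehat v_k)_\mu}{\sqrt C\,r_\mu}\frac{\sigma_\mu-s_k}{\sigma_\mu+s_k}z_\mu$ on mode $k$. The ratio is at most one however small $s_k$ is, so $\|U_\mu^\top\mathcal D_{\widetilde G}[E]Y\|_F\le\|z_\mu\|_2/(\sqrt C\,r_\mu)$. Then $\mathbb E[\|z_\mu\|_2^2\mid\pi_\mu]=(C-1)\pi_\mu/(BC)$ yields $O_{\mathbb P}(\sqrt{C/(B\widetilde p_\mu)})=O_{\mathbb P}((Bp_j)^{-1/2})$. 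This cancellation together with the singular-vector identity is the missing ingredient; without it your estimate depends on $\min_k s_k$ and on the other blocks' frequencies.
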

	Thus sparse sampling limits update opportunities, whereas well-sampled blocks recover the full-batch margin increment to first order. The underlying contrast-to-contrast perturbation cancels, leaving only coupling to block-average directions. These local estimates do not establish uniform long-time balancing of the stochastic hierarchy. Appendix~\ref{app:hierarchical-continuum-refinement} separately proves finite-horizon convergence to the full-batch flow under large-batch and small-step limits.
	
	Figure~\ref{fig:coverage-main}(b) shows corresponding trajectory-level crossovers: measured late margin-growth rates approximately align against $Bp_j$ and approach the full-batch response as expected task counts increase. Frequent and rare groups can therefore exhibit different empirical regimes at the same batch size.
	
	\begin{figure}[t]
		\centering
		\includegraphics[width=0.98\linewidth]{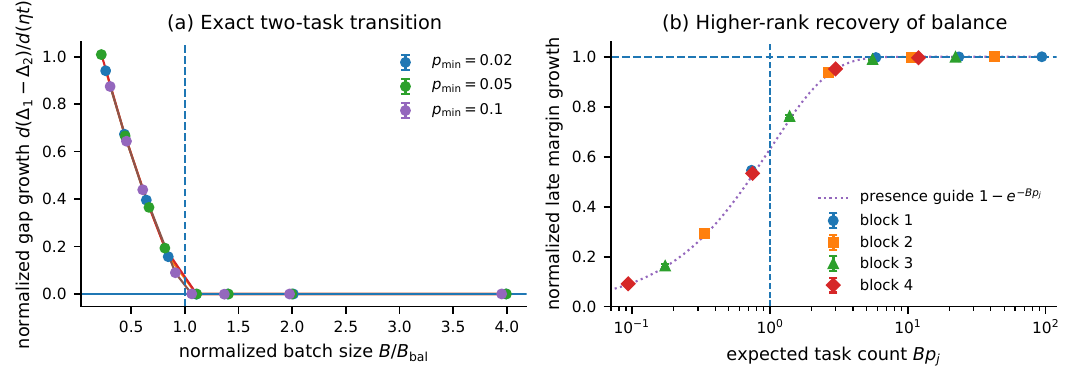}
		\caption{\textbf{Mini-batch sampling controls the transition from frequency-limited to balanced growth.} Left: the two-task gap-growth transition occurs at the exact threshold $B/B_{\rm bal}=1$. Right: in the higher-rank hierarchy, late margin-growth rates from different frequency blocks collapse when plotted against the expected task count $Bp_j$ and approach the full-batch unit response as that count increases. Error bars summarize 64 trajectories; the dotted curve is the task-presence guide $1-e^{-Bp_j}$.}
		\label{fig:coverage-main}
	\end{figure}
	
	Appendices~\ref{app:two-task-proof} and~\ref{app:hierarchical-memory} give the exact two-task proof and the higher-rank perturbation analysis underlying the approach to balanced dynamics.
	
	\subsection{Label noise: three stochastic regimes}
	\label{sec:label-noise}
	
	We next make the observed answer stochastic, with $0<\alpha<1$. Conditional on task $j$, symmetric label noise leaves the target label unchanged with probability $c_\alpha=1-\alpha+\alpha/K$ and returns a nontarget label with aggregate probability $\rho_\alpha=1-c_\alpha$. The crucial difference from task sampling is that label fluctuations remain nonzero at the finite population optimum.
	
	Let $d_j$ be the conditional label distribution, $\delta_j=Se_j-d_j$, and $\Gamma_j=\operatorname{diag}(d_j)-d_jd_j^\top$. For the stochastic gradient column $\widehat g_j$, define $\mathrm{SNR}_j^2=\|\mathbb E\widehat g_j\|_2^2/\operatorname{tr}\operatorname{Cov}(\widehat g_j)$.
	
		\begin{proposition}[Task frequency and gradient reliability decouple under label noise]
			\label{prop:label-gradient-statistics}
			\textbf{(i) Deterministic labels.} For $\delta_j\ne0$, $\mathrm{SNR}_j^2=Bp_j/(1-p_j)$.
			
			\textbf{(ii) Nondegenerate label noise.} As $\delta_j\to0$, $\mathrm{SNR}_j^2\sim Bp_j\|\delta_j\|_2^2/\operatorname{tr}\Gamma_j\to0$.
		\end{proposition}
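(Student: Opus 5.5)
The plan is to compute the mean and covariance of the stochastic gradient column $\widehat g_j$ exactly from the mini-batch sampling model, and then read off both regimes from one closed-form ratio. In task coordinates the (negative) stochastic gradient column for task $j$ is
\[
\widehat g_j=\frac1B\sum_{b=1}^B \mathbf 1\{x_b=j\}\,(y_b-Se_j),
\]
where the $B$ samples are i.i.d., $x_b\sim p$, and conditional on $x_b=j$ the one-hot label $y_b$ has law $d_j$. With deterministic labels, $d_j=e_j$. Each summand $Z_b=\mathbf 1\{x_b=j\}(y_b-Se_j)$ therefore has mean $p_j(d_j-Se_j)=-p_j\delta_j$, so $\mathbb E\widehat g_j=-p_j\delta_j$.

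For the second moment, use $\mathbb E[(y-Se_j)(y-Se_j)^\top\mid x=j]=\Gamma_j+\delta_j\delta_j^\top$. This holds because $\operatorname{Cov}(y\mid x=j)=\Gamma_j$ and the conditional mean of $y-Se_j$ is $-\delta_j$. Hence
\[
\operatorname{Cov}(Z_b)=p_j\Gamma_j+p_j(1-p_j)\delta_j\delta_j^\top,
\]
and by independence across the batch,
\[
\operatorname{tr}\operatorname{Cov}(\widehat g_j)=\frac{p_j}{B}\Bigl(\operatorname{tr}\Gamma_j+(1-p_j)\|\delta_j\|_2^2\Bigr).
\]
Dividing gives the master formula
\[
\mathrm{SNR}_j^2=\frac{Bp_j\|\delta_j\|_2^2}{\operatorname{tr}\Gamma_j+(1-p_j)\|\delta_j\|_2^2}.
\]

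Part (i) follows immediately. Deterministic labels give $d_j=e_j$ and hence $\Gamma_j=0$. For $\delta_j\ne0$ the $\|\delta_j\|^2$ factors cancel, leaving $Bp_j/(1-p_j)$.

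For part (ii), nondegenerate label noise with $0<\alpha<1$ means $d_j$ has full support. Then $\operatorname{tr}\Gamma_j=1-\|d_j\|_2^2>0$, and this quantity does not depend on $S$. As $\delta_j\to0$, the denominator therefore tends to $\operatorname{tr}\Gamma_j$, which gives the stated asymptotic equivalence and the limit $0$.

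The only delicate step is justifying the column model itself. I will check it against the gradient expression $\widetilde E(I-S)QE^\top$ from Appendix~\ref{app:associative-gradient}, in its label-noise form with $I$ replaced by the empirical one-hot labels. This confirms that the $j$-th column in task coordinates is exactly the normalized sum above, with the $1/B$ normalization and multinomial counts. Since SNR is a ratio, the sign convention (gradient versus negative gradient) is irrelevant.
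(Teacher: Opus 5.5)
Your proof is correct and follows the paper's argument essentially verbatim. The paper also computes the mean $p_j\delta_j$ and second moment $p_j(\Gamma_j+\delta_j\delta_j^\top)$ of the single-observation contribution $\mathbf 1\{J=j\}(Se_j-e_Y)$, averages to get $\operatorname{Cov}(\widehat g_j)=B^{-1}[p_j\Gamma_j+p_j(1-p_j)\delta_j\delta_j^\top]$, and reads both regimes off the closed-form $\mathrm{SNR}_j^2$, with your observations $\Gamma_j=0$ for deterministic labels and $\operatorname{tr}\Gamma_j=1-\|d_j\|_2^2>0$ (independent of $S$) supplying the final steps the paper leaves implicit.
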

	With deterministic labels, the stochastic fluctuation of a task shrinks together with its residual, so the number of task observations is enough to characterize directional reliability. Label noise breaks this one-scale picture: near the population optimum the mean residual vanishes, while the random observed label continues to fluctuate. A task can therefore be sampled many times ($Bp_j\gg1$) and still have a low-SNR gradient.
	
	Label noise introduces a second sampling scale, $Bp_j\rho_\alpha$, the expected number of corrupted task-$j$ observations. When $\rho_\alpha$ is small, it can be much smaller than the task count $Bp_j$, leaving a broad intermediate range in which the task appears regularly but corruptions are still rare.
	Write $m_{B,p}(\Delta)$ for the mean margin increment divided by $\eta$ in the scalar counting model of Appendix~\ref{app:label-coverage-dynamics}.
	
		\begin{proposition}[Projected regimes and local closure under label noise]
			\label{prop:label-regimes}
			Assume $\rho_\alpha<1/2$.
			
			\textbf{(i) Projected counting regimes.} With $p=p_j$,
			\[
			\begin{aligned}
				m_{B,p}(\Delta)&=Bp(1-2\rho_\alpha)+O((Bp)^2),
				&& Bp\ll1,\\
				\lim_{\Delta\to\infty}m_{B,p}(\Delta)&\approx1,
				&&1\ll Bp\ll\rho_\alpha^{-1},\\
				\lim_{\Delta\to\infty}m_{B,p}(\Delta)&<0,
				&&Bp\rho_\alpha\gg1.
			\end{aligned}
			\]
			\textbf{(ii) Local closure.} Near the noisy population optimum, the isotropic covariance closure with a locally homogeneous frequency scale $p$ predicts contraction coefficients proportional to $\sqrt{Bp}$ for Muon and $p$ for SGD.
		\end{proposition}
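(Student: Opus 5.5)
Part (i) rests on a scalar counting model, which I will fix first (following Appendix~\ref{app:label-coverage-dynamics}). Project the task-$j$ Muon column onto the margin direction. Each task-$j$ observation in the batch is either clean, with probability $c_\alpha$, or corrupted, with probability $\rho_\alpha$. Let $N\sim\operatorname{Binomial}(B,p)$ be the task count and $R\mid N\sim\operatorname{Binomial}(N,\rho_\alpha)$ the corruption count.

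\begin{itemize}
\item When $N=0$, nullspace preservation makes the column zero.
\item When $N\ge1$, the column is the normalized empirical residual $N(Se_j)-\sum_i(\text{label counts})e_i$. Its projection onto the margin direction is a bounded function $\phi(N,R,\Delta)$.
\end{itemize}

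I then define $m_{B,p}(\Delta)=\mathbb E\,\phi(N,R,\Delta)$. For large $\Delta$ we have $Se_j\to e_j$, so the residual is supported on the corrupted labels plus a term of order $e^{-\Delta}N$ on the target coordinate. Hence $\phi\to+1$ (up to the normalization constant) when $R=0$. When $R$ is large, $\phi$ is dominated by a push away from the target and becomes negative.

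\textbf{Small $Bp$.} I expand in the number of occurrences. Since $\Pr(N\ge2)=O((Bp)^2)$, only the term $N=1$ contributes at first order, with $\Pr(N=1)=Bp+O((Bp)^2)$. With a single observation the column is rank one and its sign is determined by whether the label is clean. Clean labels push the margin by $+1$ with probability $1-\rho_\alpha$; corrupted labels push it by $-1$ with probability $\rho_\alpha$, at least for the relevant projected margin in the counting model. This gives $Bp(1-2\rho_\alpha)$ directly.

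\textbf{Intermediate $1\ll Bp\ll\rho_\alpha^{-1}$.} Here $\Pr(N=0)=(1-p)^B\le e^{-Bp}\to0$. I bound $\Pr(R\ge1)\le\mathbb E R=Bp\rho_\alpha\to0$. On the event $\{N\ge1,R=0\}$ the limiting projected increment is $1$, so by bounded convergence $\lim_\Delta m\approx1-O(e^{-Bp}+Bp\rho_\alpha)$.

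\textbf{Large $Bp\rho_\alpha$.} I use concentration: $R/N\to\rho_\alpha$ while $N\to\infty$. As $\Delta\to\infty$ the target residual $N(1-S_{jj})\to0$, whereas the corrupted-label residual stays of order $N\rho_\alpha$. After normalization the projected increment therefore tends to a strictly negative constant determined by the limiting label distribution, namely the step that reduces the overconfident margin. The condition $\rho_\alpha<1/2$ ensures the small-$Bp$ coefficient is positive, so the sign change is a genuine crossover.

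\textbf{Part (ii).} I linearize at the noisy optimum $\Delta^\*$, where $\delta_j=0$, writing $\delta_j\approx\Gamma_j h$ for a margin perturbation $h$. The mini-batch gradient column is $\widehat g=q\,\Gamma_j h+\xi$, with mean $p\Gamma_j h$ and noise covariance of order $(p/B)\Gamma_j$. Under the isotropic covariance closure, the noise across columns is treated as isotropic Gaussian with scale $\sigma\asymp\sqrt{p/B}$ after homogenizing frequencies to a common $p$. Proposition~\ref{prop:gaussian-isotropization} then gives the mean Muon update
\[
\frac{c_d}{\sigma}\,p\,\Gamma_j h\asymp\frac{p}{\sqrt{p/B}}\,\Gamma_j h=\sqrt{Bp}\;\Gamma_j h,
\]
which yields a contraction coefficient proportional to $\eta\sqrt{Bp}$. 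SGD's mean update is $\eta p\Gamma_j h$, giving a coefficient proportional to $p$.

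\textbf{Main obstacle.} I expect the hardest step to be the large-$Bp\rho_\alpha$ sign claim, which requires pinning down the exact normalized projection of the partial polar column. The plan is to work in the rank-one column-wise reduction, where msgn of a single column is just $v/\|v\|$. The limit is then an explicit ratio, to which I would apply the law of large numbers for multinomial label counts.
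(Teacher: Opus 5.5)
\textbf{Gap in part (i).} Your counting model is not the one in which the stated formulas hold. You normalize the full residual column and only then project it onto the margin direction. In that model the sparse-regime formula is false.

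A single corrupted observation with label $i\neq j$ contributes the residual $e_i-Se_j$. Its inner product with $v_j$ is $-(1-u)K/(K-1)$, where $u=1-S_{jj}$. For $K>2$ its incorrect-class entries are unequal, so its cosine with $v_j$ is strictly greater than $-1$; it tends to $-\sqrt{K/(2(K-1))}$ as $\Delta\to\infty$. A clean observation, by contrast, gives $u v_j$, with cosine exactly $+1$. Your leading sparse coefficient is therefore proportional to $c_\alpha-\kappa\rho_\alpha$ with $0<\kappa<1$, not to $1-2\rho_\alpha$. The hedge ``at least for the relevant projected margin'' sits exactly on the step that fails. Relatedly, your $\phi$ is not a function of $(N,R,\Delta)$ alone. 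The norm of the residual depends on how the $R$ corrupted labels are spread over the $K-1$ wrong classes. Your large-margin arguments would survive in this model, but the first regime does not.

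\textbf{The missing idea: project before normalizing.} Every residual column sums to zero, so its coefficient along the unnormalized contrast $v_j$ equals its $j$-th entry:
\[
A(\Delta)=\bigl(Nu(\Delta)-R\bigr)/B,
\]
where $(N,R)$ is trinomial with cell probabilities $pc_\alpha$, $p\rho_\alpha$, $1-p$. In this one-dimensional model the partial polar factor is just $\operatorname{sgn}$, so $m_{B,p}(\Delta)=\mathbb E\operatorname{sgn}(Nu(\Delta)-R)$.
\begin{itemize}
\item \emph{Sparse regime.} Since $0<u<1$, a single occurrence contributes exactly $\operatorname{sgn}(u)=+1$ or $\operatorname{sgn}(u-1)=-1$. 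This gives $Bp(c_\alpha-\rho_\alpha)+O((Bp)^2)$.
\item \emph{Large-margin limit.} As $u\downarrow0$, $\operatorname{sgn}(Nu-R)\to\mathbf 1\{N>0,R=0\}-\mathbf 1\{R>0\}$ pointwise. Dominated convergence then gives the closed form $\lim_{\Delta\to\infty}m_{B,p}(\Delta)=2(1-p\rho_\alpha)^B-(1-p)^B-1$.
\item \emph{Corruption-sparse regime.} The bounds $(1-p\rho_\alpha)^B\ge1-Bp\rho_\alpha$ and $(1-p)^B\le e^{-Bp}$ give the value $1-O(Bp\rho_\alpha+e^{-Bp})$.
\item \emph{Corruption-resolved regime.} The bound $(1-p\rho_\alpha)^B\le e^{-Bp\rho_\alpha}$ makes the limit negative as soon as $Bp\rho_\alpha>\log2$.
\end{itemize}
No law of large numbers for $R/N$ is needed, and you also get the sharp $\log 2$ crossover.

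\textbf{Part (ii).} This matches the paper's route, with one technical correction. The label noise is isotropic only on the zero-sum subspace, so Proposition~\ref{prop:gaussian-isotropization} cannot be applied to the $K\times K$ matrix directly. Instead:
\begin{itemize}
\item write $G=X\widehat G$ with $\widehat G=X^\top G\in\mathbb R^{(K-1)\times K}$;
\item use $\operatorname{msgn}(G)=X\operatorname{msgn}(\widehat G)$;
\item invoke the rectangular analogue with coefficient $c_{K-1,K}$.
\end{itemize}
The $\sqrt{Bp}$ versus $p$ conclusion is unchanged.
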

	The counting model separates occurrence-limited, corruption-sparse, and corruption-resolved behavior; a broad intermediate window requires small $\rho_\alpha$. Resolving corruptions reverses the projected large-margin drift, motivating a separate local analysis near the noisy optimum. The closure predicts weaker frequency dependence for Muon than for SGD. Both analyses use auxiliary models; Figure~\ref{fig:label-regimes-main} tests the closure prediction in the complete categorical model.
	
	\begin{figure}[t]
		\centering
		\includegraphics[width=0.98\linewidth]{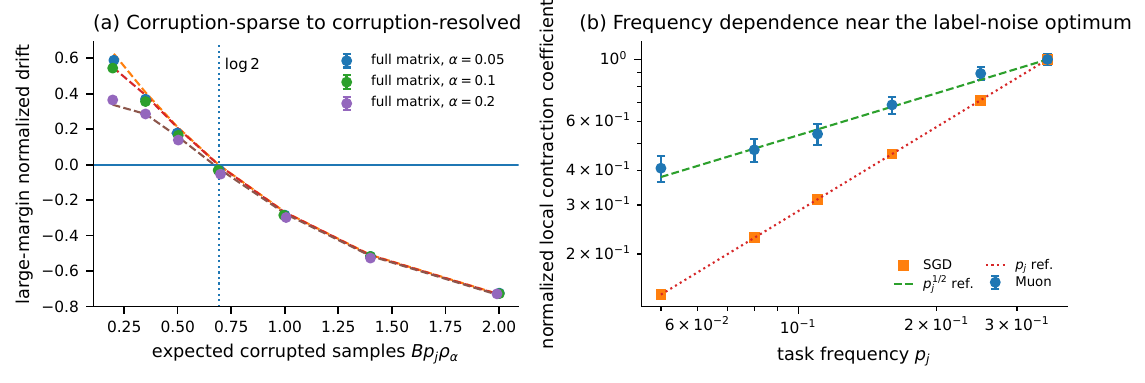}
		\caption{\textbf{Corrupted observations introduce a second sampling scale.} Left: full-matrix Muon changes from positive to negative large-margin drift as the expected number of corrupted task-$j$ observations, $Bp_j\rho_\alpha$, crosses an order-one scale; dashed curves show the exact projected counting formula, whose crossover approaches $\log2$ in the Poisson limit. Right: near the population optimum under label noise, the full-categorical Muon response scales empirically as $p_j^{0.494}$, close to the square-root prediction under the auxiliary covariance and frequency approximations, whereas the SGD coefficient scales linearly in $p_j$.}
		\label{fig:label-regimes-main}
	\end{figure}
	
	Appendix~\ref{app:label-noise} derives the complete categorical covariance and local Gaussian limit, then the projected counting dynamics and covariance closure used for the explicit regime formulas.
	
	\subsection{Sparse support: Muon versus SignSGD}
	\label{sec:signsgd-contrast}
	
	Without momentum, sparse mini-batch sampling also distinguishes polar normalization from coordinatewise sign normalization. Muon is orthogonally equivariant and preserves the missing-task nullspace; SignSGD is coordinate dependent, so a basis rotation can turn an update for an observed task into an update of an unobserved task.
	
	Consider momentum-free updates in the two-task model with query embedding $E=\bigl[\begin{smallmatrix}\cos\theta&\sin\theta\\-\sin\theta&\cos\theta\end{smallmatrix}\bigr]$, $0<\theta<\pi/2$, and let $\widetilde c=\|\widetilde E(1,-1)^\top\|_1$.
	
		\begin{proposition}[Muon avoids rotation-induced updates to a missing task]
			\label{prop:signsgd-contrast}
			A batch with no task-2 samples gives $\Delta_2^{\rm Muon}(t+1)-\Delta_2^{\rm Muon}(t)=0$, whereas $\Delta_2^{\rm Sign}(t+1)-\Delta_2^{\rm Sign}(t)=\eta\widetilde c(\cos\theta-\sin\theta)$.
			If $p_1^B>(1+\cot\theta)/2$, the rare-task SignSGD margin decays linearly almost surely.
		\end{proposition}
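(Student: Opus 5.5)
The plan is to work directly in the original parameter coordinates, because SignSGD is not orthogonally equivariant. First I would write the stochastic gradient for a batch with no task-2 samples, then read off both margin increments exactly. Finally I would bound the SignSGD increment on every other batch by a state-independent quantity and apply the strong law of large numbers.

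\textbf{Step 1 (pure task-1 batch).} With $q_2=0$, the task-coordinate matrix $(I-S)Q$ has only a first column. Since $S_{11}+S_{21}=1$, that column equals $q_1 s\,(1,-1)^\top$ with $s=1-S_{11}=S_{21}\in(0,1)$.

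For Muon, the task-coordinate update \eqref{eq:assoc-muon-update} applies, because $E$ and $\widetilde E$ are orthogonal. Nullspace preservation of $\operatorname{msgn}$ kills the second column of $\operatorname{msgn}((I-S)Q)$. Hence $\overline W_{22}$ and $\overline W_{12}$ are unchanged, so $\Delta_2^{\rm Muon}=\overline W_{22}-\overline W_{12}$ is unchanged.

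For SignSGD, set $v=\widetilde E(1,-1)^\top$. The negative gradient is $q_1 s\, v E_1^\top$ with $E_1=(\cos\theta,-\sin\theta)^\top$. Because $q_1s>0$ and $0<\theta<\pi/2$, its sign is the rank-one matrix $\operatorname{sign}(v)(1,-1)$. Writing $\Delta_2=-v^\top W E_2$ with $E_2=(\sin\theta,\cos\theta)^\top$, the increment is
\[
-\eta\, v^\top\operatorname{sign}(v)\,(1,-1)E_2=-\eta\|v\|_1(\sin\theta-\cos\theta)=\eta\widetilde c(\cos\theta-\sin\theta),
\]
which is the claimed formula.

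\textbf{Step 2 (uniform bound on other batches).} For an arbitrary batch, the SignSGD step is $\eta$ times a matrix $\Sigma$ with entries in $\{-1,0,1\}$. Its effect on $\Delta_2$ is therefore
\[
-\eta\, v^\top\Sigma E_2\le \eta\|v\|_1\|E_2\|_1=\eta\widetilde c(\sin\theta+\cos\theta).
\]
This bound holds whatever the current state $W_t$ is.

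Define $X_t$ as follows, using only the batch at step $t$:
\begin{itemize}
\item $X_t=\eta\widetilde c(\cos\theta-\sin\theta)$ if the batch contains no task-2 sample;
\item $X_t=\eta\widetilde c(\sin\theta+\cos\theta)$ otherwise.
\end{itemize}
Then $\Delta_2^{\rm Sign}(t+1)-\Delta_2^{\rm Sign}(t)\le X_t$. The $X_t$ are i.i.d., because mini-batches are independent across iterations.

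\textbf{Step 3 (linear decay).} The mean of $X_t$ is
\[
\mathbb E X_t=\eta\widetilde c\bigl[\cos\theta+\sin\theta\,(1-2p_1^B)\bigr].
\]
This is negative exactly when $2p_1^B-1>\cot\theta$, i.e.\ when $p_1^B>(1+\cot\theta)/2$. Note that this forces $\theta>\pi/4$, so the pure-task-1 increment is indeed negative.

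By telescoping, $\Delta_2^{\rm Sign}(t)\le\Delta_2^{\rm Sign}(0)+\sum_{k<t}X_k$. The strong law of large numbers then gives
\[
\limsup_t \frac{\Delta_2^{\rm Sign}(t)}{t}\le\mathbb E X_1<0 \quad\text{almost surely}.
\]

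The only delicate point is Step 2: whatever $W_t$ is, every SignSGD increment must be dominated by a quantity that depends on the batch alone. This is what lets us bypass the state dependence of $S_t$. The domination follows because sign updates have entrywise magnitude at most one, together with Hölder's inequality. I would also check that zero gradient entries (where $\operatorname{sign}(0)=0$) do not break the bound, which they do not, since a zero entry only shrinks the left-hand side.
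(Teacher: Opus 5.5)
Your proof is correct and follows the same route as the paper. It computes the exact increment $\eta\widetilde c(\cos\theta-\sin\theta)$ on task-2-free batches, bounds every other batch by the state-independent quantity $\eta\widetilde c(\cos\theta+\sin\theta)$, and applies the strong law to the indicators of $\{q_2(t)=0\}$. The only difference is minor: the paper first derives the full recursion $\Delta_2(t+1)-\Delta_2(t)=\eta\widetilde c(\cos\theta-\sin\theta\,\xi_t)$ from $\operatorname{sgn}(ab^\top)=\operatorname{sgn}(a)\operatorname{sgn}(b)^\top$ and uses $\xi_t\ge-1$, whereas your H\"older bound reaches the same constant without computing $\xi_t$ (and you obtain the Muon zero increment from nullspace preservation rather than from the explicit two-task recursion).
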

	For $\theta>\pi/4$, every frequent-only batch therefore decreases the rare-task SignSGD margin. The contrast is especially clear at $B=1$. Muon updates only the sampled task: the rare margin still grows, but on its slow occurrence clock. After rotation, the same stream of frequent-task observations produces a negative component in the rare SignSGD margin. Figure~\ref{fig:signsgd-main} shows the long-time consequence for $p_2=0.05$ and $\theta=1.15$ radians.
	
	\begin{figure}[t]
		\centering
		\includegraphics[width=0.98\linewidth]{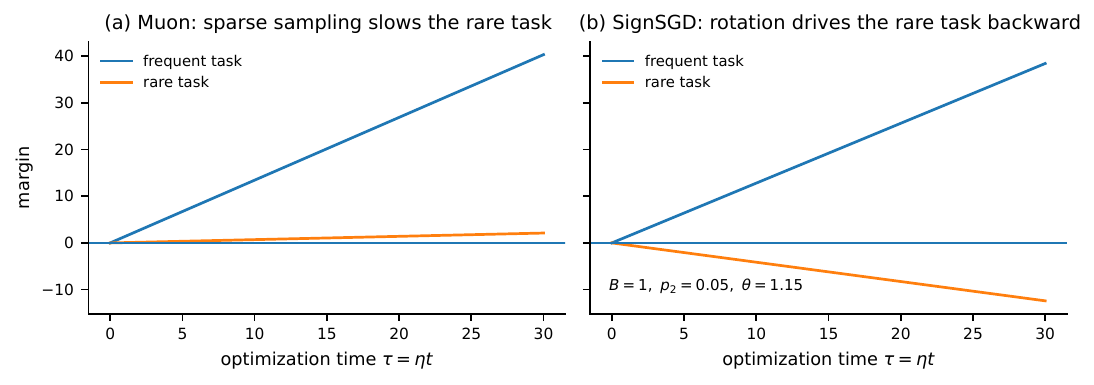}
		\caption{\textbf{Sparse support yields slow positive tail learning for Muon but negative transfer for SignSGD.} Both panels use the same $B=1$, $p_1=0.95$, $p_2=0.05$ mini-batch streams over 64 runs. Left: Muon preserves support, so the rare-task margin remains nonnegative and grows on its occurrence clock. Right: at $\theta=1.15>\pi/4$, rotated SignSGD drives the rare-task margin downward while the frequent task continues to improve. The positive answer-embedding scalar $\widetilde c$ is absorbed into the SignSGD learning rate. Curves show medians and ribbons show interquartile ranges.}
		\label{fig:signsgd-main}
	\end{figure}
	
	Appendix~\ref{app:two-task-signsgd} gives the exact rotated recursion and the long-time negative-transfer proof.
	
	\section{Momentum: Averaging and Tracking}
	\label{sec:direction-aware-momentum}
	
	Momentum changes the sampling regime seen by the polar map by averaging stochastic gradients across optimization steps.
	
		\begin{proposition}[Averaging scale and tracking under momentum]
			\label{prop:temporal-aggregation}
			\textbf{(i) Averaging.} The stationary EMA task frequency has variance-equivalent batch size $B_{\rm eff}(\beta)=B(1+\beta)/(1-\beta)$. Relative frequency fluctuations vanish as $B_{\rm eff}p_j\to\infty$.
			
			\textbf{(ii) Tracking.} If $\|S_t-S_{t-r}\|_F\le L_S\eta r$, the frozen-state buffer error is at most $L_S\eta\beta/(1-\beta)$ and vanishes when $\eta/(1-\beta)\to0$.
		\end{proposition}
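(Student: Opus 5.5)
Both parts follow from the explicit linear structure of the momentum recursion $M_t=\beta M_{t-1}+(1-\beta)G_t$ once it is unrolled as $M_t=(1-\beta)\sum_{r\ge0}\beta^rG_{t-r}$ in the stationary regime. The proof then reduces to a variance computation for (i) and a bias computation for (ii).

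\textbf{(i) Averaging.} Apply the unrolled EMA to the empirical frequencies, $\bar q_{j,t}=(1-\beta)\sum_{r\ge0}\beta^r q_{j,t-r}$. The batches are independent across iterations and $Bq\sim\operatorname{Multinomial}(B,p)$, so each $q_{j,s}$ has mean $p_j$ and variance $p_j(1-p_j)/B$. The mean of $\bar q_{j,t}$ is therefore $p_j$, and its variance is
\[
(1-\beta)^2\sum_{r\ge0}\beta^{2r}\,\frac{p_j(1-p_j)}{B}=\frac{1-\beta}{1+\beta}\cdot\frac{p_j(1-p_j)}{B}.
\]
This equals the variance of a single batch of size $B_{\rm eff}=B(1+\beta)/(1-\beta)$, which gives the claimed variance-equivalent batch size. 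The relative fluctuation is then
\[
\sqrt{\operatorname{Var}\bar q_{j,t}}/p_j=\sqrt{(1-p_j)/(B_{\rm eff}p_j)},
\]
which tends to $0$ as $B_{\rm eff}p_j\to\infty$. The same computation applies entrywise to the full covariance of the multinomial vector, so the variance equivalence holds for the whole vector and not only for one coordinate.

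\textbf{(ii) Tracking.} Interpret the frozen-state buffer error as the difference between the EMA of the residual factors actually used and the factor at the current state. In task coordinates the gradient is $(I-S_t)Q_t$. The residual entering lag $r$ is $I-S_{t-r}$, so the lag contributes $(S_t-S_{t-r})$ multiplied by the same frozen sampling weights. The error is then
\[
\Bigl\|(1-\beta)\sum_{r\ge0}\beta^r(S_t-S_{t-r})\Bigr\|_F\le(1-\beta)\sum_{r\ge0}\beta^rL_S\eta r.
\]
Using $\sum_r r\beta^r=\beta/(1-\beta)^2$, the bound becomes $L_S\eta\beta/(1-\beta)$. This tends to zero whenever $\eta/(1-\beta)\to0$, since $\beta\le1$.

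\textbf{Main obstacle.} The step needing most care is making "frozen-state buffer error" precise. The error must be measured with the sampling weights held fixed, or else normalized by frequency, so that the sampling fluctuations from (i) do not enter the bound. One also has to handle finite $t$, where the unrolled sum is truncated and an initialization term $\beta^{t}M_0$ remains. I would state the bound for the stationary or infinite-history buffer and note that the initialization term decays geometrically.
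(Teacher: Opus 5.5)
Your argument follows the paper's proof. Part (i) is the same stationary-EMA variance computation, giving $\operatorname{Var}(\overline q_{j,t})/p_j^2=(1-p_j)/(B_{\rm eff}p_j)$. Part (ii) is the same add-and-subtract of $I-S_t$ inside the unrolled buffer, followed by $(1-\beta)\sum_{r\ge0}r\beta^r=\beta/(1-\beta)$.

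One point needs tightening. Your displayed error drops the sampling weights, but the paper keeps the actual sampled $Q_{t-r}$ in the remainder: $\mathsf M_t-(I-S_t)\overline Q_t=(1-\beta)\sum_{r\ge0}\beta^r(S_t-S_{t-r})Q_{t-r}$, where $\overline Q_t=(1-\beta)\sum_{r\ge0}\beta^rQ_{t-r}$. It then bounds each term using $\|Q_{t-r}\|_{\mathrm{op}}\le1$, so the estimate holds realization by realization. This removes your worry about freezing or renormalizing the sampling weights: all sampling fluctuation sits in $\overline Q_t$, which is exactly what part (i) controls.
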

	At a frozen state in the deterministic-label model, the stationary EMA $M_t=(1-\beta)\sum_{r\ge0}\beta^rG_{t-r}$ can be expressed using exponentially averaged task frequencies, reducing their sampling variance without changing the physical mini-batch size.
	
	At a frozen state, replacing $B$ by $B_{\rm eff}$ reproduces task-frequency variances, but not task-absence probabilities or the full batch distribution. For $\beta$ close to one, $B_{\rm eff}\approx2B/(1-\beta)$, so $1-\beta\ll Bp_j$ suppresses relative frequency fluctuations. EMA updates are also temporally correlated: reduced instantaneous variance need not reduce long-time diffusion (Appendix~\ref{app:two-task-momentum}).
	
	Under the local tracking assumptions, combining averaging with a small frozen-state buffer error gives the sufficient window $\eta\ll1-\beta\ll Bp_j$. Transferring a buffer-error bound to a polar-direction bound additionally requires local control of the nonzero singular values or of relative perturbations. The window is therefore a local design guide, not a global convergence guarantee. It motivates longer memory for rare, slowly varying directions and shorter memory where rapid tracking is needed. Under stochastic labels, averaging also reduces persistent label fluctuations.
	
	\begin{figure}[t]
		\centering
		\includegraphics[width=0.98\linewidth]{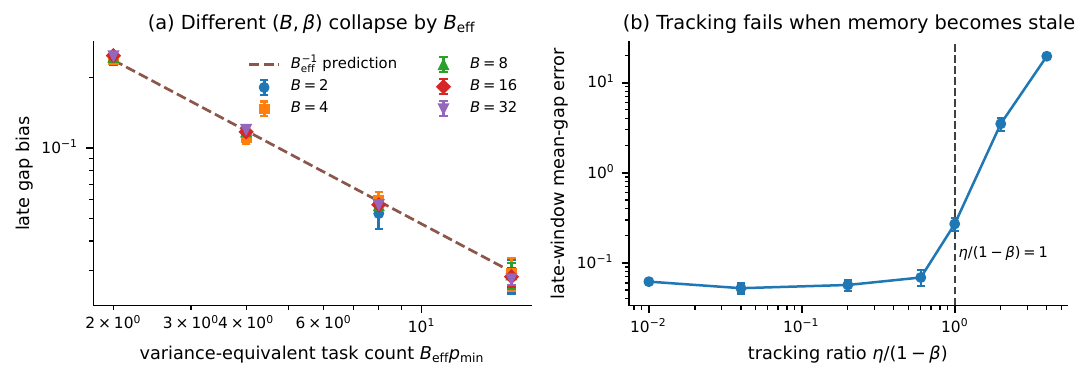}
		\caption{\textbf{Momentum trades variance reduction against tracking.} Left: different physical batch sizes and momentum coefficients nearly collapse when the two-task gap bias is plotted against the variance-equivalent task count $B_{\rm eff}p_{\min}$. Right: with $B$ and $\beta$ fixed and a common horizon $\tau_{\max}=50$, the late-window mean-gap error stays small while $\eta/(1-\beta)$ is small and grows once the EMA horizon becomes comparable to the state-evolution time scale.}
		\label{fig:momentum-main}
	\end{figure}
	
	Appendix~\ref{app:block-momentum} derives the EMA variance and tracking decomposition, and Appendix~\ref{app:noise-lag} gives a complementary scalar bias--variance calculation.
	
	\section{Fast--Slow Momentum in Language-Model Pretraining}
	\label{sec:fast-slow-pretraining}
	
	\paragraph{From theory to practice.}
	Our analysis motivates longer memory for rare, slowly varying directions and shorter memory where rapid tracking is needed. To examine this principle in language-model pretraining, we use the curvature-conditioned fast--slow momentum construction of \citet{zhu2026multiscale}. It retains the fast Muon update and adds a slow-momentum contribution in estimated flat directions, identified through the lower singular subspaces of the fast momentum. This provides a practical counterpart to the averaging--tracking trade-off in our theory, without requiring explicit task-frequency estimates.
	
	\begin{figure}[htbp]
		\centering
		\begin{minipage}[c]{0.49\linewidth}
			\begin{tcolorbox}[
				title={Fast--slow Muon (schematic)},
				colback=white,colbacktitle=black!5,coltitle=black,
				colframe=black!50,fonttitle=\small\bfseries,
				boxrule=0.4pt,arc=0pt,
				left=4pt,right=4pt,top=4pt,bottom=4pt]
				\footnotesize
				Decay rates $0\leq\beta_{\rm f}<\beta_{\rm s}<1$; mixing weight $\lambda>0$; step size $\eta_t$.
				Initialize $M^{\rm f}=M^{\rm s}=0$.
				\begin{algorithmic}[1]
					\For{each step $t$}
					\State Compute gradient $G_t$
					\State $M^{\rm f}\gets\beta_{\rm f}M^{\rm f}+G_t$
					\State $M^{\rm s}\gets\beta_{\rm s}M^{\rm s}+G_t$
					\State $H_t\gets\beta_{\rm f}M^{\rm f}+G_t$
					\State Estimate flat projector $\mathcal Q_t$ from $H_t$
					\State $D^{\rm f}_t\gets\operatorname{msgn}(H_t)$
					\State $D^{\rm s}_t\gets\mathcal Q_t(\operatorname{RowScale}(M^{\rm s}))$
					\State $W_{t+1}\gets W_t-\eta_t(D^{\rm f}_t+\lambda D^{\rm s}_t)$
					\EndFor
				\end{algorithmic}
			\end{tcolorbox}
		\end{minipage}\hfill
		\begin{minipage}[c]{0.49\linewidth}
			\includegraphics[width=\linewidth]{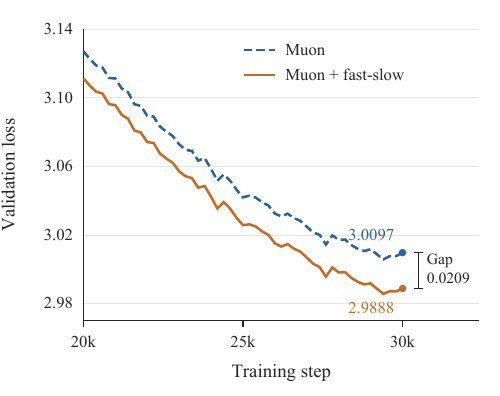}
		\end{minipage}
		\caption{\textbf{Fast--slow momentum: core update and pretraining results.}
			Left: schematic momentum mixing following \citet{zhu2026multiscale};
			$\operatorname{RowScale}$ normalizes rows, and parameter and update scaling
			follow the cited implementation.
			Right: validation loss from steps 20k to 30k for a 0.12B dense language model,
			with a final reduction of $0.0209$.
			Other training settings are fixed; curves are unsmoothed, with one run per method.}
		\label{fig:fast-slow-pretraining}
	\end{figure}
	
	\paragraph{Pretraining results.}
	We compare standard Muon with this fast--slow variant on a 0.12B-parameter dense language model trained for 30,000 steps, keeping the remaining training configuration fixed. Across all 138 shared checkpoints from step 2,600 to 30,000, the fast--slow variant has lower validation loss. Figure~\ref{fig:fast-slow-pretraining} shows the final 10,000 steps. At step 30,000, validation loss decreases from $3.0097$ to $2.9888$, a reduction of $0.0209$. These single-run results illustrate the practical relevance of heterogeneous momentum; variability across seeds and the effects of individual components remain to be evaluated. Appendix~\ref{app:llm-trajectory} documents the trajectory comparison.
	
	\section{Conclusion}
	\label{sec:conclusion}
	
	We study when Muon's preconditioning benefits survive stochastic gradient noise. Under isotropic low-SNR noise, its first-order conditional mean update matches learning-rate-rescaled SGD, showing that samplewise spectral normalization alone does not guarantee directional preconditioning. Structured sampling yields richer behavior: we establish an exact two-task balancing threshold and analyze higher-rank sampling effects governed by $Bp_j$. Label noise further separates task coverage from gradient reliability; an isotropic covariance closure with a locally homogeneous frequency approximation predicts square-root frequency dependence near the noisy optimum. A separate momentum-free SignSGD comparison highlights Muon's protection against rotation-induced negative transfer under sparse sampling.
	
	Our momentum analysis connects frozen-state variance reduction with local tracking error, motivating longer memory for rare, slowly varying directions. This provides a sampling-based interpretation of heterogeneous momentum designs, supported by a preliminary 0.12B pretraining experiment in which fast--slow momentum lowers validation loss relative to Muon. Extending the analysis to evolving, non-orthogonal representations and validating these benefits across training scales remain important directions.
	
	\subsection*{AI use statement}
	
	In this work, we used generative AI tools to assist with technical aspects of some mathematical proofs and derivations, and to assist with translation of the manuscript. We did not use generative AI tools to formulate the research questions, conceptual framework, or theorem statements. Additionally, we used generative AI tools to generate and edit code for controlled synthetic numerical experiments and figure generation. We reviewed all AI-assisted work. All AI-assisted mathematical arguments were checked and verified by the authors; AI-generated numerical code was inspected and tested for correctness, and all translated or edited text and generated figures were reviewed by the authors. We take responsibility for the final content of this work, including text, claims, code-derived results, and other artifacts produced with the aid of generative AI.
	
	\bibliography{iclr2027_conference}
	\bibliographystyle{plainnat}
	
	\clearpage
\appendix
\part{Appendix}
\parttoc
\clearpage
	
	\noindent\textbf{Guide to the appendices.}
	Appendix~A proves the isotropic Gaussian response result. Appendix~B records the associative-memory setup used throughout the structured analysis. Appendix~C gives the exact two-task dynamics and their large-batch/momentum refinements. Appendix~D supports the higher-rank regime map through sparse-task gating, full-batch contrast dynamics, well-sampled perturbation analysis, momentum averaging, and a final technical continuum refinement. Appendix~E separates exact categorical label-noise results from the projected counting model and the additional covariance closure. Appendix~F proves the Muon--SignSGD sparse-support contrast. Appendix~G contains an optional scalar EMA bias--variance calculation, and Appendix~H collects numerical protocols and supplementary diagnostics.
	
	\section{Isotropic Gaussian Mean Response: Proof of Proposition~\ref{prop:gaussian-isotropization}}
	\label{app:isotropic-proof}
	
	Throughout this appendix, we condition on the current iterate and suppress the time index. We write
	\begin{equation*}
		G = \bar G + \sigma \Xi,
		\qquad
		\bar G = \nabla \mathcal L(W),
	\end{equation*}
	where \(\Xi \in \mathbb{R}^{d\times d}\) has i.i.d.\ \(\mathcal N(0,1)\) entries and \(\sigma>0\).
	
	\subsection{Gaussian-smoothing proof}
	\label{app:isotropic-muon}
	
	Define
	\[
	F(H):=\mathbb E[\operatorname{msgn}(\Xi+H)],
	\qquad H\in\mathbb R^{d\times d},
	\]
	and write
	\[
	\langle A,B\rangle_F:=\operatorname{tr}(A^\top B).
	\]
	
	\paragraph{Smoothness by Gaussian convolution.}
	Let
	\[
	\phi(A)=(2\pi)^{-d^2/2}
	\exp\!\left(-\frac12\|A\|_F^2\right)
	\]
	be the standard Gaussian density on \(\mathbb R^{d\times d}\).
	Changing variables yields
	\begin{align*}
		F(H)
		& =\int\operatorname{msgn}(A)\phi(A-H)\,\mathrm dA \\
		& =\mathbb E\!\left[
		\operatorname{msgn}(\Xi)
		\exp\!\left(\langle H,\Xi\rangle_F
		-\frac12\|H\|_F^2\right)
		\right].
	\end{align*}
	For \(\|H\|_F\le r\), derivatives of any fixed order of the exponential factor are bounded by a constant, depending only on the derivative order and \(r\), times a polynomial in \(\|\Xi\|_F\) multiplied by \(e^{r\|\Xi\|_F}\). Since \(\|\operatorname{msgn}(\Xi)\|_F\le\sqrt d\) and
	\[
	\mathbb E\!\left[(1+\|\Xi\|_F^k)e^{r\|\Xi\|_F}\right]<\infty
	\]
	for every fixed \(k\ge0\) and \(r>0\), differentiation under the expectation is justified locally to every order. The same domination also implies continuity of the resulting derivatives. Hence \(F\) is smooth, and
	\begin{equation}
		\label{eq:gaussian-first-derivative}
		DF(0)[X] =\mathbb E\!\left[
		\operatorname{msgn}(\Xi)\langle X,\Xi\rangle_F
		\right]
	\end{equation}
	for every \(X\in\mathbb R^{d\times d}\). This argument differentiates the shifted Gaussian density, not the nonsmooth map \(\operatorname{msgn}\).
	
	\paragraph{Orthogonal equivariance of the derivative.}
	We use the following elementary lemma.
	
	\begin{lemma}
		\label{lem:orth-equivariant-map}
		Let \(T:\mathbb R^{d\times d}\to\mathbb R^{d\times d}\) be linear and satisfy
		\[
		T(UXV^\top)=UT(X)V^\top
		\]
		for every \(X\in\mathbb R^{d\times d}\) and every \(U,V\in O(d)\).
		Then \(T(X)=cX\) for some scalar \(c\in\mathbb R\) and all \(X\).
	\end{lemma}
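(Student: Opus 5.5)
The plan is to treat $\mathbb R^{d\times d}$ as a representation of $O(d)\times O(d)$ acting by $X\mapsto UXV^\top$, and to use invariance under well-chosen subgroups to pin down $T$ entry by entry. Write $E_{ij}=e_ie_j^\top$. We first determine $T(E_{11})$ and then transport the answer to all $E_{ij}$.

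\textbf{Step 1: $T(E_{11})$ is a multiple of $E_{11}$.}
Let $U=V=D$, where $D$ is a diagonal sign matrix with $D_{11}=1$. Then $DE_{11}D=E_{11}$, so $T(E_{11})=DT(E_{11})D$, and hence $T(E_{11})_{ab}=D_{aa}D_{bb}T(E_{11})_{ab}$. Any entry with $(a,b)\ne(1,1)$ and $a\ne b$ is killed by choosing $D$ to flip exactly one of the indices $a,b$ that is not equal to $1$. When $a=1$ or $b=1$, the other index differs from $1$ and can be flipped, so those entries vanish too.

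That leaves the diagonal entries $(a,a)$ with $a\ne1$, on which $D_{aa}^2=1$ gives no information. To handle them, use independent sign matrices: take $U=I$ and $V=D$ with $D_{11}=1$. Then $E_{11}D=E_{11}$, so $T(E_{11})=T(E_{11})D$, which forces column $b$ of $T(E_{11})$ to vanish whenever $D_{bb}=-1$. Letting $b$ range over all indices $b\ne1$ kills every column except the first. Symmetrically, taking $U=D$ and $V=I$ kills every row except the first. Therefore $T(E_{11})=cE_{11}$ for some scalar $c$.

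\textbf{Step 2: transport to every elementary matrix.}
Choose permutation matrices $P_i,P_j\in O(d)$ with $P_ie_1=e_i$ and $P_je_1=e_j$. Then $E_{ij}=P_iE_{11}P_j^\top$, so equivariance gives
\[
T(E_{ij})=P_iT(E_{11})P_j^\top=cP_iE_{11}P_j^\top=cE_{ij}.
\]
Since the $E_{ij}$ form a basis, linearity yields $T(X)=cX$ for all $X$.

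Once the subgroup choices are fixed the argument is routine. The only step needing care is Step 1: the simultaneous choice $U=V=D$ leaves the diagonal entries undetermined, and one must instead exploit that $U$ and $V$ can be chosen independently, which resolves it immediately.
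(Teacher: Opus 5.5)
Your proof is correct and follows essentially the same route as the paper: diagonal sign matrices chosen independently on the left and right force $T(E_{ij})$ to be supported on entry $(i,j)$, and permutation matrices then make the scalars equal. The only differences are cosmetic: you handle $E_{11}$ first and transport it, whereas the paper treats every $E_{ij}$ and then compares the constants, and your preliminary $U=V=D$ step is redundant because the independent choices $U=I,V=D$ and $U=D,V=I$ already kill every entry other than $(1,1)$.
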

	
	\begin{proof}
		Fix \(i,j\), and let \(D_1,D_2\) be arbitrary diagonal sign matrices satisfying \((D_1)_{ii}=(D_2)_{jj}=1\). Then
		\[
		D_1E_{ij}D_2^\top=E_{ij}.
		\]
		Hence equivariance gives
		\[
		T(E_{ij})=D_1T(E_{ij})D_2^\top.
		\]
		Varying all remaining diagonal signs independently forces every entry of \(T(E_{ij})\) except entry \((i,j)\) to vanish. Thus \(T(E_{ij})=c_{ij}E_{ij}\). Independent row and column permutations map any \(E_{ij}\) to any other matrix unit, so equivariance forces all \(c_{ij}\) to equal a common scalar \(c\). Linearity completes the proof.
	\end{proof}
	
	The identities
	\[
	\operatorname{msgn}(UAV^\top) =U\operatorname{msgn}(A)V^\top,
	\qquad
	U^\top\Xi V\stackrel{\mathrm d}{=}\Xi
	\]
	hold for all \(U,V\in O(d)\). Therefore
	\begin{align*}
		F(UHV^\top)
		& =U\,\mathbb E\!\left[
		\operatorname{msgn}(U^\top\Xi V+H)
		\right]V^\top            \\
		& =UF(H)V^\top.
	\end{align*}
	Differentiating at zero gives
	\[
	DF(0)[UXV^\top]=UDF(0)[X]V^\top.
	\]
	By Lemma~\ref{lem:orth-equivariant-map}, there is a scalar \(c_d\) such that
	\[
	DF(0)[X]=c_dX.
	\]
	
	\paragraph{Identification of the coefficient.}
	Taking the trace of this linear operator in the orthonormal basis \(\{E_{ij}\}_{i,j=1}^d\) and using \eqref{eq:gaussian-first-derivative}, we obtain
	\begin{align*}
		d^2c_d
		& =\sum_{i,j}\langle E_{ij},DF(0)[E_{ij}]\rangle_F        \\
		& =\sum_{i,j}\mathbb E[
		\operatorname{msgn}(\Xi)_{ij}\Xi_{ij}]                     \\
		& =\mathbb E\langle\operatorname{msgn}(\Xi),\Xi\rangle_F.
	\end{align*}
	For an SVD \(\Xi=U\Sigma V^\top\),
	\[
	\langle\operatorname{msgn}(\Xi),\Xi\rangle_F =
	\operatorname{tr}(\Sigma) =
	\|\Xi\|_*.
	\]
	Consequently,
	\[
	c_d=\frac{1}{d^2}\mathbb E\|\Xi\|_*>0.
	\]
	The expectation is finite because \(\|\Xi\|_*\le\sqrt d\,\|\Xi\|_F\), and it is strictly positive because \(\Xi\) is nonzero almost surely. For \(d=1\), \(c_1=\mathbb E|\xi|=\sqrt{2/\pi}\), recovering the SignSGD coefficient.
	
	\paragraph{Taylor theorem with remainder.}
	Since \(\operatorname{msgn}(-A)=-\operatorname{msgn}(A)\) and \(\Xi\stackrel{\mathrm d}{=}-\Xi\), we have \(F(-H)=-F(H)\). Consequently,
	\[
	F(0)=0,
	\qquad
	D^2F(0)=0.
	\]
	Because \(F\) is smooth and the dimension is fixed,
	\[
	M_d
	:=
	\sup_{\|Z\|_F\le1}
	\|D^3F(Z)\|_{\mathrm{op}} <\infty,
	\]
	where \(\|\cdot\|_{\mathrm{op}}\) denotes the norm of a trilinear map induced by the Frobenius norm on its arguments and output. Taylor's theorem with integral remainder gives
	\[
	F(H)-DF(0)[H] =
	\frac12\int_0^1(1-s)^2
	D^3F(sH)[H,H,H]\,\mathrm ds.
	\]
	Hence, for every \(\|H\|_F\le1\),
	\[
	\|F(H)-c_dH\|_F
	\le
	\frac{M_d}{6}\|H\|_F^3.
	\]
	Finally, since \(\sigma>0\) and the matrix sign is invariant under positive rescaling,
	\[
	\operatorname{msgn}(\bar G+\sigma\Xi) =
	\operatorname{msgn}\!\left(
	\Xi+\frac{\bar G}{\sigma}
	\right).
	\]
	Taking \(H=\bar G/\sigma\), for \(\|\bar G\|_F/\sigma\le1\) we obtain
	\[
	\left\|
	\mathbb E[\operatorname{msgn}(\bar G+\sigma\Xi)]
	-\frac{c_d}{\sigma}\bar G
	\right\|_F
	\le
	\frac{M_d}{6}
	\frac{\|\bar G\|_F^3}{\sigma^3}.
	\]
	This proves the claimed \(O_d(\|\bar G\|_F^3/\sigma^3)\) remainder as \(\|\bar G\|_F/\sigma\to0\).
	
	\section{Associative-Memory Setup and Gradient Derivation}
	\label{app:associative-gradient}
	
	\subsection{Gradient of the Associative Memory Model}
	
	Recall that
	\[
	\overline W=\widetilde E^\top W E,
	\qquad
	S_{ij} =
	\frac{\exp(\overline W_{ij})}
	{\sum_{k=1}^K\exp(\overline W_{kj})}.
	\]
	For task \(j\), the cross-entropy loss is
	\[
	\ell_j(W) =
	-\log S_{jj} =
	-\widetilde E_j^\top W E_j
	+
	\log\!\left(
	\sum_{k=1}^K
	e^{\widetilde E_k^\top W E_j}
	\right).
	\]
	Differentiating gives
	\begin{align*}
		\nabla_W\ell_j(W)
		& =
		-\widetilde E_jE_j^\top
		+
		\sum_{k=1}^K
		S_{kj}\widetilde E_kE_j^\top \\
		& =
		-\left(
		\widetilde E_j
		-
		\sum_{k=1}^K
		S_{kj}\widetilde E_k
		\right)E_j^\top.
	\end{align*}
	Hence, with \(P=\operatorname{diag}(p_1,\ldots,p_K)\),
	\begin{align*}
		\nabla\mathcal L(W)
		& =
		-\sum_{j=1}^K
		\sum_{k=1}^K
		p_j(\delta_{kj}-S_{kj})
		\widetilde E_kE_j^\top \\
		& =
		-\widetilde E(I-S)PE^\top.
	\end{align*}
	Similarly, for a mini-batch with empirical frequencies
	\(q(t)=(q_1(t),\ldots,q_K(t))\) and
	\(Q_t=\operatorname{diag}(q(t))\),
	\[
	\nabla\mathcal L(W_t;\mathcal B_t) =
	-\widetilde E(I-S_t)Q_tE^\top.
	\]
	Since the partial polar factor is equivariant under left and right
	orthogonal transformations,
	\[
	\operatorname{msgn}\!\left(
	\widetilde EAE^\top
	\right) =
	\widetilde E\operatorname{msgn}(A)E^\top,
	\]
	the task-coordinate Muon update is
	\[
	\overline W_{t+1} =
	\overline W_t
	+
	\eta\operatorname{msgn}\!\left(
	(I-S_t)Q_t
	\right).
	\]
	
	\section{Two-Task Finite-Batch Dynamics and Refinements}
	\label{app:two-task-proof}
	
	The two-task model is retained as an exactly solvable specialization of the finite-batch sampling mechanism. Because its stochastic gradient is rank one, it should not be interpreted as evidence for genuinely higher-rank spectral effects. Throughout this appendix, we assume \(K=2\) and \(p_1\ge p_2=p_{\min}>0\). Each batch consists of \(B\) independent samples from \((p_1,p_2)\), independently across iterations. Subsections~C.1--C.4 establish the exact finite-\(B\) statements used in Theorem~\ref{thm:two-task-coverage}; the final subsection records large-batch and momentum-averaging refinements rather than a second phase-transition claim. Thus
	\[
	Bq_1(t)\sim\operatorname{Binom}(B,p_1),
	\qquad
	q_2(t)=1-q_1(t).
	\]
	
	\subsection{Exact Reduction to the Margin Dynamics}
	\label{app:two-task-gradient}
	
	Define
	\[
	\Delta_1(t) =
	(\overline W_t)_{11}-(\overline W_t)_{21},
	\qquad
	\Delta_2(t) =
	(\overline W_t)_{22}-(\overline W_t)_{12},
	\]
	and
	\[
	u_i(t)=\frac{1}{1+e^{\Delta_i(t)}}.
	\]
	For two classes,
	\[
	1-S_{11}(t)=S_{21}(t)=u_1(t),
	\qquad
	1-S_{22}(t)=S_{12}(t)=u_2(t).
	\]
	Therefore
	\begin{align*}
		(I-S_t)Q_t
		& =
		\begin{bmatrix}
			q_1(t)(1-S_{11}(t)) & -q_2(t)S_{12}(t) \\-q_1(t)S_{21}(t)&q_2(t)(1-S_{22}(t))
		\end{bmatrix}
		\\
		& =
		\begin{bmatrix}
			1 \\-1
		\end{bmatrix}
		\begin{bmatrix}
			q_1(t)u_1(t) & -q_2(t)u_2(t)
		\end{bmatrix}.
	\end{align*}
	This matrix has rank one whenever at least one \(q_i(t)\) is nonzero, which always holds. For a nonzero rank-one matrix \(ab^\top\),
	\[
	\operatorname{msgn}(ab^\top) =
	\frac{a}{\|a\|_2}
	\frac{b^\top}{\|b\|_2}.
	\]
	Since \(\|(1,-1)^\top\|_2=\sqrt{2}\), we obtain
	\[
	\operatorname{msgn}\!\left((I-S_t)Q_t\right) =
	\begin{bmatrix}
		1/\sqrt{2} \\
		-1/\sqrt{2}
	\end{bmatrix}
	\frac{
		\begin{bmatrix}
			q_1(t)u_1(t) & -q_2(t)u_2(t)
		\end{bmatrix}
	}{
		\sqrt{q_1^2(t)u_1^2(t)+q_2^2(t)u_2^2(t)}
	}.
	\]
	Taking the corresponding column differences gives
	\begin{equation}
		\Delta_i(t+1)-\Delta_i(t) =
		\sqrt{2}\eta
		\frac{q_i(t)u_i(t)}
		{\sqrt{q_1^2(t)u_1^2(t)+q_2^2(t)u_2^2(t)}}.
		\label{eq:app-two-task-margins}
	\end{equation}
	
	\subsection{Full-batch Dynamics}
	
	Set \(q_i(t)=p_i\) and take the continuous-time limit
	\(\tau=\eta t\). Then
	\[
	\dot\Delta_i =
	\sqrt{2}\,
	\frac{p_i u_i}
	{\sqrt{p_1^2u_1^2+p_2^2u_2^2}}.
	\]
	Dividing the two equations gives
	\[
	\frac{d\Delta_1}{d\Delta_2} =
	\frac{p_1u_1}{p_2u_2} =
	\frac{p_1(1+e^{\Delta_2})}
	{p_2(1+e^{\Delta_1})}.
	\]
	Hence
	\[
	p_2(1+e^{\Delta_1})\,d\Delta_1 =
	p_1(1+e^{\Delta_2})\,d\Delta_2.
	\]
	Using \(\Delta_1(0)=\Delta_2(0)=0\),
	\begin{equation}
		p_2(\Delta_1+e^{\Delta_1}-1) =
		p_1(\Delta_2+e^{\Delta_2}-1).
		\label{eq:app-fullbatch-invariant}
	\end{equation}
	Moreover,
	\[
	\sqrt{2}
	\le
	\frac{d}{d\tau}(\Delta_1+\Delta_2) =
	\sqrt{2}\,
	\frac{p_1u_1+p_2u_2}
	{\sqrt{p_1^2u_1^2+p_2^2u_2^2}}
	\le2.
	\]
	Thus \(\Delta_1+\Delta_2\to\infty\). Therefore, \eqref{eq:app-fullbatch-invariant} implies that both margins diverge, and the exponential terms dominate:
	\[
	p_2e^{\Delta_1}
	\sim
	p_1e^{\Delta_2}.
	\]
	Therefore
	\[
	\Delta_1-\Delta_2
	\longrightarrow
	\log\frac{p_1}{p_2},
	\qquad
	p_1u_1\sim p_2u_2.
	\]
	Substituting back into the ODE yields
	\[
	\dot\Delta_i\longrightarrow1,
	\qquad
	\frac{\Delta_i(\tau)}{\tau}\longrightarrow1.
	\]
	
	\subsection{Small Mini-batches: Occurrence-limited Imbalance}
	
	Let
	\[
	A_t=\{q_1(t)=1\},
	\qquad
	\mathbb P(A_t)=p_1^B.
	\]
	On \(A_t\),
	\[
	\Delta_1(t+1)-\Delta_1(t)=\sqrt{2}\eta,
	\qquad
	\Delta_2(t+1)-\Delta_2(t)=0.
	\]
	On \(A_t^c\), both increments are nonnegative and their difference is at
	least \(-\sqrt{2}\eta\). Hence
	\[
	D(t+1)-D(t)
	\ge
	\sqrt{2}\eta\bigl(2\mathbf{1}_{A_t}-1\bigr),
	\qquad
	D(t):=\Delta_1(t)-\Delta_2(t).
	\]
	By the strong law of large numbers,
	\[
	\liminf_{t\to\infty}\frac{D(t)}{\eta t}
	\ge
	\sqrt{2}(2p_1^B-1)
	\qquad\text{a.s.}
	\]
	Thus \(D(t)\to+\infty\) when \(p_1^B>1/2\).
	
	We can then identify the exact asymptotic rates. Since
	\[
	\frac{u_1(t)}{u_2(t)} =
	\frac{1+e^{\Delta_2(t)}}{1+e^{\Delta_1(t)}}
	\longrightarrow0,
	\]
	every batch containing task-2 asymptotically allocates the normalized
	update to task-2, while a pure task-1 batch allocates it to task-1.
	Consequently,
	\[
	\frac{\Delta_1(t)}{\eta t}
	\xrightarrow{\mathrm{a.s.}}
	\sqrt{2}p_1^B,
	\qquad
	\frac{\Delta_2(t)}{\eta t}
	\xrightarrow{\mathrm{a.s.}}
	\sqrt{2}(1-p_1^B).
	\]
	The transition \(p_1^B=1/2\) is equivalent to
	\[
	B_{\mathrm{bal}} =
	\frac{\log2}{-\log p_1} =
	\frac{\log2}{-\log(1-p_{\min})}.
	\]
	
	\subsection{Large Mini-batches: Recovery of Balanced Growth}
	
	Assume
	\[
	p_1^B<\frac{1}{2}.
	\]
	We first establish an exact stability result for the discrete-time gap
	\[
	D(t):=\Delta_1(t)-\Delta_2(t).
	\]
	
	\paragraph{Restoring drift in the tails.}
	
	From \eqref{eq:app-two-task-margins},
	\begin{equation}
		D(t+1)-D(t) =
		\sqrt{2}\eta
		\frac{
			q_1(t)u_1(t)-q_2(t)u_2(t)
		}{
			\sqrt{
				q_1^2(t)u_1^2(t)+q_2^2(t)u_2^2(t)
			}
		}.
		\label{eq:app-gap-increment}
	\end{equation}
	In particular,
	\[
	|D(t+1)-D(t)|
	\le
	\sqrt{2}\eta.
	\]
	
	Since both margins start from zero and have nonnegative increments,
	\[
	\Delta_i(t)\ge 0,
	\qquad
	\forall t.
	\]
	If \(D(t)\ge 0\), then
	\begin{align*}
		\frac{u_1(t)}{u_2(t)}
		& =
		\frac{1+e^{\Delta_2(t)}}{1+e^{\Delta_1(t)}}      \\
		& =
		\frac{1+e^{\Delta_2(t)}}{1+e^{\Delta_2(t)+D(t)}} \\
		& \le
		2e^{-D(t)}.
	\end{align*}
	Thus \(u_1/u_2\to 0\) uniformly over the common offset of the two margins as \(D\to+\infty\).
	
	If \(q_2(t)=0\), then \(q_1(t)=1\) and \(D(t+1)-D(t)=\sqrt{2}\eta\).
	
	If \(q_2(t)>0\), then \(q_2(t)\ge 1/B\). Since the batch proportions take values in a finite set, \eqref{eq:app-gap-increment} and the bound above imply, uniformly over all batches with \(q_2(t)>0\),
	\[
	D(t+1)-D(t)
	\longrightarrow
	-\sqrt{2}\eta
	\qquad
	\text{as }D(t)\to+\infty.
	\]
	Consequently,
	\begin{equation}
		\mathbb E\!\left[
		D(t+1)-D(t)
		\mid\mathcal{F}_t
		\right]
		\longrightarrow
		\sqrt{2}\eta\left(2p_1^B-1\right) <0
		\qquad
		\text{as }D(t)\to+\infty.
		\label{eq:app-positive-tail-drift}
	\end{equation}
	
	Similarly, if \(D(t)\le 0\), then
	\[
	\frac{u_2(t)}{u_1(t)}
	\le
	2e^{D(t)} =
	2e^{-|D(t)|}.
	\]
	Hence, as \(D(t)\to-\infty\), every batch containing task-1 satisfies
	\[
	D(t+1)-D(t)
	\longrightarrow
	\sqrt{2}\eta,
	\]
	whereas a pure task-2 batch satisfies
	\[
	D(t+1)-D(t) =
	-\sqrt{2}\eta.
	\]
	Therefore,
	\begin{equation}
		\mathbb E\!\left[
		D(t+1)-D(t)
		\mid\mathcal{F}_t
		\right]
		\longrightarrow
		\sqrt{2}\eta\left(1-2p_2^B\right) >0
		\qquad
		\text{as }D(t)\to-\infty,
		\label{eq:app-negative-tail-drift}
	\end{equation}
	where we used \(p_2^B\le p_1^B<1/2\).
	
	It follows that there exist constants \(R,\delta>0\) such that
	\[
	\mathbb E\!\left[
	D(t+1)-D(t)
	\mid\mathcal{F}_t
	\right]
	\le -\delta
	\qquad
	\text{when }D(t)\ge R,
	\]
	and
	\[
	\mathbb E\!\left[
	D(t+1)-D(t)
	\mid\mathcal{F}_t
	\right]
	\ge \delta
	\qquad
	\text{when }D(t)\le -R.
	\]
	
	\paragraph{Exponential moment bound.}
	
	Let
	\[
	X_t:=D(t+1)-D(t),
	\qquad
	L:=\sqrt{2}\eta,
	\]
	so that \(|X_t|\le L\). Increasing \(R\) if necessary, assume \(R>L\).
	
	When \(D(t)\ge R\), the sign of \(D(t)\) cannot change in one step, and hence
	\[
	|D(t+1)|=D(t)+X_t.
	\]
	For any \(\theta>0\),
	\[
	e^{\theta X_t}
	\le
	1+\theta X_t
	+
	\frac{\theta^2L^2}{2}e^{\theta L}.
	\]
	Therefore, for sufficiently small \(\theta>0\),
	\[
	\mathbb E\!\left[
	e^{\theta X_t}
	\mid\mathcal{F}_t
	\right]
	\le
	1-\frac{\theta\delta}{2}
	\qquad
	\text{when }D(t)\ge R.
	\]
	The same argument applied to \(-X_t\) gives
	\[
	\mathbb E\!\left[
	e^{-\theta X_t}
	\mid\mathcal{F}_t
	\right]
	\le
	1-\frac{\theta\delta}{2}
	\qquad
	\text{when }D(t)\le -R.
	\]
	
	Define
	\[
	V(D):=e^{\theta|D|}.
	\]
	There then exist constants \(\rho<1\) and \(b_0<\infty\) such that
	\[
	\mathbb E\!\left[
	V(D(t+1))
	\mid\mathcal{F}_t
	\right]
	\le
	\rho V(D(t))
	+
	b_0\mathbf{1}_{\{|D(t)|\le R\}}.
	\]
	Iterating this inequality yields
	\begin{equation}
		\sup_{t\ge 0}
		\mathbb Ee^{\theta|D(t)|} <\infty.
		\label{eq:app-gap-exponential-moment}
	\end{equation}
	
	Hence, for every \(\varepsilon>0\),
	\[
	\mathbb P\!\left(
	|D(t)|>\varepsilon t
	\right)
	\le
	b_\theta e^{-\theta\varepsilon t},
	\]
	and therefore
	\[
	\sum_{t=1}^{\infty}
	\mathbb P\!\left(
	|D(t)|>\varepsilon t
	\right) <\infty.
	\]
	By the Borel--Cantelli lemma,
	\begin{equation}
		\frac{D(t)}{t}
		\longrightarrow
		0
		\qquad
		\text{a.s.}
		\label{eq:app-balanced-gap}
	\end{equation}
	This proves vanishing asymptotic gap growth whenever \(p_1^B<1/2\).
	
	\paragraph{Late-time effective drift.}
	
	We next refine the description of the gap in the well-sampled regime. The remainder of this subsection concerns a late-time, small-step approximation rather than an exact invariant law of the discrete process.
	
	First note that both margins diverge almost surely. A pure task-\(i\) batch occurs independently with probability \(p_i^B>0\), and on every such batch,
	\[
	\Delta_i(t+1)-\Delta_i(t) =
	\sqrt{2}\eta.
	\]
	The strong law therefore implies
	\[
	\liminf_{t\to\infty}
	\frac{\Delta_i(t)}{\eta t}
	\ge
	\sqrt{2}p_i^B >0
	\qquad
	\text{a.s.}
	\]
	Consequently,
	\[
	u_i(t) =
	e^{-\Delta_i(t)}\left(1+o(1)\right).
	\]
	
	Define the centered gap
	\[
	\gamma(t)
	:=
	D(t)-\log\frac{p_1}{p_2}.
	\]
	Then
	\[
	\frac{p_1u_1(t)}{p_2u_2(t)} =
	e^{-\gamma(t)}\left(1+o(1)\right).
	\]
	
	For
	\[
	a=\frac{q_1}{p_1},
	\qquad
	b=\frac{q_2}{p_2},
	\]
	define
	\begin{equation}
		\varphi(\gamma;a,b)
		:=
		\sqrt{2}
		\frac{
			ae^{-\gamma}-b
		}{
			\sqrt{
				a^2e^{-2\gamma}+b^2
			}
		}.
		\label{eq:app-gap-drift-function}
	\end{equation}
	Then, uniformly for \(\gamma\) in compact sets,
	\begin{equation}
		\frac{\gamma(t+1)-\gamma(t)}{\eta} =
		\varphi\!\left(
		\gamma(t);
		\frac{q_1(t)}{p_1},
		\frac{q_2(t)}{p_2}
		\right)
		+
		o(1)
		\qquad
		\text{as }t\to\infty.
		\label{eq:app-late-gap-recursion}
	\end{equation}
	
	Define the corresponding late-time mean drift
	\begin{equation}
		\psi_B(\gamma)
		:=
		\mathbb E\!\left[
		\varphi\!\left(
		\gamma;
		\frac{q_1}{p_1},
		\frac{q_2}{p_2}
		\right)
		\right].
		\label{eq:app-late-mean-drift}
	\end{equation}
	
	For \(a,b>0\),
	\[
	\partial_\gamma\varphi(\gamma;a,b) =
	-\sqrt{2}
	\frac{
		ab e^{-\gamma}\left(ae^{-\gamma}+b\right)
	}{
		\left(a^2e^{-2\gamma}+b^2\right)^{3/2}
	} <0.
	\]
	If \(b=0\), then \(\varphi(\gamma;a,0)=\sqrt{2}\), whereas if \(a=0\), \(\varphi(\gamma;0,b)=-\sqrt{2}\). Thus \(\varphi\) is nonincreasing in \(\gamma\) for every batch and strictly decreasing for every mixed batch. Since mixed batches occur with positive probability, \(\psi_B\) is strictly decreasing.
	
	Moreover,
	\[
	\lim_{\gamma\to+\infty}
	\psi_B(\gamma) =
	\sqrt{2}\left(2p_1^B-1\right) <0,
	\]
	and
	\[
	\lim_{\gamma\to-\infty}
	\psi_B(\gamma) =
	\sqrt{2}\left(1-2p_2^B\right) >0.
	\]
	Hence \(\psi_B\) has a unique zero, denoted by
	\[
	\gamma_B^*.
	\]
	
	Suppose now that
	\[
	Bp_{\min}\to\infty.
	\]
	Since
	\[
	\operatorname{Var}\!\left(
	\frac{q_i}{p_i}
	\right) =
	\frac{1-p_i}{Bp_i},
	\]
	we have
	\[
	\frac{q_i}{p_i}
	\xrightarrow{\mathbb P}
	1.
	\]
	Since \(|\varphi|\le\sqrt{2}\), convergence in probability together with uniform boundedness implies
	\[
	\psi_B(\gamma)
	\longrightarrow
	\psi_\infty(\gamma)
	:=
	\sqrt{2}
	\frac{
		e^{-\gamma}-1
	}{
		\sqrt{
			e^{-2\gamma}+1
		}
	}
	\]
	for every fixed \(\gamma\).
	
	The limiting drift is strictly decreasing and has the unique zero \(\gamma=0\). Fix any \(\delta>0\). Since
	\[
	\psi_\infty(-\delta)>0,
	\qquad
	\psi_\infty(\delta)<0,
	\]
	the same inequalities hold for \(\psi_B\) when \(Bp_{\min}\) is sufficiently large. By monotonicity,
	\[
	|\gamma_B^*|<\delta.
	\]
	Therefore,
	\begin{equation}
		\gamma_B^*
		\longrightarrow
		0
		\qquad
		\text{as }Bp_{\min}\to\infty.
		\label{eq:app-large-batch-equilibrium}
	\end{equation}
	
	For fixed \(p_1,p_2>0\), this convergence can be sharpened. Define
	\[
	h(q)
	:=
	\varphi\!\left(
	0;
	\frac{q}{p_1},
	\frac{1-q}{p_2}
	\right).
	\]
	A direct calculation gives
	\[
	h(p_1)=0,
	\qquad
	h'(p_1)=\frac{1}{p_1p_2},
	\qquad
	h''(p_1) =
	\frac{p_1-p_2}{p_1^2p_2^2}.
	\]
	Since
	\[
	\mathbb E[q_1-p_1]=0,
	\qquad
	\operatorname{Var}(q_1) =
	\frac{p_1p_2}{B},
	\]
	a second-order expansion gives
	\[
	\psi_B(0) =
	\frac{p_1-p_2}
	{2Bp_1p_2}
	+
	o\!\left(\frac1B\right).
	\]
	Furthermore,
	\[
	\psi_B'(0) =
	-1+o(1).
	\]
	Expanding the zero of \(\psi_B\) around zero therefore yields
	\begin{equation}
		\gamma_B^* =
		\frac{p_1-p_2}
		{2Bp_1p_2}
		+
		o\!\left(\frac1B\right).
		\label{eq:app-large-batch-bias}
	\end{equation}
	
	Thus mini-batch fluctuations induce a small positive shift of the effective gap when \(p_1>p_2\), and this shift vanishes in the large-batch limit.
	
	\paragraph{Local diffusion approximation.}
	
	Let
	\[
	\lambda_B
	:=
	-\psi_B'(\gamma_B^*) >0
	\]
	and
	\[
	v_B
	:=
	\operatorname{Var}\!\left[
	\varphi\!\left(
	\gamma_B^*;
	\frac{q_1}{p_1},
	\frac{q_2}{p_2}
	\right)
	\right].
	\]
	Writing
	\[
	x(t):=\gamma(t)-\gamma_B^*,
	\]
	the local recursion suggested by \eqref{eq:app-late-gap-recursion} is
	\[
	x(t+1)-x(t)
	\simeq
	-\eta\lambda_B x(t)
	+
	\eta\xi_t,
	\]
	where
	\[
	\mathbb E[\xi_t]\simeq 0,
	\qquad
	\operatorname{Var}(\xi_t)\simeq v_B.
	\]
	
	In the well-sampled regime,
	\[
	\lambda_B=1+o(1).
	\]
	Moreover, using
	\[
	h'(p_1)=\frac{1}{p_1p_2}
	\]
	and
	\[
	\operatorname{Var}(q_1)=\frac{p_1p_2}{B},
	\]
	the delta method gives
	\begin{equation}
		v_B =
		\frac{1+o(1)}
		{Bp_1p_2}.
		\label{eq:app-large-batch-noise}
	\end{equation}
	
	On the rescaled time
	\[
	\tau=\eta t,
	\]
	the corresponding local diffusion approximation is
	\begin{equation}
		dx
		\simeq
		-\lambda_Bx\,d\tau
		+
		\sqrt{\eta v_B}\,
		d\mathcal W_\tau.
		\label{eq:app-large-batch-diffusion}
	\end{equation}
	The associated Ornstein--Uhlenbeck stationary variance is
	\[
	\frac{\eta v_B}{2\lambda_B} =
	\frac{\eta}{2Bp_1p_2}
	\left(1+o(1)\right).
	\]
	Thus, within the late-time local diffusion approximation,
	\begin{equation}
		D =
		\log\frac{p_1}{p_2}
		+
		\gamma_B^*
		+
		O_{\mathbb P}\!\left(
		\sqrt{
			\frac{\eta}{Bp_1p_2}
		}
		\right).
		\label{eq:app-large-batch-concentration}
	\end{equation}
	Together with \eqref{eq:app-large-batch-equilibrium}, this recovers the full-batch gap in the well-sampled, small-step limit.
	
	\subsection{Momentum: Averaging Across Steps and Tracking}
	\label{app:two-task-momentum}
	
	We now consider
	\[
	g_t =
	\left(
	q_1(t)u_1(t),
	q_2(t)u_2(t)
	\right)^\top,
	\qquad
	m_t =
	\beta m_{t-1}
	+
	(1-\beta)g_t,
	\]
	with
	\[
	0<\beta<1,
	\qquad
	\epsilon:=1-\beta.
	\]
	The normalized margin update is
	\[
	\Delta(t+1)-\Delta(t) =
	\sqrt{2}\eta
	\frac{m_t}{\|m_t\|_2}.
	\]
	
	We first freeze the margins and characterize how momentum averages task frequencies across iterations. We then restore the slow evolution of the margins.
	
	\paragraph{Frozen margins and the EMA task frequency.}
	
	Fix \(u_1,u_2>0\). After initialization transients, the stationary momentum admits the representation
	\[
	m_t =
	(1-\beta)
	\sum_{k=0}^{\infty}
	\beta^k g_{t-k}.
	\]
	Define the exponentially averaged task-1 frequency
	\begin{equation}
		Z_t
		:=
		(1-\beta)
		\sum_{k=0}^{\infty}
		\beta^k q_1(t-k).
		\label{eq:app-ema-frequency}
	\end{equation}
	Since \(q_2(t)=1-q_1(t)\) and
	\[
	(1-\beta)
	\sum_{k=0}^{\infty}
	\beta^k =
	1,
	\]
	we obtain the exact frozen-margin representation
	\begin{equation}
		m_t =
		\begin{pmatrix}
			u_1Z_t \\
			u_2(1-Z_t)
		\end{pmatrix}.
		\label{eq:app-momentum-frequency-representation}
	\end{equation}
	
	The first two moments of \(Z_t\) are
	\begin{align}
		\mathbb E[Z_t]
		& =
		p_1,
		\label{eq:app-ema-frequency-mean} \\
		\operatorname{Var}(Z_t)
		& =
		(1-\beta)^2
		\sum_{k=0}^{\infty}
		\beta^{2k}
		\frac{p_1p_2}{B}
		\notag                            \\
		& =
		\frac{1-\beta}{1+\beta}
		\frac{p_1p_2}{B}.
		\label{eq:app-ema-frequency-var}
	\end{align}
	
	It is therefore natural to define the variance-equivalent batch size
	\begin{equation}
		B_{\mathrm{eff}}
		:=
		B\frac{1+\beta}{1-\beta},
		\label{eq:app-effective-batch-size}
	\end{equation}
	for which
	\[
	\operatorname{Var}(Z_t) =
	\frac{p_1p_2}{B_{\mathrm{eff}}}.
	\]
	
	Since both tasks have positive probability and \(\beta>0\),
	\[
	0<Z_t<1
	\qquad
	\text{a.s.}
	\]
	under the stationary representation. Thus EMA averaging assigns positive weight to both tasks almost surely even when individual tasks rarely appear in a mini-batch.
	
	\paragraph{Momentum-induced effective gap.}
	
	Define again
	\[
	\gamma =
	D-\log\frac{p_1}{p_2}.
	\]
	At late time,
	\[
	\frac{p_1u_1}{p_2u_2} =
	e^{-\gamma}\left(1+o(1)\right).
	\]
	Using \eqref{eq:app-momentum-frequency-representation}, the normalized frozen-buffer gap increment is therefore asymptotically
	\[
	\eta
	\varphi\!\left(
	\gamma;
	\frac{Z_t}{p_1},
	\frac{1-Z_t}{p_2}
	\right),
	\]
	where \(\varphi\) is defined in \eqref{eq:app-gap-drift-function}.
	
	Define the frozen-momentum mean drift
	\begin{equation}
		\psi_{B,\beta}(\gamma)
		:=
		\mathbb E\!\left[
		\varphi\!\left(
		\gamma;
		\frac{Z}{p_1},
		\frac{1-Z}{p_2}
		\right)
		\right],
		\label{eq:app-momentum-mean-drift}
	\end{equation}
	where \(Z\) has the stationary marginal distribution defined by \eqref{eq:app-ema-frequency}.
	
	Since \(0<Z<1\) almost surely,
	\[
	\partial_\gamma
	\varphi\!\left(
	\gamma;
	\frac{Z}{p_1},
	\frac{1-Z}{p_2}
	\right) <0
	\qquad
	\text{a.s.}
	\]
	Hence \(\psi_{B,\beta}\) is strictly decreasing. Moreover,
	\[
	\lim_{\gamma\to-\infty}
	\psi_{B,\beta}(\gamma) =
	\sqrt{2},
	\]
	whereas
	\[
	\lim_{\gamma\to+\infty}
	\psi_{B,\beta}(\gamma) =
	-\sqrt{2}.
	\]
	Therefore there exists a unique \(\gamma_{B,\beta}^*\) such that
	\[
	\psi_{B,\beta}(\gamma_{B,\beta}^*)=0.
	\]
	
	Unlike the single-batch mean drift, the stationary frozen-momentum drift has no analogue of the threshold \(p_1^B=1/2\): for every \(\beta>0\), both tasks receive positive weight almost surely in the stationary EMA, producing restoring drift in both tails. This is the basic mechanism by which momentum suppresses task-frequency fluctuations across steps.
	
	\paragraph{Recovery of the full-batch gap.}
	
	From \eqref{eq:app-ema-frequency-var},
	\[
	\operatorname{Var}\!\left(
	\frac{Z}{p_1}
	\right) =
	\frac{p_2}{B_{\mathrm{eff}}p_1},
	\]
	and
	\[
	\operatorname{Var}\!\left(
	\frac{1-Z}{p_2}
	\right) =
	\frac{p_1}{B_{\mathrm{eff}}p_2}.
	\]
	Hence, if
	\begin{equation}
		B_{\mathrm{eff}}p_{\min}
		\longrightarrow
		\infty,
		\label{eq:app-temporal-coverage-condition}
	\end{equation}
	then
	\[
	\frac{Z}{p_1}
	\xrightarrow{\mathbb P}
	1,
	\qquad
	\frac{1-Z}{p_2}
	\xrightarrow{\mathbb P}
	1.
	\]
	Since \(|\varphi|\le\sqrt{2}\),
	\[
	\psi_{B,\beta}(\gamma)
	\longrightarrow
	\psi_\infty(\gamma) =
	\sqrt{2}
	\frac{
		e^{-\gamma}-1
	}{
		\sqrt{
			e^{-2\gamma}+1
		}
	}
	\]
	for every fixed \(\gamma\).
	
	As before, monotonicity and uniqueness of the zero of \(\psi_\infty\) imply
	\begin{equation}
		\gamma_{B,\beta}^*
		\longrightarrow
		0
		\qquad
		\text{as }
		B_{\mathrm{eff}}p_{\min}\to\infty.
		\label{eq:app-momentum-equilibrium-limit}
	\end{equation}
	
	Since
	\[
	B_{\mathrm{eff}} =
	B\frac{1+\beta}{1-\beta},
	\]
	condition~(\ref{eq:app-temporal-coverage-condition}) is equivalent, up to the bounded factor \(1+\beta\), to
	\[
	\frac{1-\beta}{Bp_{\min}}
	\longrightarrow
	0.
	\]
	
	For fixed \(p_1,p_2>0\), the finite-sampling bias can be sharpened by a second-order expansion. Using the function \(h\) defined in the previous subsection,
	\[
	\mathbb E[h(Z)] =
	\frac{1}{2}
	h''(p_1)
	\operatorname{Var}(Z)
	+
	o\!\left(B_{\mathrm{eff}}^{-1}\right).
	\]
	Therefore,
	\[
	\psi_{B,\beta}(0) =
	\frac{p_1-p_2}
	{2B_{\mathrm{eff}}p_1p_2}
	+
	o\!\left(B_{\mathrm{eff}}^{-1}\right).
	\]
	Moreover,
	\[
	\psi_{B,\beta}'(0) =
	-1+o(1).
	\]
	Expanding the unique zero of \(\psi_{B,\beta}\) around zero gives
	\begin{equation}
		\gamma_{B,\beta}^* =
		\frac{p_1-p_2}
		{2B_{\mathrm{eff}}p_1p_2}
		+
		o\!\left(B_{\mathrm{eff}}^{-1}\right).
		\label{eq:app-momentum-bias}
	\end{equation}
	Equivalently,
	\[
	\gamma_{B,\beta}^* =
	\frac{1-\beta}{1+\beta}
	\frac{p_1-p_2}{2Bp_1p_2}
	+
	o\!\left(B_{\mathrm{eff}}^{-1}\right).
	\]
	Thus momentum suppresses the finite-sampling displacement of the effective gap according to \(B_{\mathrm{eff}}\).
	
	\paragraph{Tracking slowly evolving margins.}
	
	We now return to the actual dynamics, in which \(u_i(t)\) evolve over time. The EMA has a memory length of order
	\[
	\epsilon^{-1} =
	(1-\beta)^{-1}
	\]
	iterations. Over one iteration, the margins move by \(O(\eta)\). In the late-time balanced regime, the relative variation of the mean gradient over one iteration is therefore also \(O(\eta)\). Across one EMA window, this produces a relative tracking error of order
	\begin{equation}
		O\!\left(
		\frac{\eta}{\epsilon}
		\right).
		\label{eq:app-momentum-tracking-error}
	\end{equation}
	
	On the other hand, the frozen-buffer fluctuation scale follows from \eqref{eq:app-ema-frequency-var}. Near balanced growth, where
	\[
	p_1u_1\asymp p_2u_2,
	\]
	we obtain
	\begin{equation}
		\frac{
			\|m_t-\mathbb E[m_t]\|_2
		}{
			\|\mathbb E[m_t]\|_2
		} =
		O_{\mathbb P}\!\left(
		\sqrt{
			\frac{\epsilon}{Bp_{\min}}
		}
		\right).
		\label{eq:app-momentum-relative-fluctuation}
	\end{equation}
	
	Thus the two scale-separation requirements are
	\begin{equation}
		\frac{\eta}{\epsilon}\ll 1,
		\qquad
		\frac{\epsilon}{Bp_{\min}}\ll 1.
		\label{eq:app-momentum-two-conditions}
	\end{equation}
	Equivalently,
	\[
	\eta\ll 1-\beta\ll Bp_{\min}.
	\]
	The first condition ensures that the EMA tracks the slowly evolving gradient direction, while the second ensures that momentum averaging provides sufficiently small relative task-frequency fluctuations.
	
	\paragraph{Long-time diffusion with momentum.}
	
	Momentum reduces the instantaneous variance of the buffer but also introduces serial correlations. We now examine their combined effect on the leading long-time diffusion coefficient.
	
	Freeze the margins and write
	\[
	\bar g =
	\left(
	p_1u_1,
	p_2u_2
	\right)^\top,
	\qquad
	n_t:=m_t-\bar g.
	\]
	The fresh-gradient covariance is
	\[
	\Sigma =
	\frac{p_1p_2}{B}
	\begin{bmatrix}
		u_1^2   & -u_1u_2 \\
		-u_1u_2 & u_2^2
	\end{bmatrix}.
	\]
	The stationary EMA covariance satisfies
	\[
	\operatorname{Cov}(n_t) =
	\frac{1-\beta}{1+\beta}\Sigma.
	\]
	More generally,
	\[
	\operatorname{Cov}(n_t,n_{t+k}) =
	\beta^{|k|}
	\frac{1-\beta}{1+\beta}
	\Sigma.
	\]
	Consequently,
	\begin{align}
		\sum_{k\in\mathbb{Z}}
		\operatorname{Cov}(n_0,n_k)
		& =
		\frac{1-\beta}{1+\beta}
		\left(
		1+2\sum_{k=1}^{\infty}\beta^k
		\right)\Sigma
		\notag \\
		& =
		\Sigma.
		\label{eq:app-momentum-integrated-cov}
	\end{align}
	Thus the reduction in instantaneous variance is exactly compensated, at the covariance level, by the increased correlation time. To translate this into fluctuations of the gap, define
	\[
	\ell(m)
	:=
	\sqrt{2}
	\frac{m_1-m_2}{\|m\|_2}.
	\]
	Then
	\[
	D(t+1)-D(t) =
	\eta\,\ell(m_t).
	\]
	Linearizing around \(\bar g\),
	\[
	\ell(m_t)-\ell(\bar g)
	\simeq
	\nabla\ell(\bar g)^\top n_t.
	\]
	Hence, by \eqref{eq:app-momentum-integrated-cov},
	\[
	\sum_{k\in\mathbb{Z}}
	\operatorname{Cov}\!\left(
	\ell(m_0),
	\ell(m_k)
	\right)
	\simeq
	\nabla\ell(\bar g)^\top
	\Sigma
	\nabla\ell(\bar g).
	\]
	
	At the full-batch balanced point
	\[
	p_1u_1=p_2u_2,
	\]
	a direct calculation gives
	\begin{equation}
		\nabla\ell(\bar g)^\top
		\Sigma
		\nabla\ell(\bar g) =
		\frac{1}{Bp_1p_2}.
		\label{eq:app-momentum-gap-cov}
	\end{equation}
	Since \(\gamma_{B,\beta}^*\to0\) in the well-sampled regime, the same coefficient holds to leading order around the momentum-induced equilibrium.
	
	Therefore, on the slow time
	\[
	\tau=\eta t,
	\]
	the local gap dynamics are approximated by
	\begin{equation}
		d\gamma
		\simeq
		-\left(
		\gamma-\gamma_{B,\beta}^*
		\right)d\tau
		+
		\sqrt{
			\frac{\eta}{Bp_1p_2}
		}\,
		d\mathcal W_\tau,
		\label{eq:app-momentum-gap-diffusion}
	\end{equation}
	up to corrections associated with finite effective sample size and tracking error. In particular, the local stationary fluctuation scale is
	\[
	O_{\mathbb P}\!\left(
	\sqrt{
		\frac{\eta}{Bp_1p_2}
	}
	\right),
	\]
	which matches the leading fluctuation scale of the momentum-free well-sampled regime.
	
	Combining the momentum-induced equilibrium, the tracking error, and the local diffusion scale gives the late-time approximation
	\begin{equation}
		D =
		\log\frac{p_1}{p_2}
		+
		\gamma_{B,\beta}^*
		+
		O\!\left(
		\frac{\eta}{1-\beta}
		\right)
		+
		O_{\mathbb P}\!\left(
		\sqrt{
			\frac{\eta}{Bp_1p_2}
		}
		\right).
		\label{eq:app-momentum-gap-concentration}
	\end{equation}
	
	Finally, consider a scale-separated regime satisfying
	\[
	\frac{Bp_{\min}}{1-\beta}
	\longrightarrow
	\infty,
	\qquad
	\frac{\eta}{1-\beta}
	\longrightarrow
	0.
	\]
	The first condition implies
	\[
	B_{\mathrm{eff}}p_{\min}
	\longrightarrow
	\infty,
	\]
	and hence, by \eqref{eq:app-momentum-equilibrium-limit},
	\[
	\gamma_{B,\beta}^*
	\longrightarrow
	0.
	\]
	Moreover, since \(p_1\ge 1/2\) and \(p_2=p_{\min}\),
	\[
	\frac{\eta}{Bp_1p_2} =
	O\!\left(
	\frac{\eta}{1-\beta}
	\frac{1-\beta}{Bp_{\min}}
	\right)
	\longrightarrow
	0.
	\]
	Thus the late-time local approximation gives
	\begin{equation}
		\Delta_1-\Delta_2 =
		\log\frac{p_1}{p_2}
		+
		o_{\mathbb P}(1),
		\qquad
		\frac{Bp_{\min}}{1-\beta}\to\infty,
		\qquad
		\frac{\eta}{1-\beta}\to0.
		\label{eq:app-momentum-limit}
	\end{equation}
	
	Momentum therefore acts as cross-step task aggregation. Under frozen margins, it replaces the noisy within-batch frequency by an exponentially averaged frequency with variance-equivalent batch size \(B_{\mathrm{eff}}=B(1+\beta)/(1-\beta)\). When the EMA also tracks the slowly evolving margins, this variance reduction drives the gap toward the full-batch value. At the same time, the reduction in instantaneous variance is compensated by increased correlation across steps, leaving the leading local long-time diffusion coefficient unchanged.
	
	\section{Higher-Rank Sampling Dynamics and Momentum Averaging}
	\label{app:hierarchical-memory}
	
	Throughout this section, the \(K=MC\) tasks are partitioned into \(M\) blocks of size \(C\). Tasks in block \(\mu\) have common population frequency
	\[
	p_j=\frac{\tilde p_\mu}{C},
	\qquad
	\sum_{\mu=1}^M \tilde p_\mu=1.
	\]
	The organization mirrors Proposition~\ref{prop:hierarchical-regimes}. Appendix~D.1 gives the exact occurrence-gated clock in the frequency-limited regime, and Appendix~D.3 gives the balanced full-batch contrast response. Appendices~D.4--D.5 then establish how the well-sampled response approaches that full-batch behavior and quantify the residual $O_{\mathbb P}((Bp_j)^{-1/2})$ coupling; the cancellation removes the direct contrast perturbation, leaving contrast-to-average coupling as the leading first-order effect. Appendix~D.6 treats momentum as averaging across steps with a separate tracking condition. A finite-horizon continuum theorem is retained in Appendix~D.7 as a technical refinement.
	
	\subsection{Sparse regime: exact occurrence-gated clock and block-symmetrized reference}
	\label{app:block-trajectory}
	
	We begin with the part of the hierarchical dynamics that is exact for arbitrary batch size. The partial polar factor has the same right nullspace as its input: if $Gx=0$, then $\operatorname{msgn}(G)x=0$. Indeed, for an SVD $G=U\Sigma V^\top$, $Gx=0$ implies $\Sigma V^\top x=0$; the coordinates of $V^\top x$ corresponding to positive singular values therefore vanish, which also gives $\operatorname{msgn}(G)x=0$.
	
	For task $j$, if $q_j(t)=0$ then
	\[
	(I-S_t)Q_te_j=0,
	\]
	so
	\begin{equation}
		(\overline W_{t+1}-\overline W_t)e_j=0.
		\label{eq:app-task-absence-freezes-column}
	\end{equation}
	Since $\|\operatorname{msgn}(G)\|_{\mathrm{op}}\le1$,
	\begin{equation}
		\|\overline W_Te_j-\overline W_0e_j\|_2
		\le
		\eta N_j(T),
		\qquad
		N_j(T):=\sum_{t<T}\mathbf 1\{q_j(t)>0\}.
		\label{eq:app-task-observation-clock-bound}
	\end{equation}
	For independent mini-batches, the indicators are i.i.d. Bernoulli with parameter
	\[
	r_j=1-(1-p_j)^B,
	\]
	and hence $N_j(T)/T\to r_j$ almost surely. The expansion $r_j=Bp_j+O((Bp_j)^2)$ in the sparse regime gives the trajectory-level frequency clock used in the main text.
	
	Next define the empirical block masses
	\[
	\pi_{\mu,t}=\sum_{j\in\mu}q_j(t)
	\]
	and the block-averaged count matrix
	\[
	Q_t^\parallel
	=
	\operatorname{diag}\!\left(
	\frac{\pi_{1,t}}{C}I_C,\ldots,
	\frac{\pi_{M,t}}{C}I_C
	\right).
	\]
	Starting from $\widetilde W_0=0$, consider
	\[
	\widetilde W_{t+1}
	=
	\widetilde W_t
	+
	\eta\operatorname{msgn}((I-\widetilde S_t)Q_t^\parallel).
	\]
	Both the initialization and $Q_t^\parallel$ are invariant under permutations within each block, so the trajectory remains block symmetric. The decomposition in the next subsection therefore applies at every step. Whenever $\pi_{\mu,t}>0$, the contrast singular value on $U_\mu$ is positive and the partial polar factor equals the identity there; when $\pi_{\mu,t}=0$, that contrast block is in the right nullspace. Hence
	\begin{equation}
		U_\mu^\top
		(\widetilde W_{t+1}-\widetilde W_t)
		U_\mu
		=
		\eta\mathbf 1\{\pi_{\mu,t}>0\}I_{C-1},
		\label{eq:app-reference-contrast-increment}
	\end{equation}
	Summing over steps gives
	\begin{equation}
		U_\mu^\top\widetilde W_TU_\mu
		=
		\eta N_\mu^{\mathrm{blk}}(T)I_{C-1},
		\qquad
		N_\mu^{\mathrm{blk}}(T):=
		\sum_{t<T}\mathbf 1\{\pi_{\mu,t}>0\}.
		\label{eq:app-reference-contrast-trajectory}
	\end{equation}
	This sampled block-symmetric process is an exact finite-$B$ benchmark. Appendix~D.7 records the stronger finite-horizon continuum refinement obtained after taking large expected task counts and a small step size.
	
	\subsection{Block decomposition at a symmetric reference state}
	\label{app:block-decomposition}
	
	For each block \(\mu\), let
	\[
	X_\mu\in\mathbb R^{C\times(C-1)}
	\]
	have orthonormal columns spanning the zero-sum subspace
	\[
	\left\{
	x\in\mathbb R^C:\mathbf 1_C^\top x=0
	\right\}.
	\]
	Let \(U_\mu\in\mathbb R^{K\times(C-1)}\) denote the embedding of \(X_\mu\) into block \(\mu\). Define the normalized block-average vectors
	\[
	(y_\mu)_i =
	\begin{cases}
		1/\sqrt C, & i\in\mu,          \\
		0,         & \text{otherwise},
	\end{cases}
	\]
	and set
	\[
	Y=(y_1,\ldots,y_M),
	\qquad
	\mathcal U=(U_1,\ldots,U_M,Y).
	\]
	Then \(\mathcal U\) is orthogonal.
	
	Consider a block-symmetric softmax matrix \(\widetilde S\), and write
	\[
	A:=I-\widetilde S.
	\]
	Block symmetry implies that each within-block contrast subspace is invariant under \(A\). Hence there exist scalars \(a_\mu\) and a matrix
	\(L\in\mathbb R^{M\times M}\) such that
	\begin{equation}
		\mathcal U^\top A\mathcal U =
		\begin{bmatrix}
			\operatorname{diag}\!\left(a_\mu I_{C-1}\right) & 0 \\
			0                                               & L
		\end{bmatrix}.
		\label{eq:app-A-block-decomposition}
	\end{equation}
	Equivalently, if \(d_\mu\) denotes the eigenvalue of \(\widetilde S\) on the within-block zero-sum subspace of block \(\mu\), then
	\[
	a_\mu=1-d_\mu.
	\]
	Along the block-symmetric full-batch trajectory considered below, \(a_\mu>0\).
	
	Now let
	\[
	Q^\parallel =
	\operatorname{diag}
	\left(
	r_1 I_C,\ldots,r_M I_C
	\right),
	\qquad
	r_\mu>0.
	\]
	In the basis \(\mathcal U\),
	\[
	\mathcal U^\top Q^\parallel\mathcal U =
	\begin{bmatrix}
		\operatorname{diag}\!\left(r_\mu I_{C-1}\right) & 0 \\
		0                                               & R
	\end{bmatrix},
	\qquad
	R:=\operatorname{diag}(r_1,\ldots,r_M).
	\]
	Therefore the block-symmetric reference gradient
	\[
	\widetilde G
	:=
	A Q^\parallel
	\]
	takes the form
	\begin{equation}
		\mathcal U^\top\widetilde G\mathcal U =
		\begin{bmatrix}
			\operatorname{diag}\!\left(\sigma_\mu I_{C-1}\right) & 0 \\
			0                                                    & H
		\end{bmatrix},
		\qquad
		\sigma_\mu:=a_\mu r_\mu,
		\qquad
		H:=LR.
		\label{eq:app-reference-gradient}
	\end{equation}
	
	Since \(\widetilde S\) is a positive column-stochastic matrix, \(A=I-\widetilde S\) has rank \(K-1\). Because \(a_\mu>0\), it follows from \eqref{eq:app-A-block-decomposition} that \(L\), and hence \(H\), has rank \(M-1\).
	
	\subsection{Full-batch balanced contrast dynamics}
	\label{app:block-fullbatch}
	
	Under full-batch training,
	\[
	Q=P =
	\operatorname{diag}
	\left(
	\frac{\tilde p_1}{C}I_C,\ldots,
	\frac{\tilde p_M}{C}I_C
	\right).
	\]
	Starting from \(\overline W_0=0\), the full-batch dynamics preserve
	block symmetry. Thus \((I-S_t)P\) has the form
	\eqref{eq:app-reference-gradient} at every iteration, with
	\[
	r_\mu=\frac{\tilde p_\mu}{C}.
	\]
	
	Since every contrast singular value \(\sigma_\mu\) is positive, the matrix sign acts as the identity on every within-block contrast subspace:
	\begin{equation}
		\operatorname{msgn}\!\left((I-S_t)P\right) =
		\sum_{\mu=1}^M U_\mu U_\mu^\top
		+
		Y\operatorname{msgn}(H_t)Y^\top.
		\label{eq:app-fullbatch-polar}
	\end{equation}
	Using
	\[
	\sum_{\mu=1}^M U_\mu U_\mu^\top =
	I_K-YY^\top,
	\]
	we may equivalently write
	\begin{equation}
		\operatorname{msgn}\!\left((I-S_t)P\right) =
		I_K
		+
		Y\left(
		\operatorname{msgn}(H_t)-I_M
		\right)Y^\top.
		\label{eq:app-fullbatch-entrywise}
	\end{equation}
	Since
	\[
	\left\|
	\operatorname{msgn}(H_t)-I_M
	\right\|_{\mathrm{op}}
	\le 2,
	\]
	every entry of the correction term in \eqref{eq:app-fullbatch-entrywise} is \(O(1/C)\). Hence
	\[
	\operatorname{msgn}\!\left((I-S_t)P\right) =
	I_K+O_{\mathrm{entry}}\!\left(\frac1C\right).
	\]
	
	More strongly, the within-block statement is exact:
	\begin{equation}
		U_\mu^\top
		\operatorname{msgn}\!\left((I-S_t)P\right)
		U_\nu =
		\delta_{\mu\nu}I_{C-1}.
		\label{eq:app-fullbatch-exact-contrast}
	\end{equation}
	Therefore
	\[
	U_\mu^\top
	\left(
	\overline W_{t+1}-\overline W_t
	\right)
	U_\mu =
	\eta I_{C-1}.
	\]
	Thus full-batch Muon exactly equalizes the update on all within-block contrast directions; all remaining nontrivial structure is confined to the \(M\)-dimensional block-average subspace.
	
	\subsection{Mini-batch perturbation decomposition around the symmetric reference}
	\label{app:block-minibatch}
	
	We analyze mini-batch sampling fluctuations around a fixed block-symmetric reference state \(\widetilde S\). The calculation is local and one-step: it quantifies the instantaneous response of the polar map rather than the full departure of \(S_t\) from the block-symmetric manifold.
	
	Let
	\[
	Q=\operatorname{diag}(q_1,\ldots,q_K),
	\]
	where \(Bq\) is the empirical frequency vector of a mini-batch of size \(B\). Define the empirical block frequencies
	\[
	\pi_\mu
	:=
	\sum_{j\in\mu}q_j.
	\]
	Decompose
	\[
	Q=Q^\parallel+Q^\perp,
	\]
	where
	\begin{equation}
		Q^\parallel =
		\operatorname{diag}
		\left(
		\frac{\pi_1}{C}I_C,\ldots,
		\frac{\pi_M}{C}I_C
		\right).
		\label{eq:app-Q-parallel}
	\end{equation}
	For \(j\in\mu\), define the within-block residual
	\[
	(\epsilon_\mu)_j
	:=
	q_j-\frac{\pi_\mu}{C}.
	\]
	Then
	\[
	\mathbf 1_C^\top\epsilon_\mu=0,
	\]
	and the restriction of \(Q^\perp\) to block \(\mu\) is
	\[
	\operatorname{diag}(\epsilon_\mu).
	\]
	
	Conditioned on \(\pi_\mu\), the \(B\pi_\mu\) samples falling in block \(\mu\) are uniformly distributed among its \(C\) tasks. Therefore
	\begin{equation}
		\mathbb E[
		\epsilon_\mu
		\mid
		\pi_\mu
		] =
		0,
		\label{eq:app-epsilon-mean}
	\end{equation}
	and
	\begin{equation}
		\operatorname{Cov}\!\left(
		\epsilon_\mu
		\mid
		\pi_\mu
		\right) =
		\frac{\pi_\mu}{BC}
		\left(
		I_C-\frac1C J_C
		\right).
		\label{eq:app-epsilon-cov}
	\end{equation}
	Define
	\[
	z_\mu:=X_\mu^\top\epsilon_\mu.
	\]
	Since \(X_\mu^\top J_C=0\),
	\begin{equation}
		\operatorname{Cov}\!\left(
		z_\mu
		\mid
		\pi_\mu
		\right) =
		\frac{\pi_\mu}{BC}I_{C-1},
		\label{eq:app-z-cov}
	\end{equation}
	and hence
	\begin{equation}
		\mathbb E\!\left[
		\|z_\mu\|_2^2
		\mid
		\pi_\mu
		\right] =
		\frac{(C-1)\pi_\mu}{BC}.
		\label{eq:app-z-second-moment}
	\end{equation}
	
	Let
	\[
	A:=I-\widetilde S,
	\qquad
	\widetilde G:=AQ^\parallel,
	\qquad
	E:=AQ^\perp.
	\]
	For the fixed-rank calculation below, work on the event
	\[
	\mathcal E_B
	:=
	\{\pi_\mu>0\text{ for every }\mu\}.
	\]
	For fixed \(M\),
	\[
	\mathbb P(\mathcal E_B^c)
	\le
	\sum_{\mu=1}^M(1-\tilde p_\mu)^B
	\le
	M e^{-B\tilde p_{\min}},
	\]
	so \(\mathbb P(\mathcal E_B)\to1\) in the well-sampled regime considered below. On \(\mathcal E_B\), \(Q^\parallel\) is invertible. Moreover, for sufficiently small \(|s|\), the diagonal matrix
	\[
	Q^\parallel+sQ^\perp
	\]
	remains invertible, so the path
	\[
	G(s)
	:=
	A\left(
	Q^\parallel+sQ^\perp
	\right) =
	\widetilde G+sE
	\]
	has constant rank \(K-1\) in a neighborhood of \(s=0\).
	
	Because \(\operatorname{msgn}\) is not an ambient-smooth map at a rank-deficient matrix, we define its derivative here along this constant-rank path:
	\begin{equation}
		\mathcal D_{\widetilde G}[E]
		:=
		\left.
		\frac{d}{ds}
		\operatorname{msgn}\!\left(
		\widetilde G+sE
		\right)
		\right|_{s=0}.
		\label{eq:app-tangent-polar-derivative}
	\end{equation}
	
	We will use the following standard fixed-rank derivative formula.
	
	\begin{lemma}[Derivative of the partial polar factor along a fixed-rank path]
		\label{lem:app-fixed-rank-polar}
		Let \(G(s)\) be a differentiable square-matrix path of constant rank
		\(r\) near \(s=0\), and write
		\[
		G(0) =
		U
		\begin{bmatrix}
			\Sigma & 0 \\
			0      & 0
		\end{bmatrix}
		V^\top,
		\qquad
		\Sigma =
		\operatorname{diag}(\sigma_1,\ldots,\sigma_r),
		\qquad
		\sigma_i>0.
		\]
		Let
		\[
		E:=G'(0),
		\qquad
		\widehat E:=U^\top E V,
		\]
		and let
		\[
		\dot P
		:=
		\left.
		\frac{d}{ds}
		\operatorname{msgn}(G(s))
		\right|_{s=0}.
		\]
		Partition \(U=(U_+,U_0)\) and \(V=(V_+,V_0)\) according to the positive
		and zero singular subspaces. Then
		\begin{align}
			\left(
			U_+^\top\dot P V_+
			\right)_{ij}
			& =
			\frac{
				(\widehat E_{++})_{ij}
				-
				(\widehat E_{++})_{ji}
			}{
				\sigma_i+\sigma_j
			},
			\label{eq:app-polar-derivative-positive}   \\
			U_+^\top\dot P V_0
			& =
			\Sigma^{-1}\widehat E_{+0},
			\label{eq:app-polar-derivative-right-null} \\
			U_0^\top\dot P V_+
			& =
			\widehat E_{0+}\Sigma^{-1},
			\label{eq:app-polar-derivative-left-null}  \\
			U_0^\top\dot P V_0
			& =
			0.
			\label{eq:app-polar-derivative-null}
		\end{align}
	\end{lemma}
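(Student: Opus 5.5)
The plan is to differentiate algebraic identities characterizing $P(s):=\operatorname{msgn}(G(s))$, rather than differentiating an SVD. This first requires showing that $P(s)$ is differentiable at all.

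\emph{Step 1: differentiability from constant rank.} Because the rank is constant and $\sigma_r(G(0))>0$, the nonzero singular values of $G(s)$ stay bounded away from zero near $s=0$. Hence the orthogonal projectors $\Pi_{\rm col}(s)=G G^+$ and $\Pi_{\rm row}(s)=G^+G$ onto the column and row spaces are differentiable. This uses the standard fact that the Moore--Penrose pseudoinverse is differentiable along constant-rank paths. Likewise $H(s):=(G^\top G)^{1/2}$ is differentiable, since the square root is smooth on the positive part of the spectrum, which is separated from $0$, and the kernel projector is smooth. The pseudo-inverse square root $H^{+}$ is differentiable as well. Therefore $P=G\,H^{+}$ is differentiable.

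We may also choose differentiable orthonormal frames $U_0(s),V_0(s)$ for $\ker G(s)^\top$ and $\ker G(s)$, for example by orthonormalizing $(I-\Pi_{\rm col}(s))U_0(0)$. I will then use the following identities, valid for all $s$ near $0$:
\[
G=PH,\qquad P^\top G=H\ \text{(symmetric)},\qquad GP^\top=(GG^\top)^{1/2}\ \text{(symmetric)},\qquad P^\top P=\Pi_{\rm row},\qquad U_0^\top P=0,\qquad PV_0=0 .
\]
At $s=0$ these specialize to $PV_+=U_+$, $P^\top U_+=V_+$ and $H=V_+\Sigma V_+^\top$.

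\emph{Step 2: the positive block.} Set $X:=U_+^\top\dot PV_+$. Differentiate $P^\top G=H$ and compress by $V_+^\top(\cdot)V_+$. Since $P V_+=U_+$, this shows that $X^\top\Sigma+\widehat E_{++}$ is symmetric. Similarly, differentiate $GP^\top$ and compress by $U_+^\top(\cdot)U_+$; this shows that $\Sigma X^\top+\widehat E_{++}$ is symmetric. Adding the two relations gives the Sylvester equation
\[
\Sigma Y+Y\Sigma=2(\widehat E_{++}-\widehat E_{++}^\top),\qquad Y:=X-X^\top .
\]
Differentiate $P^\top P=\Pi_{\rm row}$ and compress by $V_+^\top(\cdot)V_+$. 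The left side becomes $X+X^\top$. The right side vanishes, because the derivative of an orthogonal projector $\Pi$ satisfies $\dot\Pi=\Pi\dot\Pi(I-\Pi)+(I-\Pi)\dot\Pi\Pi$ and so has no component inside its range. Hence $X$ is antisymmetric, so $X=Y/2$. Solving the Sylvester equation entrywise, which is possible since $\sigma_i+\sigma_j>0$, gives \eqref{eq:app-polar-derivative-positive}.

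\emph{Step 3: the mixed and null blocks.} Differentiate $P^\top G=H$ and compress by $V_0^\top(\cdot)V_+$. The term $V_0^\top P^\top E V_+$ vanishes because $PV_0=0$. What remains is
\[
(U_+^\top\dot PV_0)^\top\Sigma=V_0^\top\dot HV_+ .
\]
Next differentiate $G=PH$ and compress by $U_+^\top(\cdot)V_0$. Using $HV_0=0$ and $U_+^\top P=V_+^\top$, this yields $\widehat E_{+0}=V_+^\top\dot HV_0$. Since $\dot H$ is symmetric, the two identities combine to $\Sigma\,U_+^\top\dot PV_0=\widehat E_{+0}$, which is \eqref{eq:app-polar-derivative-right-null}. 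The left-null block \eqref{eq:app-polar-derivative-left-null} follows by the same argument applied to $P^\top=\operatorname{msgn}(G^\top)$, with the roles of $U$ and $V$ exchanged. Finally, differentiate $U_0(s)^\top P(s)V_0(s)=0$. The terms $\dot U_0^\top PV_0$ and $U_0^\top P\dot V_0$ vanish because $PV_0=0$ and $U_0^\top P=0$ at $s=0$. This leaves $U_0^\top\dot PV_0=0$, which is \eqref{eq:app-polar-derivative-null}.

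The main obstacle is Step 1. The map $\operatorname{msgn}$ is not differentiable in the ambient space at rank-deficient matrices, so the argument must justify, using only the constant-rank hypothesis, that $P(s)$, $H(s)$ and the null frames are differentiable. I would handle this through the differentiability of the pseudoinverse on constant-rank paths, together with a Riesz-projection (contour-integral) representation of the square root restricted to the spectrum away from $0$. Once Step 1 is in place, Steps 2--3 are routine linear algebra.
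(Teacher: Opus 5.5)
Your proof is correct, but it takes a different route from the paper.

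\textbf{The paper's route.} The paper chooses differentiable orthonormal frames $U_+(s),V_+(s)$ for the column and row spaces. It writes $G(s)=U_+(s)R(s)V_+(s)^\top$ with an invertible $r\times r$ core satisfying $R(0)=\Sigma$, so that $\operatorname{msgn}(G(s))=U_+(s)P_R(s)V_+(s)^\top$. This reduces the problem to the classical polar derivative of an invertible matrix, $\Sigma\Omega_R+\Omega_R\Sigma=\dot R-\dot R^\top$. Combining this with the skew-symmetric frame derivatives $A=U_+^\top\dot U_+$ and $B=V_+^\top\dot V_+$ gives the positive block. The mixed blocks then take one line each: for example, $U_+^\top\dot PV_0=\dot V_+^\top V_0$ while $\widehat E_{+0}=\Sigma\dot V_+^\top V_0$. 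The null--null block vanishes because $\dot P$ has no $U_0$--$V_0$ component.

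\textbf{Your route.} You work frame-free and differentiate algebraic identities instead: $P^\top G=H$, the symmetry of $GP^\top$, $P^\top P=\Pi_{\mathrm{row}}$, $G=PH$, and the annihilation relations. The bookkeeping checks out. The two symmetry relations only yield $\Sigma Y+Y\Sigma=2(\widehat E_{++}-\widehat E_{++}^\top)$ for the antisymmetric part; they merely force $X+X^\top$ to commute with $\Sigma$. The projector identity, via $\Pi\dot\Pi\Pi=0$, is exactly what removes the symmetric part of $X$. In the paper this role is played by the skew-symmetry of $A$, $B$ and $\Omega_R$. Your construction of the null frames by orthonormalizing $(I-\Pi_{\mathrm{col}}(s))U_0$ also correctly makes them coincide with the given $U_0,V_0$ at $s=0$, which the null--null step needs.

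\textbf{Trade-off.} The paper's argument is shorter and reuses the invertible-case formula. However, it asserts in a single sentence both the existence of differentiable adapted frames with $R(0)=\Sigma$ and hence the differentiability of $\operatorname{msgn}(G(s))$. Your Step~1 supplies that analytic justification explicitly: differentiability of the pseudoinverse on constant-rank paths, plus a Riesz-contour representation of the square root away from $0$. The price is that you need more identities to pin down the algebra.
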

	
	\begin{proof}
		Because the rank is constant, the column and row spaces vary smoothly
		locally. Choose differentiable orthonormal frames \(U_+(s)\) and
		\(V_+(s)\) for these spaces and write
		\[
		G(s)=U_+(s)R(s)V_+(s)^\top,
		\]
		where \(R(s)\in\mathbb R^{r\times r}\) is invertible. We choose the
		frames so that \(R(0)=\Sigma\). If \(P_R(s)\) denotes the polar factor
		of \(R(s)\), then
		\[
		\operatorname{msgn}(G(s)) =
		U_+(s)P_R(s)V_+(s)^\top,
		\qquad
		P_R(0)=I_r.
		\]
		
		Let
		\[
		A:=U_+^\top\dot U_+,
		\qquad
		B:=V_+^\top\dot V_+,
		\qquad
		\Omega_R:=\dot P_R(0).
		\]
		Both \(A\) and \(B\) are skew-symmetric. Differentiating the polar
		decomposition of the invertible core \(R(s)\) at \(s=0\) gives the
		Sylvester equation
		\[
		\Sigma\Omega_R+\Omega_R\Sigma =
		\dot R-\dot R^\top.
		\]
		On the positive singular subspaces,
		\[
		\widehat E_{++} =
		A\Sigma+\dot R-\Sigma B,
		\]
		while
		\[
		U_+^\top\dot P V_+ =
		A+\Omega_R-B.
		\]
		Combining the last two displays yields
		\[
		\Sigma
		\left(U_+^\top\dot P V_+\right)
		+
		\left(U_+^\top\dot P V_+\right)\Sigma =
		\widehat E_{++}-\widehat E_{++}^\top,
		\]
		which gives \eqref{eq:app-polar-derivative-positive} entrywise.
		
		For the cross blocks, differentiating
		\(G(s)V_0(s)=0\) and
		\(U_0(s)^\top G(s)=0\), or equivalently projecting the derivative of
		\(G(s)=U_+(s)R(s)V_+(s)^\top\), gives
		\[
		U_+^\top\dot P V_0 =
		\Sigma^{-1}\widehat E_{+0},
		\qquad
		U_0^\top\dot P V_+ =
		\widehat E_{0+}\Sigma^{-1}.
		\]
		Finally, differentiating
		\(\operatorname{msgn}(G(s)) =
		U_+(s)P_R(s)V_+(s)^\top\)
		shows that its derivative has no null-to-null block, proving
		\eqref{eq:app-polar-derivative-null}.
	\end{proof}
	
	We next compute the perturbation \(E=AQ^\perp\) in the basis \(\mathcal U\). By \eqref{eq:app-A-block-decomposition},
	\[
	U_\mu^\top A U_\mu=a_\mu I_{C-1},
	\qquad
	U_\mu^\top AY=0,
	\qquad
	Y^\top AU_\mu=0,
	\qquad
	Y^\top AY=L.
	\]
	Moreover,
	\begin{align}
		U_\mu^\top Q^\perp U_\mu
		& =
		D_\mu,
		\qquad
		D_\mu
		:=
		X_\mu^\top
		\operatorname{diag}(\epsilon_\mu)
		X_\mu,
		\label{eq:app-Qperp-UU} \\
		U_\mu^\top Q^\perp Y
		& =
		\frac{1}{\sqrt C}
		z_\mu e_\mu^\top,
		\label{eq:app-Qperp-UY} \\
		Y^\top Q^\perp U_\mu
		& =
		\frac{1}{\sqrt C}
		e_\mu z_\mu^\top,
		\label{eq:app-Qperp-YU} \\
		Y^\top Q^\perp Y
		& =
		0.
		\label{eq:app-Qperp-YY}
	\end{align}
	Here \(e_\mu\) is the \(\mu\)-th standard basis vector of \(\mathbb R^M\). Hence
	\begin{align}
		U_\mu^\top E U_\mu
		& =
		a_\mu D_\mu,
		\label{eq:app-E-UU} \\
		U_\mu^\top E Y
		& =
		\frac{a_\mu}{\sqrt C}
		z_\mu e_\mu^\top,
		\label{eq:app-E-UY} \\
		Y^\top E U_\mu
		& =
		\frac{1}{\sqrt C}
		L e_\mu z_\mu^\top,
		\label{eq:app-E-YU} \\
		Y^\top E Y
		& =
		0.
		\label{eq:app-E-YY}
	\end{align}
	For \(\mu\neq\nu\),
	\[
	U_\mu^\top E U_\nu=0.
	\]
	
	\subsection{Well-sampled regime: first-order cancellation and residual coupling}
	\label{app:block-coupling}
	
	We now apply Lemma~\ref{lem:app-fixed-rank-polar} to the reference gradient \(\widetilde G\).
	
	\paragraph{Cancellation inside the contrast subspaces.}
	
	Recall from \eqref{eq:app-reference-gradient} that the singular value of \(\widetilde G\) on \(U_\mu\) is
	\[
	\sigma_\mu =
	a_\mu r_\mu,
	\qquad
	r_\mu =
	\frac{\pi_\mu}{C}.
	\]
	The matrix \(D_\mu\) in \eqref{eq:app-Qperp-UU} is symmetric. Therefore
	\[
	U_\mu^\top E U_\mu =
	\left(
	U_\mu^\top E U_\mu
	\right)^\top.
	\]
	By \eqref{eq:app-polar-derivative-positive},
	\[
	U_\mu^\top
	\mathcal D_{\widetilde G}[E]
	U_\mu =
	0.
	\]
	For \(\mu\neq\nu\), both cross blocks \(U_\mu^\top E U_\nu\) and \(U_\nu^\top E U_\mu\) vanish. Hence
	\begin{equation}
		U_\mu^\top
		\mathcal D_{\widetilde G}[E]
		U_\nu =
		0,
		\qquad
		\forall\,\mu,\nu.
		\label{eq:app-hierarchical-cancellation}
	\end{equation}
	Thus the direct first-order effect of within-block sampling fluctuations vanishes on every contrast-to-contrast block.
	
	\paragraph{Coupling to block-average directions.}
	
	Let
	\[
	H =
	\widehat U
	\operatorname{diag}(s_1,\ldots,s_M)
	\widehat V^\top
	\]
	be an SVD of the block-average matrix \(H=LR\), with
	\[
	s_1,\ldots,s_{M-1}>0,
	\qquad
	s_M=0.
	\]
	Write \(\widehat u_k\) and \(\widehat v_k\) for the corresponding left and right singular vectors.
	
	Since
	\[
	H=LR,
	\]
	the singular-vector identity
	\[
	\widehat u_k^\top H =
	s_k\widehat v_k^\top
	\]
	gives, for every \(k\) and \(\mu\),
	\begin{equation}
		\widehat u_k^\top L e_\mu =
		\frac{s_k}{r_\mu}
		(\widehat v_k)_\mu.
		\label{eq:app-H-singular-relation}
	\end{equation}
	For \(s_k=0\), this follows because \(R\) is invertible and \(\widehat u_k^\top LR=0\).
	
	Using equations~\ref{eq:app-E-UY} and~\ref{eq:app-E-YU}, and the derivative formulas in Lemma~\ref{lem:app-fixed-rank-polar}, we obtain
	\begin{equation}
		U_\mu^\top
		\mathcal D_{\widetilde G}[E]
		Y\widehat v_k =
		\frac{(\widehat v_k)_\mu}{\sqrt C\,r_\mu}
		\frac{\sigma_\mu-s_k}{\sigma_\mu+s_k}
		z_\mu,
		\qquad
		k=1,\ldots,M.
		\label{eq:app-coupling-mode}
	\end{equation}
	When \(s_k=0\), the fraction in \eqref{eq:app-coupling-mode} equals one, and the formula agrees with the positive-to-null derivative in equation~\ref{eq:app-polar-derivative-right-null}.
	
	Since
	\[
	\left|
	\frac{\sigma_\mu-s_k}{\sigma_\mu+s_k}
	\right|
	\le 1
	\]
	and \(\widehat V\) is orthogonal,
	\begin{align}
		\left\|
		U_\mu^\top
		\mathcal D_{\widetilde G}[E]
		Y
		\right\|_F^2
		& \le
		\frac{\|z_\mu\|_2^2}
		{C r_\mu^2}
		\sum_{k=1}^M
		(\widehat v_k)_\mu^2
		\notag \\
		& =
		\frac{\|z_\mu\|_2^2}
		{C r_\mu^2}.
		\label{eq:app-coupling-deterministic-bound}
	\end{align}
	Conditioning on \(\pi_\mu\) and using \eqref{eq:app-z-second-moment} together with \(r_\mu=\pi_\mu/C\),
	\begin{align}
		\mathbb E\!\left[
		\left\|
		U_\mu^\top
		\mathcal D_{\widetilde G}[E]
		Y
		\right\|_F^2
		\biggm|
		\pi_\mu
		\right]
		& \le
		\frac{1}{C r_\mu^2}
		\frac{(C-1)\pi_\mu}{BC}
		\notag \\
		& =
		\frac{C-1}{B\pi_\mu}.
		\label{eq:app-coupling-second-moment}
	\end{align}
	Therefore, conditionally on \(\pi_\mu>0\),
	\begin{equation}
		\left\|
		U_\mu^\top
		\mathcal D_{\widetilde G}[E]
		Y
		\right\|_F =
		O_{\mathbb P}\!\left(
		\sqrt{
			\frac{C}{B\pi_\mu}
		}
		\right).
		\label{eq:app-coupling-pi-scale}
	\end{equation}
	
	The empirical block frequency satisfies
	\[
	B\pi_\mu
	\sim
	\operatorname{Binom}(B,\tilde p_\mu).
	\]
	Hence, whenever
	\[
	B\tilde p_\mu\to\infty,
	\]
	we have
	\[
	\frac{\pi_\mu}{\tilde p_\mu}
	\xrightarrow{\mathbb P}
	1.
	\]
	It follows from \eqref{eq:app-coupling-pi-scale} that
	\begin{equation}
		\left\|
		U_\mu^\top
		\mathcal D_{\widetilde G}[E]
		Y
		\right\|_F =
		O_{\mathbb P}\!\left(
		\sqrt{
			\frac{C}{B\tilde p_\mu}
		}
		\right).
		\label{eq:app-coupling-population-scale}
	\end{equation}
	Thus the first-order contrast-to-average coupling vanishes when
	\begin{equation}
		\frac{B\tilde p_\mu}{C}\to\infty.
		\label{eq:app-spatial-coverage-condition}
	\end{equation}
	
	For fixed \(M\), the sufficient uniform condition
	\[
	\frac{B\tilde p_{\min}}{C}\to\infty
	\]
	makes the first-order coupling small in every block simultaneously. This condition should be interpreted as a large expected sample count per task. It does not require every individual task to appear in every mini-batch.
	
	\subsection{Momentum: averaging across steps and tracking}
	\label{app:block-momentum}
	
	We next analyze momentum as EMA averaging of the empirical frequency matrix. This subsection also proves Proposition~\ref{prop:temporal-aggregation}. The variance-equivalent batch-size notation is retained as a shorthand, but the underlying object is the exponentially weighted observation process derived explicitly below. Let the matrix momentum obey
	\begin{equation}
		\mathsf M_t =
		\beta\mathsf M_{t-1}
		+
		(1-\beta)G_t,
		\qquad
		G_t=(I-S_t)Q_t,
		\label{eq:app-matrix-momentum}
	\end{equation}
	with fresh independent mini-batches and $0\le\beta<1$. Here $G_t$ denotes the negative gradient in task coordinates, so the parameter update adds its normalized momentum.
	
	\paragraph{Taskwise stationary EMA and finite-initialization correction.}
	For a single task $j$, the empirical mini-batch frequency satisfies
	\[
	\mathbb E q_{j,t}=p_j,
	\qquad
	\operatorname{Var}(q_{j,t})=\frac{p_j(1-p_j)}{B},
	\]
	and the variables are independent across time. For the stationary infinite-past EMA
	\[
	\overline q_{j,t}
	=(1-\beta)\sum_{r=0}^{\infty}\beta^r q_{j,t-r},
	\]
	independence gives
	\begin{align}
		\mathbb E\overline q_{j,t}&=p_j,\notag\\
		\operatorname{Var}(\overline q_{j,t})
		&=(1-\beta)^2\sum_{r\ge0}\beta^{2r}\frac{p_j(1-p_j)}{B}
		=\frac{1-\beta}{1+\beta}\frac{p_j(1-p_j)}{B}.
		\label{eq:app-taskwise-ema-variance}
	\end{align}
	Thus, with
	\[
	B_{\mathrm{eff}}(\beta)=B\frac{1+\beta}{1-\beta},
	\]
	we have
	\[
	\frac{\operatorname{Var}(\overline q_{j,t})}{p_j^2}
	=\frac{1-p_j}{B_{\mathrm{eff}}(\beta)p_j},
	\]
	which proves the frozen-state variance identity in Proposition~\ref{prop:temporal-aggregation}.
	
	For comparison with an implementation initialized at $\mathsf M_{-1}=0$, define the bias-corrected finite-history frequency average
	\[
	\widehat q_{j,t}
	:=
	\frac{1-\beta}{1-\beta^{t+1}}
	\sum_{r=0}^{t}\beta^r q_{j,t-r}.
	\]
	Then $\mathbb E\widehat q_{j,t}=p_j$ and
	\begin{equation}
		\frac{\operatorname{Var}(\widehat q_{j,t})}{p_j^2}
		=
		\frac{1-p_j}{Bp_j}
		\frac{1-\beta}{1+\beta}
		\frac{1+\beta^{t+1}}{1-\beta^{t+1}}.
		\label{eq:app-finite-ema-variance}
	\end{equation}
	The last factor tends to one as $t\to\infty$, recovering the stationary $B_{\mathrm{eff}}$. Without bias correction, the zero-initialized buffer is exactly the truncated stationary sum and has mean $(1-\beta^{t+1})p_j$.
	
	For an evolving state, the stationary infinite-past form of \eqref{eq:app-matrix-momentum} is
	\[
	\mathsf M_t
	=(1-\beta)\sum_{r\ge0}\beta^r(I-S_{t-r})Q_{t-r}.
	\]
	Adding and subtracting $(I-S_t)Q_{t-r}$ term by term gives the exact decomposition
	\begin{equation}
		\mathsf M_t=(I-S_t)\overline Q_t+R_t,
		\qquad
		R_t=(1-\beta)\sum_{r\ge0}\beta^r(S_t-S_{t-r})Q_{t-r},
		\label{eq:app-general-tracking-decomposition}
	\end{equation}
	where $\overline Q_t=(1-\beta)\sum_{r\ge0}\beta^rQ_{t-r}$. If
	$\|S_t-S_{t-r}\|_F\le L_S\eta r$ and $\|Q_{t-r}\|_{\mathrm{op}}\le1$, then
	\begin{align}
		\|R_t\|_F
		&\le
		(1-\beta)\sum_{r\ge0}\beta^r
		\|S_t-S_{t-r}\|_F\,\|Q_{t-r}\|_{\mathrm{op}}\notag\\
		&\le
		L_S\eta(1-\beta)\sum_{r\ge0}r\beta^r
		=
		L_S\eta\frac{\beta}{1-\beta},
		\label{eq:app-general-tracking-bound}
	\end{align}
	proving the tracking part of Proposition~\ref{prop:temporal-aggregation}. A zero-initialized finite-history buffer has the same decomposition with the sums truncated at $r=t$; its bound is no larger, while bias correction multiplies both the frozen term and the remainder by $(1-\beta^{t+1})^{-1}$.
	
	\paragraph{Frozen-state block representation.}
	
	First freeze the state at a block-symmetric matrix \(\widetilde S\), and let
	\[
	A:=I-\widetilde S.
	\]
	After initialization transients,
	\[
	\mathsf M_t =
	(1-\beta)
	\sum_{k=0}^\infty
	\beta^k A Q_{t-k} =
	A\overline Q_t,
	\]
	where
	\begin{equation}
		\overline Q_t
		:=
		(1-\beta)
		\sum_{k=0}^\infty
		\beta^k Q_{t-k}.
		\label{eq:app-ema-Q}
	\end{equation}
	Thus, under a frozen state, momentum simply replaces the single-batch frequency matrix \(Q_t\) by its exponentially weighted average across steps.
	
	Decompose
	\[
	\overline Q_t =
	\overline Q_t^\parallel
	+
	\overline Q_t^\perp
	\]
	exactly as before. Define
	\[
	\overline\pi_\mu(t)
	:=
	(1-\beta)
	\sum_{k=0}^\infty
	\beta^k\pi_\mu(t-k),
	\]
	and let \(\overline\epsilon_\mu(t)\) be the corresponding within-block residual vector.
	
	The batch variables are independent across time. Since
	\[
	\mathbb E[\epsilon_\mu(t)]=0
	\]
	and, by averaging \eqref{eq:app-epsilon-cov} over \(\pi_\mu\),
	\[
	\operatorname{Cov}(\epsilon_\mu(t)) =
	\frac{\tilde p_\mu}{BC}
	\left(
	I_C-\frac1C J_C
	\right),
	\]
	we obtain
	\begin{equation}
		\operatorname{Cov}(\overline\epsilon_\mu) =
		\frac{1-\beta}{1+\beta}
		\frac{\tilde p_\mu}{BC}
		\left(
		I_C-\frac1C J_C
		\right).
		\label{eq:app-ema-epsilon-cov}
	\end{equation}
	Similarly,
	\begin{equation}
		\operatorname{Var}(\overline\pi_\mu) =
		\frac{1-\beta}{1+\beta}
		\frac{\tilde p_\mu(1-\tilde p_\mu)}{B}.
		\label{eq:app-ema-block-frequency-var}
	\end{equation}
	
	Define
	\begin{equation}
		B_{\mathrm{eff}}
		:=
		B\frac{1+\beta}{1-\beta}.
		\label{eq:app-hierarchical-effective-batch}
	\end{equation}
	Then equations~\ref{eq:app-ema-epsilon-cov} and~\ref{eq:app-ema-block-frequency-var} are exactly the corresponding single-batch \emph{covariance} formulas with \(B\) replaced by \(B_{\mathrm{eff}}\). This covariance identity is not a distributional identity: \(\overline Q_t\) is a weighted average of multinomial batches rather than an empirical frequency vector from \(B_{\mathrm{eff}}\) independent samples.
	
	Let
	\[
	\overline z_\mu
	:=
	X_\mu^\top\overline\epsilon_\mu.
	\]
	Thus \eqref{eq:app-ema-epsilon-cov} gives
	\begin{equation}
		\mathbb E\|\overline z_\mu\|_2^2 =
		\frac{C-1}{C}
		\frac{\tilde p_\mu}{B_{\mathrm{eff}}}.
		\label{eq:app-ema-z-second-moment}
	\end{equation}
	Moreover, if
	\[
	B_{\mathrm{eff}}\tilde p_\mu\to\infty,
	\]
	then
	\[
	\frac{\overline\pi_\mu}{\tilde p_\mu}
	\xrightarrow{\mathbb P}
	1
	\]
	by \eqref{eq:app-ema-block-frequency-var}.
	
	Combining the deterministic bound in equation~\ref{eq:app-coupling-deterministic-bound} with equation~\ref{eq:app-ema-z-second-moment} and the concentration of \(\overline\pi_\mu\) therefore yields
	\begin{equation}
		\left\|
		U_\mu^\top
		\mathcal D_{\overline G^\parallel}
		[\overline E]
		Y
		\right\|_F =
		O_{\mathbb P}\!\left(
		\sqrt{
			\frac{C}
			{B_{\mathrm{eff}}\tilde p_\mu}
		}
		\right),
		\label{eq:app-momentum-coupling-scale}
	\end{equation}
	where
	\[
	\overline G^\parallel =
	A\overline Q^\parallel,
	\qquad
	\overline E =
	A\overline Q^\perp.
	\]
	
	Hence, in probability at first order around the frozen symmetric state, momentum averaging suppresses the sampling distortion whenever
	\begin{equation}
		\frac{B_{\mathrm{eff}}\tilde p_{\min}}{C}
		\gg 1.
		\label{eq:app-temporal-coverage}
	\end{equation}
	Using \eqref{eq:app-hierarchical-effective-batch}, this can also be written, up to the bounded factor \(1+\beta\), as
	\begin{equation}
		\frac{B\tilde p_{\min}}{(1-\beta)C}
		\gg1.
		\label{eq:app-temporal-coverage-beta}
	\end{equation}
	
	\paragraph{Tracking a slowly evolving state.}
	We now return to the actual dynamics, where \(S_t\) evolves. The exact stationary-form expansion of the momentum is
	\[
	\mathsf M_t =
	(1-\beta)
	\sum_{k=0}^\infty
	\beta^k
	(I-S_{t-k})Q_{t-k}.
	\]
	If the reference state is frozen at its current value \(S_t\), the corresponding approximation is
	\[
	(I-S_t)\overline Q_t.
	\]
	Their difference is
	\begin{equation}
		\mathsf M_t
		-
		(I-S_t)\overline Q_t =
		(1-\beta)
		\sum_{k=0}^\infty
		\beta^k
		(S_t-S_{t-k})Q_{t-k}.
		\label{eq:app-momentum-tracking-difference}
	\end{equation}
	
	In the late-time regime, the block-symmetric reference quantities evolve on the parameter timescale \(1/\eta\). Locally, if their relative change per iteration is \(O(\eta)\), then over \(k\) iterations the mismatch is \(O(k\eta)\). Since
	\[
	(1-\beta)
	\sum_{k=0}^\infty
	k\beta^k =
	\frac{\beta}{1-\beta},
	\]
	\eqref{eq:app-momentum-tracking-difference} gives the adiabatic tracking scale
	\begin{equation}
		O\!\left(
		\frac{\eta}{1-\beta}
		\right).
		\label{eq:app-hierarchical-tracking-scale}
	\end{equation}
	Under the stated local \(O(\eta)\)-variation assumption, the frozen-state mismatch is therefore small on the adiabatic scale
	\begin{equation}
		\eta\ll 1-\beta.
		\label{eq:app-hierarchical-tracking-condition}
	\end{equation}
	This is a tracking estimate, not a global stability theorem for the stochastic trajectory.
	
	Thus the local momentum regime is described by two separate requirements:
	\begin{equation}
		\frac{B\tilde p_{\min}}{(1-\beta)C}\gg1,
		\label{eq:app-hierarchical-momentum-observability}
	\end{equation}
	for small relative sampling fluctuations under momentum, and
	\begin{equation}
		\frac{\eta}{1-\beta}\ll1,
		\label{eq:app-hierarchical-momentum-tracking}
	\end{equation}
	for adiabatic tracking. The first improves the frozen-state sampling estimate relative to the no-momentum condition \(B\tilde p_{\min}/C\gg1\); the second prevents the averaging window from becoming so long that the underlying gradient changes substantially across it.
	
	\subsection{Technical refinement: finite-horizon well-sampled continuum limit}
	\label{app:hierarchical-continuum-refinement}
	
	The main text does not use convergence to the full-batch continuum flow as the substantive higher-rank claim; large expected sample counts making mini-batch frequencies concentrate is expected. We retain the following result because it formalizes how the local sampling analysis connects to the full-batch trajectory on any fixed continuum-time horizon.
	
	\begin{proposition}[Finite-horizon well-sampled continuum limit]
		\label{prop:hierarchical-trajectory}
		Fix $M,C$ and positive task frequencies. Let $\overline W^{\eta,B}(\tau)$ be the piecewise-linear interpolation of the stochastic hierarchy on slow time $\tau=\eta t$, and let $W^\star$ solve the full-batch continuum flow $\dot W^\star=\operatorname{msgn}((I-S(W^\star))P)$ from $W^\star(0)=0$. For every finite $\tau_0$ and $\varepsilon>0$,
		\begin{equation}
			\lim_{\eta\downarrow0}\;\lim_{Bp_{\min}\to\infty}
			\mathbb P\!\left(
			\sup_{0\le\tau\le\tau_0}
			\|\overline W^{\eta,B}(\tau)-W^\star(\tau)\|_F>\varepsilon
			\right)=0.
			\label{eq:hierarchical-fluid-limit}
		\end{equation}
		The same holds along any joint sequence $\eta_n\downarrow0$, $B_np_{\min}\to\infty$ satisfying $\log(1/\eta_n)=o(B_np_{\min})$. Moreover, along the limiting flow,
		\begin{equation}
			U_\mu^\top W^\star(\tau)U_\nu
			=\delta_{\mu\nu}\,\tau I_{C-1},
			\qquad \tau\ge0.
			\label{eq:hierarchical-continuum-contrast}
		\end{equation}
	\end{proposition}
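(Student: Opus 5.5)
The plan is to treat the stochastic recursion as an Euler scheme for the flow $\dot W^\star=\operatorname{msgn}((I-S(W^\star))P)$, perturbed by a sampling error, and then close the argument with a Gr\"onwall estimate. The main difficulty is that $\operatorname{msgn}$ is discontinuous at rank-deficient matrices, and the gradient $(I-S)P$ always has rank $K-1$. So I will not use ambient Lipschitz continuity. I will instead use the constant-rank structure: $(I-S)Q$ has rank exactly $K-1$ whenever every $q_j>0$, with right nullspace $Q^{-1}\ker(I-S)$ and left nullspace spanned by $\mathbf 1$ (columns of $S$ sum to one). Both nullspaces depend smoothly on $(W,Q)$ once $Q$ is invertible.

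\textbf{Step 1 (local Lipschitz property along the constant-rank manifold).} Fix $\tau_0$ and a compact tube $\mathcal K=\{W:\operatorname{dist}(W,W^\star([0,\tau_0]))\le1\}$. On $\mathcal K\times\{Q\text{ diagonal}:q_j\ge p_j/2\}$, I claim the smallest positive singular value of $(I-S(W))Q$ is bounded below by some $c>0$, using compactness and the fact that the rank is exactly $K-1$. By Lemma~\ref{lem:app-fixed-rank-polar}, with derivative entries bounded by $\|E\|/c$, the map $F(W,Q):=\operatorname{msgn}((I-S(W))Q)$ is then $C^1$ there. Hence it is Lipschitz with some constant $L_{\mathcal K}$, jointly in $W$ and $Q$. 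Weyl's inequality applied along a path at constant rank is what keeps the rank bound uniform.

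\textbf{Step 2 (good event).} Let $\mathcal G$ be the event that $\max_j|q_j(t)/p_j-1|\le\delta$ for all $t<\tau_0/\eta$. A Chernoff bound together with a union bound gives
\[
\mathbb P(\mathcal G^c)\le \frac{\tau_0}{\eta}\,2K\,e^{-c_\delta Bp_{\min}}.
\]
This tends to zero as $Bp_{\min}\to\infty$ for fixed $\eta$. It also tends to zero along joint sequences with $\log(1/\eta_n)=o(B_np_{\min})$, which is exactly where that hypothesis enters.

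\textbf{Step 3 (Gr\"onwall).} On $\mathcal G$, and as long as the iterate stays in $\mathcal K$, I decompose
\[
F(W_t,Q_t)=F(W_t,P)+\bigl(F(W_t,Q_t)-F(W_t,P)\bigr).
\]
The second term is at most $L_{\mathcal K}\delta\max_jp_j$ in norm. Comparing the Euler recursion with the flow, whose right-hand side is Lipschitz on $\mathcal K$ by Step 1, a discrete Gr\"onwall argument gives
\[
\sup_{\tau\le\tau_0}\|\overline W^{\eta,B}-W^\star\|_F\le e^{L_{\mathcal K}\tau_0}\,(C\eta+C\delta).
\]
This is smaller than $\min(\varepsilon,1)$ for small $\eta,\delta$, and a standard continuation argument keeps the iterate in $\mathcal K$. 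The interpolation adds only $O(\eta)$. Letting $B\to\infty$ first makes $\delta\to0$, and then letting $\eta\to0$ proves \eqref{eq:hierarchical-fluid-limit}.

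\textbf{Step 4 (contrast identity).} The flow preserves block symmetry, since its vector field is equivariant under within-block permutations and $W^\star(0)=0$. Its velocity is therefore given by \eqref{eq:app-fullbatch-polar}, with each $\sigma_\mu>0$ (because $a_\mu>0$ along the trajectory). So \eqref{eq:app-fullbatch-exact-contrast} gives $U_\mu^\top\dot W^\star U_\nu=\delta_{\mu\nu}I_{C-1}$, and integrating yields \eqref{eq:hierarchical-continuum-contrast}.

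\textbf{Main obstacle.} The delicate step is Step 1, the uniform lower bound on the smallest nonzero singular value on a neighborhood of the flow. This requires checking that $a_\mu(\tau)$ and the $M-1$ positive singular values of $H$ stay bounded away from zero on $[0,\tau_0]$. I would first get this from continuity of the flow and compactness of $[0,\tau_0]$, using positivity of the softmax at every finite $W$.
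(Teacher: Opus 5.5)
Your proposal is correct and follows essentially the paper's proof: Perron--Frobenius (rank exactly $K-1$) plus compactness gives a uniform lower bound on the smallest nonzero singular value on a tube around $W^\star([0,\tau_0])$, fixed-rank smoothness of the polar map supplies the Lipschitz constants, a Chernoff-plus-union bound controls $\max_t\|Q_tP^{-1}-I\|_{\mathrm{op}}$, a Gr\"onwall/bootstrap argument keeps the iterate in the tube, and block symmetry yields the contrast identity. The differences are minor. You run a single Gr\"onwall comparison directly against the flow, whereas for the iterated limit the paper compares the stochastic recursion with the Euler scheme and then the Euler scheme with the flow. For the joint sequence you may keep $\delta=\delta(\varepsilon)$ fixed instead of using the paper's vanishing $\delta_n=(\log(2KN_n)/\kappa_n)^{1/4}$, which is slightly simpler and equally valid.
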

	
	\paragraph{From large expected task count to the continuum trajectory.}
	We now prove Proposition~\ref{prop:hierarchical-trajectory}. Define
	\[
	F(W):=\operatorname{msgn}((I-S(W))P).
	\]
	For every finite $W$, the softmax matrix $S(W)$ has strictly positive entries and is column stochastic. Perron--Frobenius therefore implies that its eigenvalue $1$ is simple, so $I-S(W)$, and hence $(I-S(W))P$, has rank $K-1$. The partial polar factor is smooth on the fixed-rank manifold in a neighborhood of each such matrix. Moreover
	\[
	\|F(W)\|_F\le \sqrt{K-1},
	\]
	so the ODE
	\[
	\dot W^\star(\tau)=F(W^\star(\tau)),
	\qquad W^\star(0)=0,
	\]
	has a unique global solution. On every compact time interval $[0,\tau_0]$, the image of $W^\star$ is compact and the smallest nonzero singular value of $(I-S(W^\star(\tau)))P$ has a positive minimum. Thus $F$ is uniformly Lipschitz on a sufficiently small tube around that trajectory.
	
	The ODE preserves permutations within every frequency block because $P$ and the initialization share those symmetries and the polar map is orthogonally equivariant. Therefore the block decomposition of Appendix~\ref{app:block-fullbatch} applies for every $\tau$. In particular,
	\begin{equation}
		U_\mu^\top \dot W^\star(\tau)U_\nu
		=
		\delta_{\mu\nu}I_{C-1}.
		\label{eq:app-continuum-exact-contrast-rate}
	\end{equation}
	Integrating from $W^\star(0)=0$ proves \eqref{eq:hierarchical-continuum-contrast} for every $\tau\ge0$.
	
	We first prove the sequential limit in \eqref{eq:hierarchical-fluid-limit}. Fix $\eta>0$ and $\tau_0<\infty$. Then only
	$N=\lceil\tau_0/\eta\rceil$ mini-batches are involved. As $Bp_{\min}\to\infty$, multinomial concentration and a union bound over these finitely many steps give
	\[
	\max_{t<N}\|Q_tP^{-1}-I\|_{\mathrm{op}}
	\xrightarrow{\mathbb P}0.
	\]
	Uniform fixed-rank smoothness of the polar map on a tube around the corresponding full-batch Euler trajectory then implies that the stochastic recursion converges uniformly over these $N$ steps to
	\[
	V_{t+1}=V_t+\eta F(V_t),
	\qquad V_0=0.
	\]
	Finally, sending $\eta\downarrow0$, the piecewise-linear Euler interpolation converges uniformly on $[0,\tau_0]$ to $W^\star$. This proves the iterated well-sampled/small-step limit without any coupling between $B$ and $\eta$.
	
	For completeness, we now prove the convenient joint sufficient scaling stated in the proposition. Let
	\[
	N_n:=\left\lceil\frac{\tau_0}{\eta_n}\right\rceil,
	\qquad
	\kappa_n=B_np_{\min}.
	\]
	For a single mini-batch, multiplicative Chernoff bounds give, for $0<\delta<1$,
	\begin{equation}
		\mathbb P\!\left(
		\max_j\frac{|q_j-p_j|}{p_j}>\delta
		\right)
		\le
		2K\exp(-c\delta^2\kappa_n)
		\label{eq:app-uniform-multinomial-one-step}
	\end{equation}
	for a universal constant $c>0$. Set
	\[
	\delta_n:=
	\left(
	\frac{\log(2KN_n)}{\kappa_n}
	\right)^{1/4}.
	\]
	The assumption $\log(1/\eta_n)=o(\kappa_n)$ implies $\delta_n\to0$ and
	\[
	N_n\,2K\exp(-c\delta_n^2\kappa_n)\to0.
	\]
	Hence, with probability tending to one,
	\begin{equation}
		\max_{0\le t<N_n}
		\|Q_tP^{-1}-I\|_{\mathrm{op}}
		\le\delta_n.
		\label{eq:app-uniform-multinomial-window}
	\end{equation}
	In particular all task counts are positive on this event for all sufficiently large $n$, so every sampled gradient remains on the same rank-$(K-1)$ manifold as the full-batch gradient.
	
	Let $V_{t+1}=V_t+\eta_nF(V_t)$, $V_0=0$, denote the deterministic Euler discretization of the continuum flow. Uniform fixed-rank smoothness on the tube around $W^\star([0,\tau_0])$ gives constants $L_{\mathrm{dyn}},a_{\mathrm{samp}}<\infty$ such that, on the event in equation~\ref{eq:app-uniform-multinomial-window},
	\begin{align}
		&\|\operatorname{msgn}((I-S(W))Q_t)-F(W)\|_F
		\le a_{\mathrm{samp}}\delta_n,\label{eq:app-polar-frequency-uniform}\\
		&\|F(W)-F(V)\|_F\le L_{\mathrm{dyn}}\|W-V\|_F.\notag
	\end{align}
	A standard bootstrap then keeps both discrete trajectories inside the same tube and yields
	\[
	\max_{t\le N_n}
	\|\overline W_t-V_t\|_F
	\le
	a_{\mathrm{samp}}\tau_0e^{L_{\mathrm{dyn}}\tau_0}\delta_n
	=o_{\mathbb P}(1).
	\]
	Because $F$ is Lipschitz on the same compact tube, the usual Euler estimate gives
	\[
	\sup_{0\le\tau\le\tau_0}
	\|V^{\eta_n}(\tau)-W^\star(\tau)\|_F
	=O(\eta_n),
	\]
	where $V^{\eta_n}$ denotes the piecewise-linear interpolation. Combining the two estimates proves \eqref{eq:hierarchical-fluid-limit}.
	
	Finally, \eqref{eq:hierarchical-continuum-contrast} implies for every fixed $\tau>0$,
	\[
	\frac1\tau
	U_\mu^\top\overline W^{\eta_n,B_n}(\tau)U_\mu
	\xrightarrow{\mathbb P}I_{C-1}.
	\]
	Hence also
	\begin{equation}
		\lim_{\tau\to\infty}\;\lim_{n\to\infty}
		\frac1\tau
		U_\mu^\top\overline W^{\eta_n,B_n}(\tau)U_\mu
		=I_{C-1}
		\quad\text{in probability}.
		\label{eq:hierarchical-long-time-rate-app}
	\end{equation}
	This is an iterated long-time statement rather than a claim of uniform stochastic approximation on the entire half-line: the nonzero singular gap of the gradient may shrink as training proceeds. The important distinction from the previous finite-step bound is that the number of discrete optimizer steps already diverges as $1/\eta_n$ on every fixed continuum interval, while the limiting dynamics themselves are global in $\tau$ and have an exact asymptotic contrast rate.
	
	\section{Stochastic Labels: Exact Statistics, Projected Regimes, and Local Closure}
	\label{app:label-noise}
	
	We now derive the stochastic-gradient structure under symmetric label noise, assuming \(0<\alpha<1\).
	
	\paragraph{Roadmap and status of the results.}
	Appendices~E.1--E.3 concern the complete categorical model and are exact: they give the gradient decomposition, covariance/reliability formulas, and the fixed-dimensional local Gaussian limit. Appendix~E.4 introduces the longitudinal target-versus-average-nontarget projection; the counting formulas are exact \emph{within that reduced model} and describe the occurrence-limited and corruption-sparse regimes. Appendices~E.5--E.6 add a moment-matched isotropic covariance closure (and a local frequency-homogeneous sector) to obtain the explicit \(\sqrt p\) coefficient in the corruption-resolved local regime. The exact categorical covariance is anisotropic, so the closure result should not be read as an exact formula for the full matrix process.
	
	Define
	\[
	c_\alpha
	:=
	1-\alpha+\frac{\alpha}{K},
	\qquad
	\rho_\alpha
	:=
	1-c_\alpha
	=
	\alpha\frac{K-1}{K}.
	\]
	Thus \(c_\alpha\) is the probability that the observed label equals the target label and
	\(\rho_\alpha\) is the probability that the observed label differs from the target label. The
	transition matrix is
	\[
	D_\alpha
	=
	(1-\alpha)I_K
	+
	\frac{\alpha}{K}\mathbf 1\mathbf 1^\top.
	\]
	
	\subsection{Complete categorical model: gradient dynamics}
	\label{app:label-gradient}
	
	Let \(J\) denote the sampled task and \(Y\) the observed label. We have
	\[
	\mathbb P(J=j)=p_j,
	\qquad
	\mathbb P(Y=i\mid J=j)=p_{i|j},
	\]
	with
	\[
	p_{i|j}
	=
	\begin{cases}
		c_\alpha, & i=j,     \\
		\alpha/K, & i\neq j.
	\end{cases}
	\]
	The population loss under label noise is
	\[
	\mathcal L(W)
	=
	-\sum_{j=1}^Kp_j
	\sum_{i=1}^Kp_{i|j}
	\log S_{ij}(W).
	\]
	In task coordinates, differentiation gives
	\begin{equation}
		\nabla\mathcal L(W)
		=
		-\widetilde E(D_\alpha-S)PE^\top,
		\qquad
		P:=\operatorname{diag}(p_1,\ldots,p_K).
		\label{eq:app-label-pop-gradient}
	\end{equation}
	
	For a mini-batch of size \(B\), let \(N_{ij}\) be the number of samples
	with task \(j\) and observed label \(i\), and define the empirical joint
	frequency matrix
	\[
	\widehat T_{ij}:=\frac{N_{ij}}{B}.
	\]
	The empirical task frequencies are
	\[
	q_j:=\sum_i\widehat T_{ij},
	\qquad
	Q:=\operatorname{diag}(q_1,\ldots,q_K).
	\]
	The negative stochastic gradient in task coordinates is therefore
	\begin{equation}
		\widehat T-SQ.
		\label{eq:app-label-stochastic-gradient}
	\end{equation}
	If \(q_j>0\), one may equivalently write
	\(\widehat T_{\cdot j}=q_jR_{\cdot j}\), where \(R_{\cdot j}\) is the
	empirical conditional label distribution. When \(q_j=0\), the column of
	\(R\) is immaterial because only \(RQ=\widehat T\) enters the gradient.
	
	A useful decomposition is
	\begin{equation}
		\widehat T-SQ
		=
		(D_\alpha-S)P
		+
		L
		+
		T,
		\label{eq:app-label-gradient-decomposition}
	\end{equation}
	where
	\begin{align}
		L
		& :=
		\widehat T-D_\alpha Q,
		\label{eq:app-label-noise-term} \\
		T
		& :=
		(D_\alpha-S)(Q-P).
		\label{eq:app-task-sampling-term}
	\end{align}
	Here \(L\) is the conditional label-sampling fluctuation and \(T\) is
	the task-frequency fluctuation. At the noisy population optimum
	\[
	S^*=D_\alpha,
	\]
	the task-frequency term $T$ vanishes exactly, so the persistent local stochasticity
	comes from \(L\).
	
	\subsection{Complete categorical model: exact covariance and reliability}
	\label{app:label-covariance}
	
	From this point through the local-response analysis we work in task coordinates; the orthogonal embedding transforms preserve all covariance, norm, and polar-equivariance statements used below.
	
	Let
	\[
	d_j:=D_\alpha e_j,
	\qquad
	\delta_j:=S e_j-d_j,
	\qquad
	\Gamma_j:=\operatorname{diag}(d_j)-d_jd_j^\top.
	\]
	For one observation \((J,Y)\), define the task-\(j\) positive-gradient contribution
	\[
	X_j:=\mathbf 1\{J=j\}(S e_j-e_Y).
	\]
	Then
	\[
	\mathbb E X_j=p_j\delta_j,
	\qquad
	\mathbb E[X_jX_j^\top]
	=p_j(\Gamma_j+\delta_j\delta_j^\top).
	\]
	Averaging \(B\) independent observations gives
	\begin{equation}
		\operatorname{Cov}(\widehat g_j)
		=\frac1B\left[p_j\Gamma_j+p_j(1-p_j)\delta_j\delta_j^\top\right],
		\label{eq:app-full-gradient-column-cov}
	\end{equation}
	and for \(j\neq k\),
	\begin{equation}
		\operatorname{Cov}(\widehat g_j,\widehat g_k)
		=-\frac{p_jp_k}{B}\delta_j\delta_k^\top.
		\label{eq:app-full-gradient-cross-cov}
	\end{equation}
	Therefore
	\begin{equation}
		\mathrm{SNR}_j^2
		=\frac{Bp_j\|\delta_j\|_2^2}
		{\operatorname{tr}\Gamma_j+(1-p_j)\|\delta_j\|_2^2}.
		\label{eq:app-full-gradient-snr}
	\end{equation}
	At \(S^*=D_\alpha\), all \(\delta_j=0\). Writing \(\widehat G:=SQ-\widehat T\) for the positive stochastic gradient, the population Hessian is block diagonal across task columns with blocks \(p_j\Gamma_j\), so
	\begin{equation}
		\operatorname{Cov}\!\left(\operatorname{vec}\widehat G(W^*)\right)
		=\frac1B H^*
		\label{eq:app-cov-hessian-identity}
	\end{equation}
	on the columnwise zero-sum subspace. This likelihood identity motivates a local structured-noise viewpoint, but it does not imply that applying a polar map implements \((H^*)^{-1/2}\).
	
	We now isolate the conditional label-sampling term. Let \(\ell_j:=Le_j\). From \eqref{eq:app-label-noise-term},
	\[
	\ell_j
	=
	\frac{1}{B}\sum_{b=1}^B
	\mathbf 1\{J_b=j\}
	\bigl(e_{Y_b}-d_j\bigr).
	\]
	Hence
	\[
	\mathbb E[\ell_j]=0.
	\]
	Different task columns are uncorrelated, and
	\begin{equation}
		\operatorname{Cov}(\ell_j)
		=
		\frac{p_j}{B}\Gamma_j.
		\label{eq:app-label-column-cov}
	\end{equation}
	In particular,
	\begin{align}
		\operatorname{Var}(L_{jj})
		& =
		\frac{p_j}{B}c_\alpha\rho_\alpha,
		\label{eq:app-label-diagonal-var} \\
		\operatorname{Var}(L_{ij})
		& =
		\frac{p_j}{B}\frac{\alpha}{K}
		\left(1-\frac{\alpha}{K}\right),
		\qquad i\neq j,
		\label{eq:app-label-offdiag-var}
	\end{align}
	and, for \(i\neq k\),
	\[
	\operatorname{Cov}(L_{ij},L_{kj})
	=
	-\frac{p_j}{B}p_{i|j}p_{k|j}.
	\]
	
	The geometry of \(\Gamma_j\) is especially simple on the zero-sum subspace.
	Let
	\[
	P_0:=I_K-\frac{1}{K}\mathbf 1\mathbf 1^\top.
	\]
	A direct calculation gives
	\begin{equation}
		\Gamma_j
		=
		\frac{\alpha}{K}P_0
		+
		\alpha(1-\alpha)
		(P_0e_j)(P_0e_j)^\top.
		\label{eq:app-label-cov-decomposition}
	\end{equation}
	Define the correct-versus-average-incorrect contrast
	\[
	v_j
	:=
	e_j-
	\frac{1}{K-1}\sum_{i\neq j}e_i.
	\]
	The covariance eigenvalue along the normalized version of \(v_j\) is
	\[
	\lambda_{\parallel}=\alpha c_\alpha,
	\]
	whereas the other \(K-2\) zero-sum directions have eigenvalue
	\[
	\lambda_{\perp}=\frac{\alpha}{K}.
	\]
	Equivalently, if \(\zeta_j\) is the coefficient of \(\ell_j\) along the
	unnormalized vector \(v_j\), then
	\begin{equation}
		\operatorname{Var}(\zeta_j)
		=
		\frac{p_j}{B}c_\alpha\rho_\alpha.
		\label{eq:app-label-longitudinal-var}
	\end{equation}
	
	These formulas identify the two basic expected-count scales. A batch contains
	\(Bp_j\) task-\(j\) examples on average and
	\[
	Bp_j\rho_\alpha
	=
	Bp_j\alpha\frac{K-1}{K}
	\]
	incorrectly labeled task-\(j\) examples on average. The latter scale
	resolves the aggregate correct-versus-incorrect fluctuation. If one
	instead requires each individual incorrect class to be well sampled,
	the stronger scale \(Bp_j\alpha/K\) is relevant.
	
	\subsection{Complete categorical model: local Gaussian limit and structured response}
	\label{app:label-fullmatrix-gaussian}
	The covariance calculation above determines the complete local sampling limit before any isotropic closure. The complete categorical model admits a fixed-dimensional limit that retains every zero-sum fluctuation mode; this separates exact local sampling behavior from the auxiliary closures introduced later only to obtain explicit scalar rates.
	Let
	\[
	\mathcal S=\{A\in\mathbb R^{K\times K}:\mathbf 1^\top A=0\}
	\]
	with the Frobenius inner product. Choose the zero-column-mean representative $W^*$ of a population minimizer and let the Hessian operator satisfy
	\[
	\mathcal H^*[A]e_j=p_j\Gamma_jAe_j.
	\]
	\begin{theorem}[Local sampling limit with complete categorical noise]
		\label{thm:fullmatrix-local-gaussian}
		Fix $K$, $p_j>0$, and $0<\alpha<1$. For deterministic $A\in\mathcal S$, let $W_B=W^*+A/\sqrt B$. Then the positive mini-batch gradient obeys
		\begin{equation}
			\sqrt B\,\widehat G_B(W_B)
			\xrightarrow{d}
			\mathcal H^*[A]+Z,
			\label{eq:fullmatrix-local-clt}
		\end{equation}
		where $Z\in\mathcal S$ is centered Gaussian with vectorized covariance $\mathcal H^*$. Moreover,
		\begin{equation}
			\lim_{B\to\infty}
			\mathbb E\,\operatorname{msgn}(\widehat G_B(W_B))
			=
			\mathbb E\,\operatorname{msgn}(\mathcal H^*[A]+Z).
			\label{eq:fullmatrix-local-polar-limit}
		\end{equation}
		The limit keeps $K,p,\alpha$ fixed and is not uniform over vanishing task frequencies or vanishing label noise.
	\end{theorem}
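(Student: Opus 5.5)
We would split the mini-batch gradient at $W_B$ into a deterministic drift part and a centered sampling part. Then we would apply a multivariate CLT to the sampling part, and finally pass the polar map through the limit by a continuous-mapping argument combined with bounded convergence.

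\textbf{Step 1: decomposition.} By \eqref{eq:app-label-gradient-decomposition}, the positive gradient is
\[
\widehat G_B=SQ-\widehat T=(S-D_\alpha)P-L-T,
\qquad
T=(D_\alpha-S)(Q-P),
\]
with $S=S(W_B)$. Since the softmax is smooth and $S(W^*)=D_\alpha$, a first-order expansion of the softmax Jacobian at $W^*$ gives
\[
\sqrt B\,(S(W_B)-D_\alpha)P\to\mathcal H^*[A].
\]
Here the Jacobian of column $j$ of the softmax is $\operatorname{diag}(d_j)-d_jd_j^\top=\Gamma_j$, so column $j$ of the limit is $p_j\Gamma_jAe_j$; the restriction $A\in\mathcal S$ is consistent with zero column sums. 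The task-frequency term satisfies $\sqrt B\,T=O(B^{-1/2})\cdot\sqrt B(Q-P)$, and $\sqrt B(Q-P)=O_{\mathbb P}(1)$, so $\sqrt B\,T\to0$ in probability.

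\textbf{Step 2: CLT for the label term.} The term $-\sqrt B L$ is a normalized sum of $B$ i.i.d.\ centered, bounded matrices $\mathbf 1\{J_b=j\}(d_j-e_{Y_b})$ placed in column $J_b$. By \eqref{eq:app-label-column-cov}, distinct columns are uncorrelated and column $j$ has covariance $p_j\Gamma_j$, so the covariance is exactly $\mathcal H^*$. The multivariate CLT then gives $-\sqrt B L\xrightarrow d Z$ on $\mathcal S$. Slutsky's lemma, combined with Step 1, yields \eqref{eq:fullmatrix-local-clt}.

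\textbf{Step 3: passing $\operatorname{msgn}$ through the limit.} By scale invariance, $\operatorname{msgn}(\widehat G_B)=\operatorname{msgn}(\sqrt B\,\widehat G_B)$, and $\|\operatorname{msgn}\|_F\le\sqrt K$. Hence the expectations converge provided $\operatorname{msgn}$ is continuous at almost every point of the law of $\mathcal H^*[A]+Z$; we would then invoke the continuous mapping theorem and uniform boundedness. This is the main obstacle, because $\operatorname{msgn}$ is discontinuous wherever the rank drops. Two facts resolve it.
\begin{itemize}
\item Every matrix in $\mathcal S$ satisfies $\mathbf 1^\top X=0$, so its rank is at most $K-1$. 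On $\mathcal S$, $\operatorname{msgn}$ is continuous at every $X$ of rank exactly $K-1$, since the positive singular spectrum stays bounded away from zero nearby and the partial polar factor is smooth on the fixed-rank manifold (cf.\ Lemma~\ref{lem:app-fixed-rank-polar}).
\item The prelimit matrices $\sqrt B\widehat G_B$ also lie in $\mathcal S$, because the columns of both $SQ$ and $\widehat T$ sum to $q_j$.
\end{itemize}
It then suffices to show that $\mathcal H^*[A]+Z$ has rank $K-1$ almost surely. Because each $\Gamma_j$ is positive definite on $P_0\mathbb R^K$ (by \eqref{eq:app-label-cov-decomposition}, using $\alpha>0$) and $p_j>0$, the Gaussian $Z$ is nondegenerate on the $K(K-1)$-dimensional space $\mathcal S$, so it has a density there. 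The rank-deficient subset $\{X\in\mathcal S:\operatorname{rank}X<K-1\}$ is a proper real-algebraic subset of $\mathcal S$, given by the vanishing of all $(K-1)$-minors, which are not identically zero on $\mathcal S$. It therefore has Lebesgue measure zero, and the shifted Gaussian gives it probability zero. This yields \eqref{eq:fullmatrix-local-polar-limit}.

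Finally, nonuniformity in $p_j\to0$ or $\alpha\to0$ is visible from the construction: the CLT rates and the nondegeneracy constant of $\mathcal H^*$ both degrade in those limits, which justifies the caveat in the statement.
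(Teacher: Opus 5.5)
Your proposal is correct and follows the paper's proof essentially step for step: a first-order softmax expansion of the mean gradient giving $\mathcal H^*[A]$, a CLT with covariance $\mathcal H^*$ plus Slutsky, almost-sure rank $K-1$ of the Gaussian limit inside $\mathcal S$, continuity of $\operatorname{msgn}$ at rank-$(K-1)$ points relative to $\mathcal S$, and then the continuous mapping theorem together with boundedness and positive scale invariance. The differences are minor refinements. Splitting off the $W$-independent label term $L$ and showing $\sqrt B\,T=O_{\mathbb P}(B^{-1/2})$ lets you use the ordinary i.i.d.\ CLT where the paper invokes a triangular-array CLT. You also spell out the Lebesgue-null argument for the rank-deficient set and obtain continuity directly from a spectral gap, whereas the paper asserts full rank in one line and gets continuity from the rectangular reduction $G=XB$, $\operatorname{msgn}(G)=X\operatorname{msgn}(B)$.
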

	\begin{proof}
		Smoothness of the finite-dimensional softmax loss gives
		$\sqrt B\,\nabla\mathcal L(W_B)=\mathcal H^*[A]+o(1)$.
		Per-sample gradients are uniformly bounded near $W^*$ and their covariance converges to $\mathcal H^*$, so the triangular-array central limit theorem gives \eqref{eq:fullmatrix-local-clt}. Choose an orthonormal basis $X\in\mathbb R^{K\times(K-1)}$ for the zero-sum subspace. In these coordinates the Gaussian covariance is positive definite, so $X^\top(\mathcal H^*[A]+Z)$ has full row rank almost surely; equivalently, $\mathcal H^*[A]+Z$ has rank $K-1$ almost surely. Every stochastic gradient also lies in $\mathcal S$. Writing any $G\in\mathcal S$ as $G=XB$ gives $\operatorname{msgn}(G)=X\operatorname{msgn}(B)$; the rectangular partial polar map is continuous on the open set of full-row-rank $(K-1)\times K$ matrices. Hence the square partial polar map is continuous at every rank-$(K-1)$ point relative to $\mathcal S$, and the continuous mapping theorem applies on this support. Since every entry of a partial polar factor is bounded by one, convergence in distribution upgrades to convergence of expectations, and positive scale invariance removes the factor $\sqrt B$.
	\end{proof}
	
	The theorem isolates the remaining question: how does the polar map respond to an anisotropic Gaussian law? In orthonormal coordinates on $\mathcal S$, let $Z$ be centered nondegenerate Gaussian with covariance $\Sigma$ and define
	\[
	F_\Sigma(A)=\mathbb E\,\operatorname{msgn}(Z+A).
	\]
	Gaussian density differentiation gives the exact linear-response operator
	\begin{equation}
		DF_\Sigma(0)[A]
		=
		\mathbb E\!\left[
		\operatorname{msgn}(Z)
		\,\bigl\langle
		\Sigma^{-1}\operatorname{vec}_{\mathcal S}A,
		\operatorname{vec}_{\mathcal S}Z
		\bigr\rangle
		\right].
		\label{eq:structured-gaussian-response}
	\end{equation}
	The inverse is taken only on the zero-sum support, not on softmax gauge directions. The same Gaussian-smoothing argument as Appendix~\ref{app:isotropic-muon} implies a cubic fixed-dimensional remainder after this linear term. Unlike the fully isotropic setting, however, no left--right orthogonal symmetry forces \eqref{eq:structured-gaussian-response} to be a scalar multiple of $A$. Anisotropy is therefore necessary for nontrivial directional response in this local Gaussian picture, but is not by itself sufficient to guarantee improved conditioning.
	
	The exact Gaussian limit above retains the full anisotropic response but does not simplify to a scalar closed form. The following subsections therefore study complementary reductions: first a longitudinal counting model that is exact within its projection, and then an isotropic covariance closure that preserves total label-noise energy while sacrificing anisotropy.
	
	\subsection{Projected counting model: task and corruption counts}
	\label{app:label-coverage-dynamics}
	
	We quantify the occurrence-limited and corruption-sparse regimes in a projected target-versus-average-nontarget model. The projection retains the longitudinal component along \(v_j\) and removes the transverse categorical fluctuations in \eqref{eq:app-label-cov-decomposition}. Within this reduced model, the counting argument below is exact; the Gaussian approximation enters only later.
	
	Consider a symmetry-equivalent task direction with frequency \(p\) and margin \(\Delta\). Write
	\[
	u(\Delta)
	:=
	1-S_{jj}
	=
	\frac{K-1}{e^\Delta+K-1}.
	\]
	Let \(N\) be the number of samples of this task in a batch and let \(M\) be the number of those samples whose observed label is incorrect. For each of the \(B\) draws, the three possibilities have probabilities
	\[
	pc_\alpha,
	\qquad
	p\rho_\alpha,
	\qquad
	1-p,
	\]
	corresponding to a correctly labeled occurrence of the task, an incorrectly labeled occurrence, and another task, respectively. Hence
	\begin{equation}
		\mathbb P(N=n,M=m)
		=
		\frac{B!}{(n-m)!\,m!\,(B-n)!}
		(pc_\alpha)^{n-m}
		(p\rho_\alpha)^m
		(1-p)^{B-n},
		\label{eq:app-label-count-law}
	\end{equation}
	for \(0\le m\le n\le B\).
	
	The correct-coordinate residual in the stochastic gradient column is
	\[
	\widehat T_{jj}-S_{jj}q_j
	=
	\frac{N-M-NS_{jj}}{B}.
	\]
	Since the residual column has zero sum, this is also its coefficient along the unnormalized contrast \(v_j\). Therefore the longitudinal random coefficient is
	\begin{equation}
		A(\Delta)
		:=
		\frac{Nu(\Delta)-M}{B}.
		\label{eq:app-label-longitudinal-coefficient}
	\end{equation}
	The corresponding scalar sign response is
	\begin{equation}
		m_{B,p}(\Delta)
		:=
		\mathbb E\!\left[\operatorname{sgn}(A(\Delta))\right]
		=
		\sum_{n=0}^B\sum_{m=0}^n
		\operatorname{sgn}\!\left(nu(\Delta)-m\right)
		\mathbb P(N=n,M=m),
		\label{eq:app-label-exact-sign-response}
	\end{equation}
	with the convention \(\operatorname{sgn}(0)=0\).
	
	To connect this scalar response to the projected Muon dynamics, consider \(s\) exchangeable directions with coefficients \(A_1,\ldots,A_s\), and let
	\[
	X\in\mathbb R^{s\times(s-1)}
	\]
	have orthonormal columns spanning the zero-sum subspace. Define
	\[
	\mathcal A_s(\Delta)
	:=
	X^\top\operatorname{diag}(A_1(\Delta),\ldots,A_s(\Delta))X.
	\]
	Permutation symmetry implies
	\begin{equation}
		\mathbb E\!\left[
		\operatorname{msgn}(\mathcal A_s(\Delta))
		\right]
		=
		r_s(\Delta)I_{s-1}
		\label{eq:app-label-count-projected-response}
	\end{equation}
	for a scalar \(r_s(\Delta)\). By Cauchy interlacing between the eigenvalues of \(\mathcal A_s\) and the diagonal entries \(A_1,\ldots,A_s\),
	\begin{equation}
		\left|
		r_s(\Delta)-m_{B,p}(\Delta)
		\right|
		\le
		\frac{2}{s-1}.
		\label{eq:app-label-count-interlacing-bound}
	\end{equation}
	Thus \(m_{B,p}\) gives the large-sector mean Muon response in the projected model, with an explicit \(O(s^{-1})\) correction.
	
	\paragraph{Sparse-task limit.}
	Suppose \(Bp\to0\). With probability \(1-O((Bp)^2)\), the task appears at most once. Whenever it appears once, an observation matching the target label gives \(A(\Delta)>0\), whereas a mismatched observation gives \(A(\Delta)<0\), for every finite \(\Delta\). Therefore
	\begin{equation}
		m_{B,p}(\Delta)
		=
		Bp(c_\alpha-\rho_\alpha)
		+O\!\left((Bp)^2\right)
		=
		Bp(1-2\rho_\alpha)
		+O\!\left((Bp)^2\right).
		\label{eq:app-label-sparse-response}
	\end{equation}
	This expansion is uniform for \(\Delta\) in compact sets. Hence, for $\rho_\alpha<1/2$, the scalar counting dynamics have positive mean margin drift proportional to $Bp$. For the finite-sector matrix response $r_s$, the additive bound in \eqref{eq:app-label-count-interlacing-bound} alone does not establish the same leading rate as $Bp\to0$; transferring this asymptotic requires additional control, such as $s^{-1}=o(Bp)$.
	
	\paragraph{Large-margin response.}
	We next let \(\Delta\to\infty\), so that \(u(\Delta)\downarrow0\). For each fixed batch realization,
	\[
	\operatorname{sgn}(A(\Delta))
	\longrightarrow
	\mathbf 1\{N>0,M=0\}
	-
	\mathbf 1\{M>0\}.
	\]
	Now
	\[
	\mathbb P(M=0)
	=
	(1-p\rho_\alpha)^B,
	\qquad
	\mathbb P(N=0)
	=
	(1-p)^B.
	\]
	Therefore dominated convergence gives
	\begin{equation}
		\lim_{\Delta\to\infty}
		m_{B,p}(\Delta)
		=
		\chi_{B,p}
		:=
		2(1-p\rho_\alpha)^B
		-(1-p)^B
		-1.
		\label{eq:app-label-large-margin-response}
	\end{equation}
	Since \(u(\Delta)\) decreases with \(\Delta\), every summand in \eqref{eq:app-label-exact-sign-response} is nonincreasing in \(\Delta\). Thus \(m_{B,p}(\Delta)\) is nonincreasing and \(\chi_{B,p}\) is its minimum tail value.
	
	Consider the large-sector projected mean-field recursion
	\begin{equation}
		\Delta^{\mathrm{mf}}_{t+1}
		-
		\Delta^{\mathrm{mf}}_t
		=
		\eta m_{B,p}(\Delta^{\mathrm{mf}}_t).
		\label{eq:app-label-mean-field-recursion}
	\end{equation}
	If \(\chi_{B,p}>0\), then \(m_{B,p}(\Delta)\ge\chi_{B,p}>0\) for every finite \(\Delta\), so the limiting mean-field recursion in equation~\ref{eq:app-label-mean-field-recursion} satisfies \(\Delta_t^{\mathrm{mf}}\to\infty\). Since \(m_{B,p}(\Delta)\to\chi_{B,p}\) along that trajectory,
	\begin{equation}
		\frac{\Delta_t^{\mathrm{mf}}}{\eta t}
		\longrightarrow
		\chi_{B,p}.
		\label{eq:app-label-tail-linear-rate}
	\end{equation}
	For a finite projected sector, \eqref{eq:app-label-count-interlacing-bound} additionally guarantees positive mean drift whenever $2/(s-1)<\chi_{B,p}$. Consequently,
	\[
	1-S_{jj}(t)
	\asymp
	(K-1)e^{-\eta\chi_{B,p}t}
	\]
	along this target-interpolating mean-field trajectory.
	
	\paragraph{Corruption-sparse window and corruption threshold.}
	We use \(\ll\) and \(\gg\) in this paragraph as scale-separation notation. For fixed nonzero \(\rho_\alpha\), the corruption-sparse case is the intermediate window
	\[
	1\ll Bp\ll\rho_\alpha^{-1}.
	\]
	Within this window, \((1-p)^B\le e^{-Bp}\) and a binomial expansion gives
	\begin{equation}
		\chi_{B,p}
		=
		1-2Bp\rho_\alpha
		+O\!\left(
		(Bp\rho_\alpha)^2+e^{-Bp}
		\right).
		\label{eq:app-label-corruption-sparse-response}
	\end{equation}
	Thus the projected large-margin growth rate is close to the unit deterministic-label full-batch rate and is no longer occurrence-limited.
	
	A sharper crossover is obtained in the Poisson scaling
	\[
	Bp\to\infty,
	\qquad
	p\rho_\alpha\to0,
	\qquad
	Bp\rho_\alpha\to\lambda.
	\]
	Then
	\begin{equation}
		\chi_{B,p}
		\longrightarrow
		2e^{-\lambda}-1.
		\label{eq:app-label-poisson-tail-response}
	\end{equation}
	Hence the large-margin regime has positive drift for \(\lambda<\log2\) and negative drift for \(\lambda>\log2\). Equivalently, once task absence is negligible, the crossover batch scale is
	\begin{equation}
		B_{\mathrm{noise}}
		\simeq
		\frac{\log2}{p\rho_\alpha}.
		\label{eq:app-label-noise-threshold}
	\end{equation}
	The regime \(Bp\gg1\) but \(Bp\rho_\alpha\ll1\) therefore has a quantitatively stable large-margin regime in the reduced model, while resolved corruption reverses the projected large-margin drift. The finite population optimum exists for every $0<\alpha<1$, independently of batch size; the local response around it is analyzed separately below.
	
	\subsection{Moment-matched isotropic covariance closure}
	\label{app:random-spectrum}
	
	A Gaussian proxy with the exact column covariance of \(L\) can be represented as
	\begin{equation}
		L_{\mathrm G}
		=
		\frac{1}{\sqrt B}
		P_0
		\left[
		\sqrt{\frac{\alpha}{K}}Z
		+
		\sqrt{\alpha(1-\alpha)}\operatorname{diag}(g)
		\right]
		P^{1/2},
		\label{eq:app-label-gaussian-representation}
	\end{equation}
	where \(Z\) has i.i.d. standard Gaussian entries and \(g\) has i.i.d.
	standard Gaussian coordinates, independently. This proxy makes the anisotropic categorical covariance explicit: it contains both a projected dense component and a projected diagonal component.
	
	For a closed-form polar response, we now replace this anisotropic covariance by a moment-matched isotropic covariance on the zero-sum subspace. The exact covariance in \eqref{eq:app-label-cov-decomposition} has one longitudinal eigenvalue \(\alpha c_\alpha\) and \(K-2\) transverse eigenvalues \(\alpha/K\); the closure preserves their total covariance energy while removing the directional anisotropy. This is the additional approximation that allows us to reuse the Gaussian polar response from Section~\ref{sec:isotropic}.
	
	From \eqref{eq:app-label-cov-decomposition},
	\begin{equation}
		\operatorname{tr}(\Gamma_j)
		=
		\alpha(2-\alpha)\frac{K-1}{K}.
		\label{eq:app-label-cov-trace}
	\end{equation}
	The zero-sum subspace has dimension \(K-1\). We therefore define
	\begin{equation}
		\tau_\alpha^2
		:=
		\frac{\operatorname{tr}(\Gamma_j)}{K-1}
		=
		\frac{\alpha(2-\alpha)}{K},
		\label{eq:app-label-isotropic-variance}
	\end{equation}
	and make the moment-matched covariance approximation
	\begin{equation}
		\Gamma_j
		\approx
		\tau_\alpha^2 P_0.
		\label{eq:app-label-cov-isotropization}
	\end{equation}
	This approximation preserves the total label-noise variance on the
	zero-sum subspace but removes the longitudinal anisotropy.
	
	Let
	\[
	X\in\mathbb R^{K\times(K-1)}
	\]
	have orthonormal columns spanning the zero-sum subspace, so that
	\[
	X^\top X=I_{K-1},
	\qquad
	XX^\top=P_0.
	\]
	Since every stochastic-gradient column sums to zero, we may write
	\[
	G=X\widehat G,
	\qquad
	\widehat G:=X^\top G\in\mathbb R^{(K-1)\times K}.
	\]
	The partial polar factors are related by
	\begin{equation}
		\operatorname{msgn}(G)
		=
		X\operatorname{msgn}(\widehat G).
		\label{eq:app-label-polar-projection}
	\end{equation}
	
	For an explicit closed-form response, we additionally use a local
	frequency-homogeneous approximation: the symmetry-equivalent columns
	under consideration are assigned a common frequency scale \(p\). This is a local spectral proxy, not an exact reduction of a heterogeneous matrix: the polar factor couples columns, and their unequal noise scales generally break right-orthogonal symmetry. Varying \(p\) compares proxies at different local frequency scales; it does not vary the common probability of all \(K\) tasks in one normalized uniform distribution.
	Under \eqref{eq:app-label-cov-isotropization}, the projected Gaussian
	label noise is then approximated by
	\begin{equation}
		\widehat L
		\approx
		\sigma_{p,\alpha}\Xi,
		\qquad
		\sigma_{p,\alpha}^2
		:=
		\frac{p\tau_\alpha^2}{B}
		=
		\frac{p\alpha(2-\alpha)}{BK},
		\label{eq:app-label-isotropic-noise-scale}
	\end{equation}
	where
	\[
	\Xi\in\mathbb R^{(K-1)\times K}
	\]
	has i.i.d.\ standard Gaussian entries.
	
	We use the rectangular analogue of
	Proposition~\ref{prop:gaussian-isotropization}. For a rectangular matrix \(A=U\Sigma V^\top\) with singular values \(\sigma_i\), we use the same partial-polar definition
	\[
	\operatorname{msgn}(A)
	:=
	U\operatorname{diag}(\mathbf 1\{\sigma_i>0\})V^\top.
	\]
	Thus positive singular values are mapped to one and zero singular values remain zero. If
	\(\Xi\in\mathbb R^{m\times n}\) has i.i.d.\ standard Gaussian entries,
	define
	\begin{equation}
		c_{m,n}
		:=
		\frac{1}{mn}
		\mathbb E\|\Xi\|_*.
		\label{eq:app-rectangular-cmn}
	\end{equation}
	For fixed \(m,n\), Gaussian convolution and
	\(O(m)\times O(n)\) equivariance give
	\begin{equation}
		\mathbb E[
		\operatorname{msgn}(H+\sigma\Xi)
		]
		=
		\frac{c_{m,n}}{\sigma}H
		+
		O_{m,n}\!\left(
		\frac{\|H\|_F^3}{\sigma^3}
		\right)
		\label{eq:app-rectangular-isotropic-response}
	\end{equation}
	as \(\|H\|_F/\sigma\to0\). The coefficient follows by taking the
	trace of the derivative:
	\[
	mn\,c_{m,n}
	=
	\mathbb E
	\langle
	\operatorname{msgn}(\Xi),\Xi
	\rangle_F
	=
	\mathbb E\|\Xi\|_*.
	\]
	Thus \eqref{eq:app-rectangular-isotropic-response} is the direct
	rectangular counterpart of the square Gaussian response derived in
	Appendix~\ref{app:isotropic-muon}.
	
	\subsection{Local Muon rate under the covariance closure}
	\label{app:label-dynamics}
	
	Under the equal-incorrect-logit reduction in a symmetry-equivalent sector, define
	\[
	\Delta_j
	:=
	\overline W_{jj}
	-
	\frac{1}{K-1}
	\sum_{i\neq j}\overline W_{ij}.
	\]
	Then
	\[
	S_{jj}
	=
	\frac{e^{\Delta_j}}{e^{\Delta_j}+K-1}.
	\]
	The noisy population optimum satisfies
	\[
	S_{jj}^*
	=
	c_\alpha,
	\]
	which gives
	\begin{equation}
		\Delta_j^*
		=
		\log\left(
		\frac{K}{\alpha}-(K-1)
		\right).
		\label{eq:app-label-optimal-margin}
	\end{equation}
	
	Define
	\[
	r_j
	:=
	c_\alpha-S_{jj}.
	\]
	Under the equal-incorrect-logit reduction,
	\begin{equation}
		(D_\alpha-S)e_j
		=
		r_jv_j,
		\qquad
		v_j
		=
		e_j-\frac{1}{K-1}\sum_{i\neq j}e_i.
		\label{eq:app-label-signal-direction}
	\end{equation}
	For a task of local frequency scale \(p\), the projected deterministic
	signal is therefore
	\begin{equation}
		\widehat H_j
		=
		pr_j
		x_j e_j^\top,
		\qquad
		x_j:=X^\top v_j.
		\label{eq:app-label-projected-signal}
	\end{equation}
	Because \(v_j\) lies in the zero-sum subspace,
	\begin{equation}
		\|x_j\|_2^2
		=
		\|v_j\|_2^2
		=
		\frac{K}{K-1}.
		\label{eq:app-label-vnorm}
	\end{equation}
	
	Combining equations~\ref{eq:app-label-isotropic-noise-scale} and~\ref{eq:app-rectangular-isotropic-response} with
	\(m=K-1\) and \(n=K\) gives
	\begin{equation}
		\mathbb E[
		\operatorname{msgn}(\widehat H_j+\widehat L)
		]
		\approx
		\frac{c_{K-1,K}}{\sigma_{p,\alpha}}
		\widehat H_j
		\label{eq:app-label-local-polar-response}
	\end{equation}
	in the small-signal regime near the population optimum under label noise.
	
	The margin functional is
	\[
	\Delta_j
	=
	v_j^\top\overline W e_j.
	\]
	Using equations~\ref{eq:app-label-polar-projection}, \ref{eq:app-label-projected-signal}, and~\ref{eq:app-label-vnorm}, the expected Muon margin increment becomes
	\begin{align}
		\mathbb E[
		\Delta_j(t+1)-\Delta_j(t)
		\mid\Delta_j(t)
		]
		&\approx
		\eta
		\frac{c_{K-1,K}}{\sigma_{p,\alpha}}
		pr_j
		\frac{K}{K-1}.
		\label{eq:app-label-margin-before-linearization}
	\end{align}
	
	Finally,
	\[
	\frac{\partial S_{jj}}{\partial\Delta_j}
	=
	S_{jj}(1-S_{jj}),
	\]
	and at the optimum,
	\[
	S_{jj}^*(1-S_{jj}^*)
	=
	c_\alpha\rho_\alpha.
	\]
	Hence
	\begin{equation}
		r_j
		=
		c_\alpha\rho_\alpha
		\bigl(\Delta_j^*-\Delta_j\bigr)
		+
		O\!\left(
		(\Delta_j-\Delta_j^*)^2
		\right).
		\label{eq:app-label-delta-linearization}
	\end{equation}
	Since
	\[
	\rho_\alpha
	=
	\alpha\frac{K-1}{K},
	\]
	substituting equations~\ref{eq:app-label-isotropic-noise-scale} and~\ref{eq:app-label-delta-linearization} into equation~\ref{eq:app-label-margin-before-linearization} yields
	\begin{equation}
		\mathbb E[
		\Delta_j(t+1)-\Delta_j(t)
		\mid\Delta_j(t)
		]
		\approx
		\eta\,
		c_{K-1,K}
		\sqrt{BKp}\,
		\frac{c_\alpha\sqrt{\alpha}}{\sqrt{2-\alpha}}
		\bigl(
		\Delta_j^*-\Delta_j(t)
		\bigr).
		\label{eq:app-label-explicit-rate-finite-K}
	\end{equation}
	Thus the covariance and locally homogeneous frequency approximations predict a local Muon conditional-mean contraction coefficient proportional to \(\sqrt p\).
	
	For comparison, under the same symmetric-logit reduction, the conditional mean SGD update obeys
	\[
	\mathbb E[\Delta_j(t+1)-\Delta_j(t)\mid W_t]
	=
	\eta\frac{K}{K-1}p_jr_j(t),
	\]
	so its exact local conditional-mean contraction coefficient is
	\begin{equation}
		\eta\frac{K}{K-1}
		p_jc_\alpha\rho_\alpha,
		\label{eq:app-label-sgd-rate}
	\end{equation}
	which scales linearly with \(p_j\).
	
	For large \(K\), the quarter-circle law gives
	\begin{equation}
		c_{K-1,K}\sqrt K
		\longrightarrow
		\frac{8}{3\pi},
		\label{eq:app-label-cmn-largeK}
	\end{equation}
	while \(c_\alpha\to1-\alpha\). Therefore
	\eqref{eq:app-label-explicit-rate-finite-K} reduces to
	\begin{equation}
		\mathbb E[
		\Delta_j(t+1)-\Delta_j(t)
		\mid\Delta_j(t)
		]
		\approx
		\eta\,
		\frac{8}{3\pi}
		\frac{(1-\alpha)\sqrt{\alpha}}{\sqrt{2-\alpha}}
		\sqrt{Bp}\,
		\bigl(
		\Delta_j^*-\Delta_j(t)
		\bigr).
		\label{eq:app-label-explicit-rate-large-K}
	\end{equation}
	
	\paragraph{Scope of the approximation.}
	The covariance formulas in equations~\ref{eq:app-label-column-cov}--\ref{eq:app-label-cov-decomposition} are exact. The explicit rate
	\eqref{eq:app-label-explicit-rate-finite-K} uses two additional
	approximations: the anisotropic categorical covariance is replaced by
	the isotropic covariance
	\eqref{eq:app-label-cov-isotropization} with matching trace, and the
	columns in the local spectral sector are assigned a common frequency
	scale \(p\). These approximations restore the left-right isotropy needed
	for a closed-form polar response while retaining the total label-noise
	energy. They are intended for the corruption-resolved local regime; the
	occurrence-limited and corruption-sparse regimes are instead described by the
	counting analysis in Appendix~\ref{app:label-coverage-dynamics}.
	
	\section{Muon--SignSGD Sparse-Support Contrast: Complete Derivation}
	\label{app:two-task-signsgd}
	
	This appendix derives the two-task SignSGD dynamics and makes explicit
	how the result depends on rotations of the embedding space. We use the
	elementwise convention \(\operatorname{sgn}(0)=0\).
	
	\subsection{Rank-one reduction}
	
	In task coordinates, the two-task negative stochastic gradient is
	\[
	vg_t^\top,
	\qquad
	v=
	\begin{bmatrix}
		1 \\
		-1
	\end{bmatrix},
	\qquad
	g_t=
	\begin{bmatrix}
		q_1(t)u_1(t) \\
		-q_2(t)u_2(t)
	\end{bmatrix}.
	\]
	Hence, in the original parameter coordinates, the negative gradient is
	\[
	\widetilde Evg_t^\top E^\top =
	(\widetilde Ev)(Eg_t)^\top.
	\]
	For arbitrary vectors \(a,b\),
	\[
	\operatorname{sgn}(ab^\top) =
	\operatorname{sgn}(a)\operatorname{sgn}(b)^\top,
	\]
	where the sign is applied elementwise. Therefore the SignSGD update in
	task coordinates is
	\[
	\overline W_{t+1}-\overline W_t =
	\eta
	\widetilde E^\top
	\operatorname{sgn}(\widetilde Ev)
	\operatorname{sgn}(Eg_t)^\top
	E.
	\]
	
	Define the margin vector
	\[
	h_t
	:=
	\begin{bmatrix}
		\Delta_1(t) \\
		-\Delta_2(t)
	\end{bmatrix} =
	\overline W_t^\top v.
	\]
	Then
	\begin{align*}
		h_{t+1}-h_t
		& =
		\eta
		E^\top
		\operatorname{sgn}(Eg_t)
		\operatorname{sgn}(\widetilde Ev)^\top
		\widetilde Ev \\
		& =
		\eta\widetilde c\,
		E^\top\operatorname{sgn}(Eg_t),
	\end{align*}
	where
	\begin{equation}
		\widetilde c
		:=
		\operatorname{sgn}(\widetilde Ev)^\top\widetilde Ev =
		\|\widetilde Ev\|_1 >0.
		\label{eq:app-signsgd-ctilde}
	\end{equation}
	Thus the answer embedding contributes only the positive scalar
	\(\widetilde c\) to the two margins, whereas the query embedding
	determines their coupling.
	
	\subsection{Rotation-dependent margin recursion}
	
	Consider the representative rotation
	\[
	E=
	\begin{bmatrix}
		c  & s \\
		-s & c
	\end{bmatrix},
	\qquad
	c:=\cos\theta,
	\qquad
	s:=\sin\theta,
	\qquad
	0<\theta<\frac{\pi}{2}.
	\]
	Then
	\[
	Eg_t =
	\begin{bmatrix}
		cq_1u_1-sq_2u_2 \\
		-sq_1u_1-cq_2u_2
	\end{bmatrix}.
	\]
	The second coordinate is strictly negative because
	\(q_1+q_2=1\) and \(u_1,u_2>0\). Define
	\begin{equation}
		\xi_t
		:=
		\operatorname{sgn}
		\left(
		cq_1(t)u_1(t)-sq_2(t)u_2(t)
		\right).
		\label{eq:app-signsgd-xi}
	\end{equation}
	Hence
	\[
	\operatorname{sgn}(Eg_t) =
	\begin{bmatrix}
		\xi_t \\
		-1
	\end{bmatrix},
	\]
	and
	\[
	E^\top
	\operatorname{sgn}(Eg_t) =
	\begin{bmatrix}
		c\xi_t+s \\
		s\xi_t-c
	\end{bmatrix}.
	\]
	Since the second entry of \(h_t\) is \(-\Delta_2(t)\), the exact margin
	recursion is
	\begin{align}
		\Delta_1(t+1)-\Delta_1(t)
		& =
		\eta\widetilde c(c\xi_t+s),
		\label{eq:signsgd-margin-1} \\
		\Delta_2(t+1)-\Delta_2(t)
		& =
		\eta\widetilde c(c-s\xi_t).
		\label{eq:signsgd-margin-2}
	\end{align}
	Equivalently, for
	\[
	D_t:=\Delta_1(t)-\Delta_2(t),
	\]
	we have
	\begin{equation}
		D_{t+1}-D_t =
		\eta\widetilde c
		\left(
		(c+s)\xi_t+s-c
		\right).
		\label{eq:signsgd-gap}
	\end{equation}
	In particular,
	\[
	\xi_t=1
	\quad\Longrightarrow\quad
	D_{t+1}-D_t =
	2\eta\widetilde c\,s,
	\]
	and
	\[
	\xi_t=-1
	\quad\Longrightarrow\quad
	D_{t+1}-D_t =
	-2\eta\widetilde c\,c.
	\]
	If \(\xi_t=0\), then
	\[
	D_{t+1}-D_t =
	\eta\widetilde c(s-c).
	\]
	
	\subsection{Full-batch reference dynamics}
	
	Set \(q_i(t)=p_i\). Define the sum and gap
	\[
	A_t:=\Delta_1(t)+\Delta_2(t),
	\qquad
	D_t:=\Delta_1(t)-\Delta_2(t).
	\]
	From equations~\ref{eq:signsgd-margin-1}--\ref{eq:signsgd-margin-2},
	\[
	A_{t+1}-A_t =
	\begin{cases}
		2\eta\widetilde c\,c,  & \xi_t=1,  \\
		\eta\widetilde c(c+s), & \xi_t=0,  \\
		2\eta\widetilde c\,s,  & \xi_t=-1.
	\end{cases}
	\]
	Therefore
	\begin{equation}
		A_t
		\ge
		2\eta\widetilde c
		\min\{c,s\}\,t,
		\label{eq:app-signsgd-sum-growth}
	\end{equation}
	so \(A_t\to\infty\).
	
	For fixed \(A\), write
	\[
	\Delta_1=\frac{A+D}{2},
	\qquad
	\Delta_2=\frac{A-D}{2}.
	\]
	The switching variable in \eqref{eq:app-signsgd-xi} has the same sign
	as
	\begin{equation}
		\Lambda(A,D)
		:=
		\log\frac{p_1c}{p_2s}
		+
		\log\left(1+e^{(A-D)/2}\right)
		-
		\log\left(1+e^{(A+D)/2}\right).
		\label{eq:app-signsgd-switching-function}
	\end{equation}
	For each fixed \(A\), \(\Lambda(A,D)\) is strictly decreasing in \(D\),
	with limits \(+\infty\) and \(-\infty\) as \(D\to-\infty\) and
	\(D\to+\infty\), respectively. Hence there is a unique switching
	threshold \(T(A)\) satisfying
	\[
	\Lambda(A,T(A))=0.
	\]
	Moreover,
	\begin{equation}
		T(A)
		\longrightarrow
		D_\theta
		:=
		\log\frac{p_1}{p_2}
		+
		\log\cot\theta
		\qquad
		\text{as }A\to\infty.
		\label{eq:app-signsgd-threshold-limit}
	\end{equation}
	
	On either strict side of the switching threshold, the recursion moves the
	gap toward that threshold:
	\[
	D_t<T(A_t)
	\quad\Longrightarrow\quad
	\xi_t=1,
	\]
	while
	\[
	D_t>T(A_t)
	\quad\Longrightarrow\quad
	\xi_t=-1.
	\]
	At equality, \(\xi_t=0\) and
	\[
	D_{t+1}-D_t=\eta\widetilde c(s-c),
	\]
	so the one-step displacement is uniformly bounded. Since \(T(A)\) is
	bounded for \(A\ge0\), these one-step bounds imply that \(D_t\) is
	bounded. Combining this with \eqref{eq:app-signsgd-sum-growth} gives
	\[
	\Delta_1(t)\to\infty,
	\qquad
	\Delta_2(t)\to\infty.
	\]
	Consequently, \eqref{eq:app-signsgd-threshold-limit} applies along the
	trajectory.
	
	More precisely, for every \(\varepsilon>0\), for all sufficiently large
	\(t\),
	\[
	D_\theta-\varepsilon
	\le
	T(A_t)
	\le
	D_\theta+\varepsilon.
	\]
	An upward crossing of the threshold changes \(D_t\) by at most
	\(2\eta\widetilde c\,s\), while a downward crossing changes it by at
	most \(2\eta\widetilde c\,c\). Therefore
	\begin{equation}
		D_\theta-2\eta\widetilde c\,c
		\le
		\liminf_{t\to\infty}D_t
		\le
		\limsup_{t\to\infty}D_t
		\le
		D_\theta+2\eta\widetilde c\,s.
		\label{eq:app-signsgd-switching-band}
	\end{equation}
	Thus the discrete full-batch gap need not converge to a single limit;
	rather, it remains in an \(O(\eta)\) switching band around the
	embedding-dependent location \(D_\theta\). In the small-step limit this
	band collapses to \(D_\theta\).
	
	We next compute the asymptotic growth rates. Rearranging
	\eqref{eq:signsgd-gap} gives
	\[
	\frac{1}{t}\sum_{k=0}^{t-1}\xi_k =
	\frac{c-s}{c+s}
	+
	\frac{D_t-D_0}
	{\eta\widetilde c(c+s)t}.
	\]
	Since \(D_t\) is bounded,
	\begin{equation}
		\frac{1}{t}\sum_{k=0}^{t-1}\xi_k
		\longrightarrow
		\frac{c-s}{c+s}.
		\label{eq:app-signsgd-average-xi}
	\end{equation}
	Averaging equations~\ref{eq:signsgd-margin-1} and~\ref{eq:signsgd-margin-2} then yields
	\begin{equation}
		\frac{\Delta_1(t)}{\eta t}
		\longrightarrow
		\frac{\widetilde c}{c+s},
		\qquad
		\frac{\Delta_2(t)}{\eta t}
		\longrightarrow
		\frac{\widetilde c}{c+s}.
		\label{eq:app-signsgd-fullbatch-rates}
	\end{equation}
	Hence full-batch SignSGD balances the two tasks at leading order, but
	both the switching location and the common growth rate depend on the
	embedding rotation.
	
	\subsection{Sparse-support negative transfer}
	
	Consider the event
	\[
	\mathcal A_t
	:=
	\{q_2(t)=0\},
	\]
	meaning that the batch contains no examples from task~2. Since every
	sample is then task~1,
	\[
	\mathbb P(\mathcal A_t)=p_1^B.
	\]
	On \(\mathcal A_t\),
	\[
	\xi_t=1,
	\]
	and hence
	\begin{equation}
		\Delta_2(t+1)-\Delta_2(t) =
		\eta\widetilde c(c-s).
		\label{eq:signsgd-unseen}
	\end{equation}
	Therefore, whenever
	\[
	\theta>\frac{\pi}{4},
	\]
	a batch containing no task~2 examples decreases the task~2 margin.
	
	A stronger statement follows by bounding all remaining updates. For
	every batch,
	\[
	\Delta_2(t+1)-\Delta_2(t)
	\le
	\eta\widetilde c(c+s).
	\]
	Consequently,
	\[
	\Delta_2(t)
	\le
	\Delta_2(0)
	+
	\eta\widetilde c
	\left[
	(c-s)\sum_{k<t}\mathbf 1_{\mathcal A_k}
	+
	(c+s)\sum_{k<t}\mathbf 1_{\mathcal A_k^c}
	\right].
	\]
	By the strong law of large numbers,
	\[
	\frac{1}{t}
	\sum_{k<t}\mathbf 1_{\mathcal A_k}
	\longrightarrow
	p_1^B
	\qquad
	\text{a.s.}
	\]
	Therefore
	\begin{equation}
		\limsup_{t\to\infty}
		\frac{\Delta_2(t)}
		{\eta\widetilde c\,t}
		\le
		c+s\left(1-2p_1^B\right)
		\qquad
		\text{a.s.}
		\label{eq:signsgd-negative}
	\end{equation}
	If
	\begin{equation}
		p_1^B >
		\frac{1+\cot\theta}{2},
		\label{eq:app-signsgd-linear-degradation-condition}
	\end{equation}
	then the right-hand side of \eqref{eq:signsgd-negative} is strictly
	negative, and therefore
	\[
	\Delta_2(t)\to-\infty
	\]
	linearly almost surely. Condition~(\ref{eq:app-signsgd-linear-degradation-condition}) can only
	hold when \(\theta>\pi/4\). In this regime,
	\eqref{eq:signsgd-margin-1} also gives
	\[
	\Delta_1(t+1)-\Delta_1(t)
	\ge
	\eta\widetilde c(s-c)>0.
	\]
	Thus insufficiently small relative task-frequency fluctuations can be converted by the embedding
	rotation into persistent negative transfer from the frequent task to
	the rare task.
	
	For Muon, orthogonal equivariance allows the dynamics to be analyzed
	directly in task coordinates. When \(q_2(t)=0\), the exact two-task
	Muon recursion in equation~\ref{eq:app-two-task-margins} gives
	\[
	\Delta_2(t+1)-\Delta_2(t)=0.
	\]
	The rotation-induced negative-transfer mechanism above is therefore
	specific to the coordinate-sensitive elementwise sign operation.
	
	\section{Supplementary EMA Bias--Variance Calculation}
	\label{app:noise-lag}
	
	This optional scalar calculation makes the familiar EMA bias--variance trade-off explicit for an exogenous affine drifting signal. It is included only to interpret the tracking side of Proposition~\ref{prop:temporal-aggregation}; it is not a separate Muon convergence result or a method-level contribution. For reference, write
	\begin{equation}
		R(\beta;v,b)
		:=
		v\frac{1-\beta}{1+\beta}
		+
		b^2\frac{\beta^2}{(1-\beta)^2},
		\label{eq:noise-lag-risk}
	\end{equation}
	and denote the first-order optimality condition by
	\begin{equation}
		b^2\beta(1+\beta)^2=v(1-\beta)^3.
		\label{eq:noise-lag-optimality}
	\end{equation}
	
	\subsection{Affine-drift calculation}
	Let $y_t=a_t+\varepsilon_t$, where $a_t=a_0+bt$ and the noise variables are independent with mean zero and variance $v>0$. Define $m_t=(1-\beta)\sum_{r\ge0}\beta^r y_{t-r}$ for $0\le\beta<1$. The deterministic weighted sum converges absolutely, and the noise weights are square summable, so the infinite-past EMA is well defined in mean square. Its mean is
	\[
	\mathbb E m_t
	=(1-\beta)\sum_{r\ge0}\beta^r(a_t-br)
	=a_t-\frac{\beta}{1-\beta}b,
	\]
	while independence gives
	\[
	\operatorname{Var}(m_t)
	=v(1-\beta)^2\sum_{r\ge0}\beta^{2r}
	=v\frac{1-\beta}{1+\beta}.
	\]
	Adding squared bias and variance proves \eqref{eq:noise-lag-risk}. For $0<\beta<1$,
	\[
	\partial_\beta R
	=-\frac{2v}{(1+\beta)^2}
	+\frac{2b^2\beta}{(1-\beta)^3}.
	\]
	The function $\beta(1+\beta)^2/(1-\beta)^3$ increases strictly from zero to infinity, so for $b\neq0$ the derivative has exactly one root and changes sign from negative to positive. This gives \eqref{eq:noise-lag-optimality}. Writing $\epsilon=1-\beta^*$ gives
	\[
	\epsilon^3=(b^2/v)(1-\epsilon)(2-\epsilon)^2,
	\]
	which yields $1-\beta^*\sim(4b^2/v)^{1/3}$ as $b^2/v\to0$.
	
	\begin{corollary}[Strict advantage over a shared constant memory]
		\label{cor:noise-lag-shared}
		For finitely many directions, let $v_j>0$, $b_j\neq0$, and $w_j>0$, and let $\beta_j^*$ minimize $R(\beta;v_j,b_j)$. If two ratios $b_j^2/v_j$ differ, then
		\[
		\sum_jw_jR(\beta_j^*;v_j,b_j)
		<
		\min_{0\le\beta<1}\sum_jw_jR(\beta;v_j,b_j).
		\]
	\end{corollary}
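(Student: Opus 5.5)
The plan is to show that the map $\beta\mapsto\sum_j w_jR(\beta;v_j,b_j)$ is minimized at a single point, and that this point cannot be optimal for every direction at once. The comparison is direction by direction: for every $\beta\in[0,1)$ and every $j$ we have $R(\beta;v_j,b_j)\ge R(\beta_j^*;v_j,b_j)$ by definition of $\beta_j^*$. Summing with weights $w_j>0$ gives
\[
\sum_jw_jR(\beta;v_j,b_j)\ \ge\ \sum_jw_jR(\beta_j^*;v_j,b_j)
\]
for every $\beta$. The inequality is strict unless $\beta$ minimizes every $R(\cdot;v_j,b_j)$ simultaneously.

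First, I would check that each $R(\cdot;v_j,b_j)$ has a unique minimizer on $[0,1)$. This is the analysis just carried out after \eqref{eq:noise-lag-optimality}. The derivative is negative at $\beta=0$, since it equals $-2v<0$ there. It has exactly one root in $(0,1)$ and changes sign from negative to positive. Also $R\to\infty$ as $\beta\uparrow1$ because $b_j\ne0$. So $\beta_j^*$ is the unique root of \eqref{eq:noise-lag-optimality} and lies in $(0,1)$.

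The key step is injectivity. Rewrite \eqref{eq:noise-lag-optimality} as
\[
\frac{\beta(1+\beta)^2}{(1-\beta)^3}=\frac{v_j}{b_j^2}.
\]
The left side is strictly increasing from $0$ to $\infty$, so $\beta_j^*$ is a strictly monotone function of $b_j^2/v_j$. Hence if two ratios $b_j^2/v_j\ne b_k^2/v_k$, then $\beta_j^*\ne\beta_k^*$.

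Next I would deal with the right-hand minimum. The weighted sum is continuous on $[0,1)$ and tends to $+\infty$ as $\beta\uparrow1$, so the minimum is attained at some $\bar\beta$. Since $\beta_j^*\ne\beta_k^*$, $\bar\beta$ differs from at least one of them, say $\bar\beta\ne\beta_j^*$. Uniqueness of the minimizer then gives $R(\bar\beta;v_j,b_j)>R(\beta_j^*;v_j,b_j)$. Adding this strict inequality with weight $w_j>0$ to the weak inequalities for the other directions gives the strict inequality at $\bar\beta$, and hence for the minimum.

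The only delicate point is attainment of the shared minimum. Without it, strictness could be lost through an infimum, and the blow-up of $R$ at $\beta\to1$ settles this.
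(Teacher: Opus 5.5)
Your proposal is correct and follows essentially the same route as the paper's proof. Strict monotonicity of $\beta(1+\beta)^2/(1-\beta)^3$ makes the unique minimizers $\beta_j^*$ distinct, continuity together with blow-up as $\beta\uparrow1$ ensures the shared minimum is attained, and the strict pointwise inequality at that minimizer gives the claim (only existence of a shared minimizer is used, so ``minimized at a single point'' in your opening should not be read as a uniqueness claim).
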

	\begin{proof}
		The strict monotonicity in the optimality equation gives distinct unique minimizers whenever the ratios $b_j^2/v_j$ differ. Every shared coefficient is therefore strictly suboptimal for at least one positive-weight direction. The shared objective is continuous and diverges as $\beta\uparrow1$, so it attains its minimum on a compact subset of $[0,1)$. Evaluating the strict pointwise inequality at that minimizer proves the claim.
	\end{proof}
	The corollary compares constant coefficients in an exogenous affine-signal estimation problem. It is not a lower bound against every adaptive scalar-momentum method and does not establish end-to-end acceleration for Muon.
	
	\section{Numerical Experiments and Reproducibility Details}
	The synthetic experiments in this appendix diagnose the displayed theoretical mechanisms using the exact partial polar factor. The preliminary language-model pretraining comparison is documented separately in Appendix~\ref{app:llm-trajectory}.
	
	\label{app:numerical-validation}
	
	\subsection{Common protocol and map from main-text figures}\label{app:num-common}
	
	\paragraph{Where each main-text figure is documented.}
	\begin{center}
		\small
		\begin{tabular}{p{0.14\linewidth} p{0.28\linewidth} p{0.48\linewidth}}
			\hline
			Main-text figure & Appendix protocol & Role in the paper \\
			\hline
			Fig.~\ref{fig:isotropic-main} & App.~\ref{app:num-isotropic} & Direct check of Proposition~\ref{prop:gaussian-isotropization}. \\
			Fig.~\ref{fig:coverage-main}(a) & App.~\ref{app:num-two-task} & Direct check of Theorem~\ref{thm:two-task-coverage}. \\
			Fig.~\ref{fig:coverage-main}(b) & App.~\ref{app:num-common}, \ref{app:num-hierarchical} & Trajectory-level regime summary plus a frozen-state check of the perturbation rate underlying the well-sampled regime in Proposition~\ref{prop:hierarchical-regimes}. \\
			Fig.~\ref{fig:label-regimes-main} & App.~\ref{app:num-label-drift}, \ref{app:num-label-local} & Full-matrix transfer/stress tests kept separate from the projected counting law and covariance closure used in Proposition~\ref{prop:label-regimes}. \\
			Fig.~\ref{fig:signsgd-main} & App.~\ref{app:num-signsgd} & Paired long-time trajectory check of the sparse-support negative-transfer mechanism in Proposition~\ref{prop:signsgd-contrast}. \\
			Fig.~\ref{fig:momentum-main} & App.~\ref{app:num-two-task} & Direct check of the variance-equivalent sampling and tracking scales in Proposition~\ref{prop:temporal-aggregation}. \\
			Fig.~\ref{fig:fast-slow-pretraining} & App.~\ref{app:llm-trajectory} & Single-run validation-loss comparison in language-model pretraining. \\
			\hline
		\end{tabular}
	\end{center}
	
	\paragraph{Two replication notions.}
	Unless specified otherwise, synthetic trajectory experiments use $R_{\rm traj}=64$ independent mini-batch streams per configuration, and frozen-state checks use $N_{\rm MC}=8192$ independent Monte Carlo replicates. Experiment-specific sample counts are given below, including larger samples for Gaussian coefficient estimates. Trajectory runs measure variability of an entire optimization path, whereas frozen-state replicates estimate a conditional expectation or coupling norm. For antithetic estimators, uncertainty should be computed across independent pairs, because the two evaluations within each pair are dependent. Unless stated otherwise, curves show means, trajectory ribbons show interquartile ranges, and confidence intervals are $1.96$ standard errors over independent replicates. Log--log exponents are ordinary least-squares fits to the displayed grid points. The pretraining comparison contains one run per method and has no repeated-run uncertainty estimate.
	
	\paragraph{Exact Muon map and numerical invariances.}
	Every synthetic Muon simulation uses the exact SVD partial polar factor rather than an approximate Newton--Schulz implementation. If $A=U\operatorname{diag}(\sigma_1,\ldots)V^\top$ and $\sigma_{\max}>0$, singular values are treated as numerical zeros only relative to the matrix scale, using a cutoff proportional to $\sigma_{\max}$; the zero matrix maps to zero. This preserves positive-scale invariance. Deterministic regression tests verify positive-scale invariance, left--right orthogonal equivariance, the rank-one polar formula, and agreement between the full $2\times2$ matrix update and the reduced two-task recursion, all to errors below $4\times10^{-15}$.
	
	\paragraph{Hierarchical trajectory experiment in Figure~\ref{fig:coverage-main}.}
	We simulate the deterministic-label associative-memory recursion directly in task coordinates,
	\[
	W_{t+1}=W_t+\eta\,\operatorname{msgn}\!\big((I-S(W_t))Q_t\big),
	\qquad
	Bq_t\sim\operatorname{Multinomial}(B,p),
	\qquad W_0=0.
	\]
	There are $M=4$ frequency blocks of size $C=6$ ($K=24$ tasks) with block masses
	$(0.55,0.25,0.13,0.07)$, so each task in block $\mu$ has probability $\widetilde p_\mu/C$. We use $B\in\{8,64,256,1024\}$, step size $\eta=0.02$, and $T=800$ updates. For block $\mu$ with index set $\mathcal I_\mu$, the plotted margin is
	\[
	m_\mu(t)
	=\frac1C\sum_{j\in\mathcal I_\mu}
	\left(W_{jj}(t)-\frac{1}{C-1}
	\sum_{\substack{i\in\mathcal I_\mu\\ i\neq j}}W_{ij}(t)\right).
	\]
	The late growth rate is the OLS slope of $m_\mu$ against continuous time $\tau=\eta t$ over the last $40\%$ of the recorded trajectory. The main text summarizes the resulting sampling collapse in panel~(b) of Figure~\ref{fig:coverage-main}; the raw small- and large-expected-count trajectories are retained below for reference.
	
	\begin{figure}[t]
		\centering
		\includegraphics[width=0.96\linewidth]{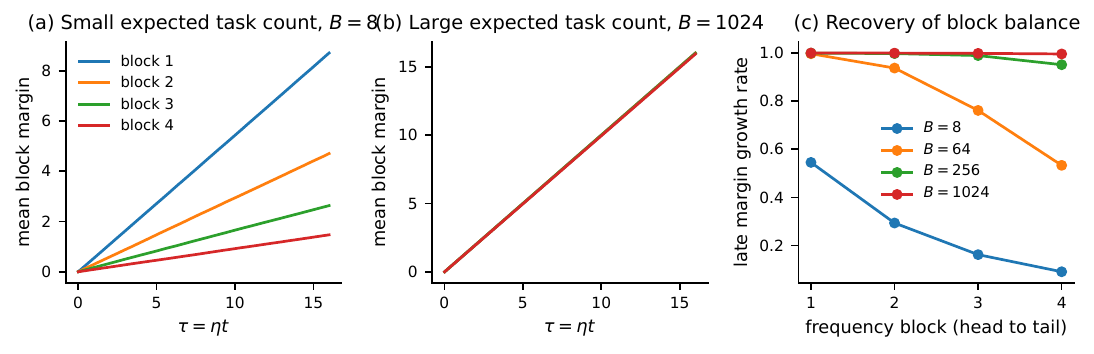}
		\caption{\textbf{Higher-rank trajectory view underlying Figure~\ref{fig:coverage-main}(b).} At small expected task count ($B=8$), block margins grow at visibly different rates; at large expected task count ($B=1024$), all four blocks approach the unit full-batch rate. The main text uses the rate-versus-$Bp_j$ collapse because it exposes the controlling expected-count variable more directly.}
		\label{fig:num-hierarchical-trajectories-app}
	\end{figure}
	
	\begin{table}[ht]
		\centering
		\small
		\begin{tabular}{p{0.53\linewidth}cc}
			\hline
			Diagnostic & measured & prediction \\
			\hline
			Isotropic anisotropic-signal median matrix error & $3.05\%$ & $\to0$ as $\varepsilon\to0$ \\
			Two-task equilibrium-gap bias exponent & $-1.011$ & $-1$ \\
			Two-task gap-fluctuation exponent & $-0.531$ & $-1/2$ \\
			Hierarchical spatial-coupling exponent & $-0.512$ & $-1/2$ \\
			Hierarchical momentum-coupling exponent & $-0.509$ & $-1/2$ \\
			Projected large-margin drift RMSE & $0.00618$ & $0$ \\
			Direct isotropic-closure frequency exponent & $0.501$ & $1/2$ \\
			Direct isotropic-closure mean coefficient error & $0.61\%$ & $\to0$ in closure limit \\
			Full-categorical stress-test exponent & $0.494$ & $1/2$ under auxiliary closure \\
			\hline
		\end{tabular}
		\caption{\textbf{Quantitative summary of the controlled numerical checks.} Exact and reduced-model validations are kept separate from transfer tests. The isotropic and closure errors are finite-signal/finite-Monte-Carlo quantities whose limiting prediction is zero; the full-categorical label-noise exponent is a stress test rather than a direct validation of the isotropic closure.}
		\label{tab:core-checks-app}
	\end{table}
	
	\subsection{Isotropic Gaussian mean response with an anisotropic signal}\label{app:num-isotropic}
	
	The first-order theorem is directional, so we use a deliberately non-isotropic signal rather than a multiple of the identity. For each $d\in\{4,8\}$, define
	\[
	H_d=\frac{\operatorname{diag}(s_1,\ldots,s_d)}{\|\operatorname{diag}(s_1,\ldots,s_d)\|_F},
	\qquad
	(s_1,\ldots,s_d)=\operatorname{geomspace}(1,0.08;d).
	\]
	For $\varepsilon\in\{0.04,0.07,0.12,0.20\}$ we estimate
	\[
	F_d(\varepsilon)=\mathbb E\,\operatorname{msgn}(\varepsilon H_d+\Xi)
	\]
	using $32{,}768$ antithetic Gaussian pairs. The coefficient $c_d=d^{-2}\mathbb E\|\Xi\|_*$ is estimated independently from $65{,}536$ Gaussian matrices, so agreement does not arise from sharing Monte Carlo noise between response and theory. We report both the scalar projection $\langle \widehat F_d(\varepsilon),H_d\rangle/\varepsilon$ and the full matrix residual
	\[
	\frac{\|\widehat F_d(\varepsilon)-c_d\varepsilon H_d\|_F}{\|c_d\varepsilon H_d\|_F}.
	\]
	Across the eight settings, the median full-matrix relative error is $3.05\%$. For the directional-gain panel in Figure~\ref{fig:isotropic-main}(a), we additionally repeat the $d=8$ experiment with $131{,}072$ antithetic pairs and an independent $131{,}072$-sample estimate of $c_d$, then report the gain along each singular direction divided by $c_d$.
	
	The corresponding response curves are shown in Figure~\ref{fig:isotropic-main} in the main text.
	
	\subsection{Two-task finite-batch transition, asymptotics, and momentum}\label{app:num-two-task}
	
	\paragraph{Direct finite-batch transition.}
	To test Theorem~\ref{thm:two-task-coverage} itself, we use $p_{\min}\in\{0.02,0.05,0.10\}$ and choose batch sizes spanning both sides of $B_{\rm bal}$. For each configuration we estimate the late-time growth rate of the gap $D(t)=\Delta_1(t)-\Delta_2(t)$ across 64 trajectories and plot it against $B/B_{\rm bal}$. In the small-batch phase,
	\[
	\frac{D(t)}{\eta t}\longrightarrow \sqrt2\bigl(2p_1^B-1\bigr),
	\]
	whereas the balanced phase has zero asymptotic gap slope. The three frequency settings collapse around $B/B_{\rm bal}=1$ and agree with the piecewise theoretical curve.
	
	The direct finite-batch transition plot is panel~(a) of Figure~\ref{fig:coverage-main} in the main text.
	
	\paragraph{Large-batch asymptotics.}
	We use the exact two-task recursion of Appendix~\ref{app:two-task-proof} with $p_2=0.05$, $p_1=0.95$, $\eta=0.003$, $T=30{,}000$, and
	\[
	B\in\{50,80,120,200,320,500,800\}.
	\]
	For each of the 64 trajectories we discard the first $60\%$ of updates. Over the remaining segment we estimate the time-averaged gap $\Delta_1-\Delta_2$ and its stationary standard deviation (sampling every five iterations). The equilibrium target is $\log(p_1/p_2)$. Regressing the mean equilibrium bias and the stationary standard deviation against $B$ gives exponents $-1.011$ and $-0.531$, respectively, close to the predicted $-1$ and $-1/2$.
	
	\begin{figure}[t]
		\centering
		\includegraphics[width=0.96\linewidth]{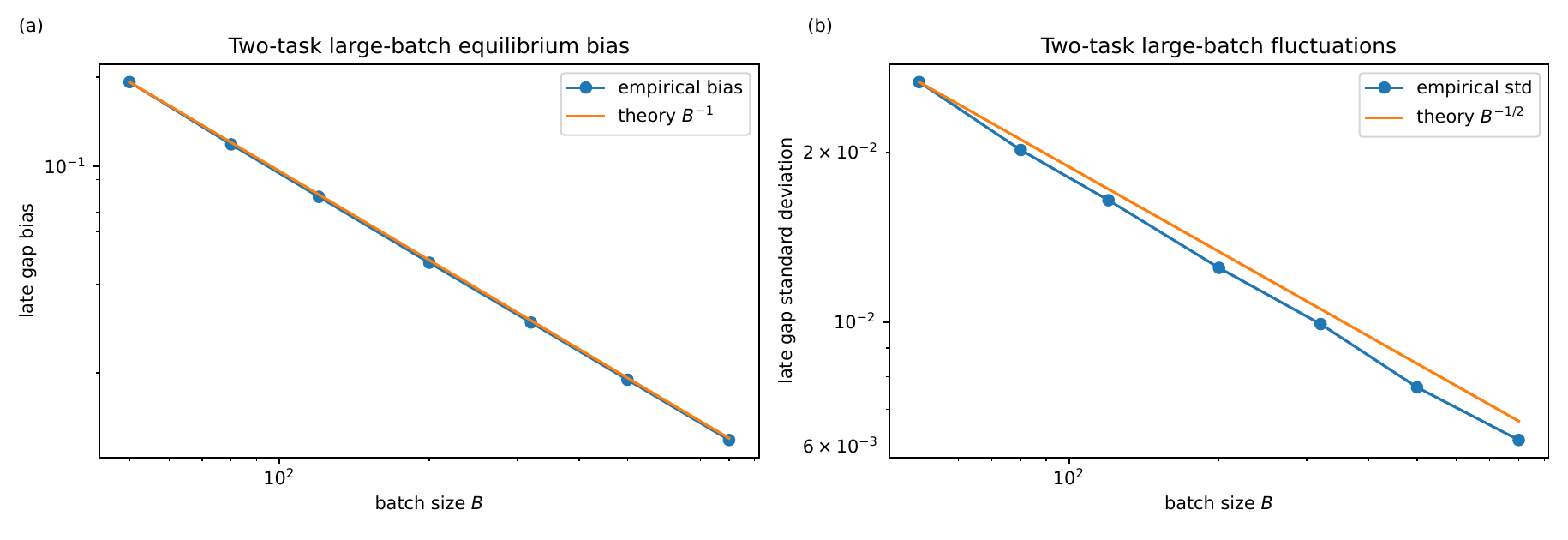}
		\caption{\textbf{Two-task large-batch asymptotics.} Both panels use $p_2=0.05$, $\eta=0.003$, and 64 trajectories. (a) Absolute late-time bias of $\Delta_1-\Delta_2$ from the full-batch target $\log(p_1/p_2)$ versus batch size $B$ on log--log axes; the fitted exponent is $-1.011$, compared with the $B^{-1}$ prediction. (b) Late-time stationary standard deviation of the gap versus $B$; the fitted exponent is $-0.531$, compared with the $B^{-1/2}$ diffusion prediction.}
		\label{fig:num-two-task-refined-app}
	\end{figure}
	
	\paragraph{Variance-equivalent sampling under momentum.}
	For the momentum-collapse check we keep $p_2=0.05$, use $\eta=0.002$ and $T=30{,}000$, and target
	$B_{\rm eff}\in\{40,80,160,320\}$. For every $B\in\{2,4,8,16,32\}$ with $B\le B_{\rm eff}$, we choose
	\[
	\beta=\frac{B_{\rm eff}/B-1}{B_{\rm eff}/B+1},
	\]
	so that $B(1+\beta)/(1-\beta)=B_{\rm eff}$. The same 60\% burn-in convention is used. For the tracking check we instead fix $B=4$ and $\beta=0.95$, sweep
	\[
	\eta\in\{5\!\times\!10^{-4},0.002,0.01,0.03,0.05,0.10,0.20\},
	\]
	and choose
	\[
	T=\operatorname{round}(50/\eta),
	\]
	so that the continuous-time horizon satisfies $\tau_{\max}=\eta T\approx50$ for every learning rate. We again discard the first 60\% of updates and compute the mean gap over the retained late-time window; the plotted mean-gap error is its absolute deviation from the full-batch target $\log(p_1/p_2)$. The measured error grows once $\eta/(1-\beta)$ is no longer small.
	
	\noindent The corresponding collapse/tracking summary is shown directly in Figure~\ref{fig:momentum-main} of the main text.
	
	\subsection{Hierarchical well-sampled perturbation scaling}\label{app:num-hierarchical}
	
	These checks are deliberately \emph{frozen-state} Monte Carlo experiments rather than optimization trajectories. We set $M=3$, $C=8$, block masses $(0.5,0.3,0.2)$, and test the third block. To remove any unspecified initialization, the reference logit matrix is fixed deterministically as follows. If column $j$ belongs to block $\nu$ and row $i$ to block $\mu$, then
	\[
	W_{ij}=\begin{cases}
		2+0.1\nu, & i=j,\\
		0.3+0.05\nu, & \mu=\nu,~i\neq j,\\
		-0.2-0.05\mu+0.03\nu, & \mu\neq\nu,
	\end{cases}
	\]
	with zero-based block indices $\mu,\nu\in\{0,1,2\}$. We set $A=I-S(W)$, draw a multinomial frequency vector $q$, decompose it into its block-average and within-block parts $q=q^\parallel+q^\perp$, and estimate the derivative
	\[
	\frac{\operatorname{msgn}(A\operatorname{diag}(q^\parallel+hq^\perp))-
		\operatorname{msgn}(A\operatorname{diag}(q^\parallel-hq^\perp))}{2h}.
	\]
	Each point below averages $N_{\rm MC}$ independent draws of $q$.
	
	For cancellation we fix $B=640$ and use
	$h\in\{10^{-2},3\!\times\!10^{-3},10^{-3},3\!\times\!10^{-4},10^{-4}\}$.
	For spatial scaling we fix $h=10^{-4}$ and sweep
	$B\in\{80,160,320,640,1280,2560\}$.
	For momentum scaling we freeze the same state, set $B=16$, and sweep
	$\beta\in\{0.50,0.75,0.875,0.94,0.97\}$. For each $\beta$, the EMA frequency vector is initialized at $p$ and run for $\lceil 12/(1-\beta)\rceil$ mini-batches before measuring the same finite-difference derivative. The resulting spatial and momentum exponents are $-0.512$ and $-0.509$, respectively.
	
	\begin{figure}[t]
		\centering
		\includegraphics[width=0.98\linewidth]{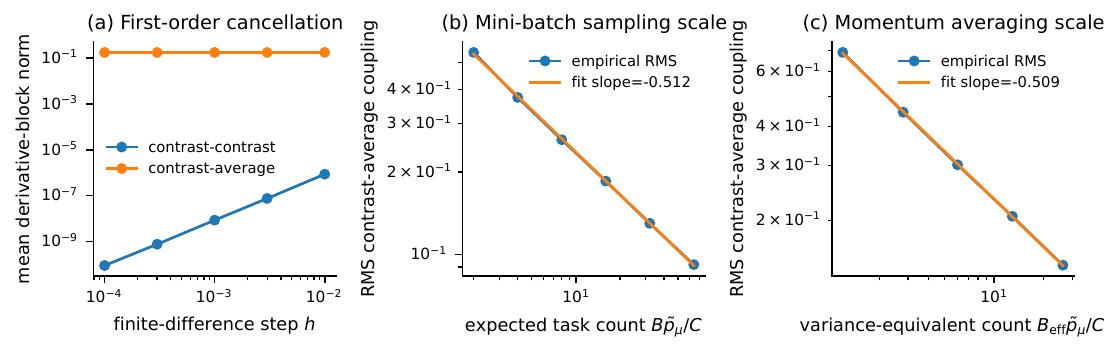}
		\caption{\textbf{Higher-rank perturbation checks at a frozen block-symmetric state.} (a) Mean finite-difference derivative norm versus step $h$: the contrast-to-contrast block decays to numerical zero while contrast-to-average coupling remains $O(1)$, verifying first-order cancellation. (b) RMS contrast-to-average coupling versus expected task count $B\widetilde p_\mu/C$; the fitted log--log slope is $-0.512$, matching the predicted $-1/2$. (c) The same coupling versus variance-equivalent task count $B_{\rm eff}\widetilde p_\mu/C$ under EMA momentum; the fitted slope is $-0.509$. Each point uses $N_{\rm MC}=8192$ draws.}
		\label{fig:num-hierarchical-app}
	\end{figure}
	
	\subsection{Label-noise large-margin drift: projected validation and full-matrix transfer}\label{app:num-label-drift}
	
	\paragraph{Projected large-margin formula.}
	The large-margin drift $\chi_{B,p}$ in~\eqref{eq:app-label-large-margin-response} is exact for the longitudinal projected counting model, not for the complete categorical matrix. We validate that reduced model directly before testing transfer to full matrix Muon. We use $K=8$, $p_j=1/8$, margin $12$, $\alpha\in\{0.05,0.10,0.20\}$, and expected corrupted-sample counts
	\[
	\lambda=Bp_j\rho_\alpha\in\{0.20,0.35,0.50,\log2,1.0,1.4,2.0\}
	\]
	(up to integer rounding of $B$). Each point uses $16{,}384$ samples. Direct projected Monte Carlo has RMSE $0.00618$ from $\chi_{B,p}$. The complete matrix update shows the same sign transition around $\lambda\simeq\log2$ but is not expected to equal the projected formula; its RMSE relative to the projected prediction is $0.0233$.
	
	The full-matrix transfer summary is shown in panel~(a) of Figure~\ref{fig:label-regimes-main}; the direct projected-model validation is isolated below so that reduced-model validation and full-matrix transfer are not conflated.
	
	\begin{figure}[t]
		\centering
		\includegraphics[width=0.68\linewidth]{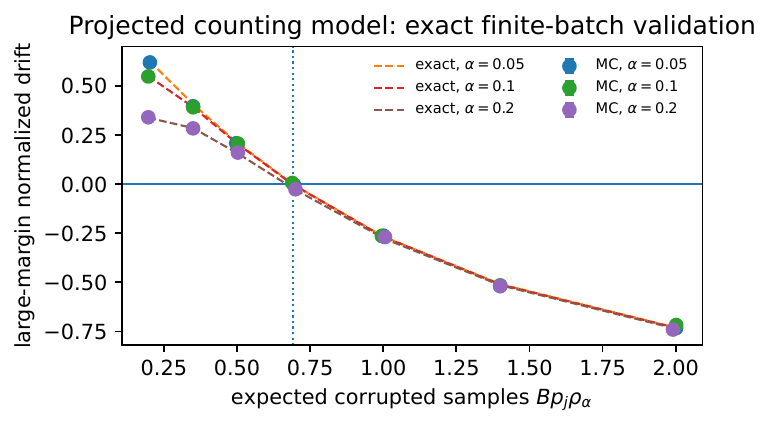}
		\caption{\textbf{Direct validation of the projected large-margin counting law.} Points are Monte Carlo estimates in the longitudinal projected model and dashed curves are the exact finite-$B$ formula. The full-matrix transfer test is shown in Figure~\ref{fig:label-regimes-main}(a), where deviations from the projected formula are expected because transverse categorical modes are retained.}
		\label{fig:num-label-projected-validation-app}
	\end{figure}
	
	\subsection{Label-noise local response: direct closure test and categorical stress test}\label{app:num-label-local}
	
	\paragraph{Direct test of the stated isotropic covariance closure.}
	We simulate exactly the Gaussian proxy used to derive~\eqref{eq:app-label-explicit-rate-finite-K}. We take $K=6$, $\alpha=0.2$, $B=5000$, local displacement $h=0.01$, and frequency scales
	\[
	p\in\{0.02,0.04,0.08,0.16,0.32\}.
	\]
	The projected signal and isotropic Gaussian covariance follow Appendix~\ref{app:label-dynamics}, and each point uses $32{,}768$ antithetic samples. Here $p$ is the frequency scale of the local proxy, not a common probability assigned to all six tasks in a normalized categorical distribution. The fitted frequency exponent is $0.502$ and the mean relative error of the explicit finite-$K$ coefficient is $0.61\%$.
	
	The direct Gaussian-closure check is retained below; panel~(b) of Figure~\ref{fig:label-regimes-main} instead shows the stronger full-categorical frequency-scaling stress test.
	
	\begin{figure}[t]
		\centering
		\includegraphics[width=0.62\linewidth]{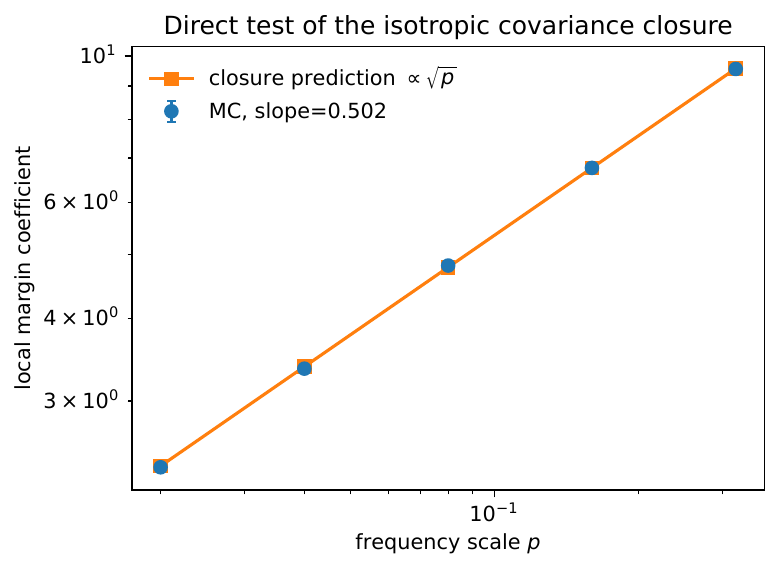}
		\caption{\textbf{Direct check of the moment-matched isotropic covariance closure.} The Monte Carlo coefficient follows the predicted $p^{1/2}$ scaling with fitted exponent $0.502$. This panel validates the stated Gaussian proxy itself; the main text uses the full-categorical simulation as the more demanding stress test.}
		\label{fig:num-label-closure-app}
	\end{figure}
	
	\paragraph{Full-categorical heterogeneous-frequency stress test.}
	We separately retain the stronger complete-categorical simulation. It uses $K=6$, $\alpha=0.2$, $B=5000$, frequencies $p=(0.35,0.25,0.16,0.11,0.08,0.05)$, and a local margin perturbation $\delta=0.03$. Each target task is evaluated with $N_{\rm MC}=8192$ complete mini-batches. The empirical Muon frequency exponent is $0.49397$, while the corresponding SGD exponent is $1$; the mean relative error against the auxiliary isotropic-closure coefficient is $3.25\%$. Because the exact categorical covariance is anisotropic, this agreement is reported as a stress test beyond the closure assumptions, not as theorem-level validation.
	
	\noindent The full-categorical frequency-scaling result is the quantity plotted in Figure~\ref{fig:label-regimes-main}(b).
	
	\subsection{Muon--SignSGD sparse-support contrast}\label{app:num-signsgd}
	
	Figure~\ref{fig:signsgd-main} uses the exact two-task recursions from Appendices~\ref{app:two-task-gradient} and~\ref{app:two-task-signsgd}. We set $p_1=0.95$, $p_2=0.05$, $B=1$, $\eta=0.01$, $T=3000$, and $\theta=1.15$ radians, and run $64$ independent paired mini-batch streams. The same task draw is fed to Muon and SignSGD at every step. For SignSGD, the positive answer-embedding scalar $\widetilde c$ in~\eqref{eq:app-signsgd-ctilde} is absorbed into the learning rate so that the experiment isolates the rotation-dependent coupling.
	
	For Muon, a $B=1$ batch updates only the sampled task, giving asymptotic rates $\sqrt2 p_1$ and $\sqrt2 p_2$ per unit optimization time. The rare-task margin therefore remains nonnegative and grows on its occurrence clock. For SignSGD, $\theta>\pi/4$ makes every task-1-only step decrease the task-2 margin, and the sufficient condition in Proposition~\ref{prop:signsgd-contrast} holds. We record both margins every 20 steps and plot the median and interquartile range across the 64 paired runs.
	
	The final recorded medians at $\tau=\eta T=30$ are approximately $40.28$ and $2.15$ for the frequent and rare Muon margins, versus $38.40$ and $-12.35$ for SignSGD. These trajectories are the two panels of Figure~\ref{fig:signsgd-main}.
	
	\subsection{Language-model pretraining comparison}
	\label{app:llm-trajectory}
	
	We compare standard Muon with the fast--slow construction of \citet{zhu2026multiscale} on a 0.12B-parameter dense language model trained for 30,000 steps. The remaining training settings are matched. The metric is validation loss, with one recorded trajectory per method. The export contains 140 checkpoints; the first two have no matching Muon value, leaving 138 paired checkpoints at 200-step intervals from step 2,600 to 30,000. We use these paired observations without interpolation or smoothing.
	
	The fast--slow run has lower validation loss at every paired checkpoint. Final losses are $3.009736$ for Muon and $2.988844$ for fast--slow momentum, a difference of $0.020892$. The final-ten-checkpoint means are $3.010568$ and $2.990669$, respectively. Figure~\ref{fig:fast-slow-pretraining} displays the 51 paired checkpoints from step 20,000 to 30,000. Checkpoints within a trajectory are not independent training replications; these observations do not establish variability across seeds or isolate the contribution of each algorithmic component.
	
\end{document}